%% file: Main.tex
\documentclass[final,12pt]{colt2018} % Include author names

\definecolor{pearYe}{HTML}{FFB733}
\definecolor{pearOne}{HTML}{2C3E50}
\definecolor{pearTwo}{HTML}{A9CF54}
\definecolor{pearTwoT}{HTML}{C2895B}
\definecolor{pearThree}{HTML}{E74C3C}
\colorlet{titleTh}{pearOne}
\colorlet{bull}{pearTwo}
\definecolor{pearcomp}{HTML}{B97E29}
\definecolor{pearFour}{HTML}{588F27}
\definecolor{pearFith}{HTML}{ECF0F1}
\definecolor{pearDark}{HTML}{2980B9}
\definecolor{pearDarker}{HTML}{1D2DEC}
\title[Best of both worlds: Stochastic \& adversarial
 best-arm identification]{\textcolor{pearOne}{Best of both worlds:} \\ \textcolor{pearDark}{Stochastic} 
 	\textcolor{pearOne}{\&} \textcolor{pearThree}{adversarial} 
 	 best-arm identification}
\usepackage{times}
\coltauthor{
	\!\!\Name{Yasin {Abbasi-Yadkori}} \Email{abbasiya@adobe.com}\\
	\addr Adobe Research, USA
	\AND
	\Name{Peter L. Bartlett} \Email{peter@berkeley.edu}\\
	\addr University of California, Berkeley, USA
		  \AND
 \Name{Victor Gabillon} \Email{victor.gabillon@qut.edu.au}\\
 \addr Queensland University of Technology - ACEMS, Australia
 \AND
 \Name{Alan Malek} \Email{amalek@mit.edu}\\
 \addr Massachusetts Institute of Technology, USA
 \AND
 \Name{Michal Valko} \Email{michal.valko@inria.fr}\\
 \addr SequeL team, INRIA Lille - Nord Europe, France
 }

\usepackage{mathtools}
\usepackage{hyperref}
\usepackage{bm}
\usepackage{xcolor}
\usepackage{wrapfig}
\usepackage{tikz}
\usepackage{algorithm}
\usepackage{algorithmic}
\usepackage{thm-restate}

\usepackage{colortbl}
\usepackage{caption} 
\usepackage{enumitem} 
\usepackage[disable]{todonotes}

\newcommand{\todov}[1]{}
\newcommand{\todovout}[1]{}
\newcommand{\todoa}[1]{}
\newcommand{\todoaout}[1]{}
\newcommand{\todomi}[1]{}
\newcommand{\todom}[1]{}

\newcounter{fpcounter}
\renewcommand{\thefpcounter}{\arabic{fpcounter}}

\newenvironment{fp}[2]{%
\refstepcounter{fpcounter}%
\label{#1}%
\noindent{\raisebox{.04cm}{\textcolor{bull}{$\blacktriangleright$}} Experiment~\thefpcounter: }%
}%
{}%

\input{def}

\newcommand{\argmax}{\mathrm{argmax}}
\newcommand{\argmin}{\mathrm{argmin}}

\newcounter{scratchcounter}
\begin{document}

\maketitle

\input{abstract}

\begin{keywords}
multi-armed bandits, best-arm identification, adversarial and stochastic rewards
\end{keywords}

\hypersetup{
	colorlinks,
	citecolor=pearDark,
	linkcolor=pearThree,
	urlcolor=pearDarker}

\input{intro}
\input{problemFormulation}
\input{upLowGenAd}
\input{bestOfBoth}
\input{lowerBoundBOB}
\input{upperBoundBOB}
\input{Extensions}
%\input{Discussion}

%\newpage
\subsection*{Acknowledgements}
%\vspace{-.15cm}
We gratefully acknowledge the support of the NSF through grant IIS-1619362 and of the Australian Research Council through an Australian Laureate Fellowship (FL110100281) and through the Australian Research Council Centre of Excellence for Mathematical and Statistical Frontiers (ACEMS).
The research presented was also supported by European CHIST-ERA project DELTA, French Ministry of
Higher Education and Research, Nord-Pas-de-Calais Regional Council,
Inria and Otto-von-Guericke-Universit\"at Magdeburg associated-team north-european project Allocate, and French National Research Agency projects ExTra-Learn (n.ANR-14-CE24-0010-01) and BoB (n.ANR-16-CE23-0003).
We would like to thank Iosif Pinelis for a useful discussion on Bernstein inequalities.

\bibliography{biblio}

\newpage
\appendix
\input{appendices}\end{document}

%% file: def.tex
\DeclareBoldMathCommand{\I}{I}
\DeclareBoldMathCommand{\e}{e}
\DeclareBoldMathCommand{\f}{f}
\DeclareBoldMathCommand{\g}{g}
\DeclareBoldMathCommand{\a}{a}
\DeclareBoldMathCommand{\b}{b}
\DeclareBoldMathCommand{\c}{c}
\DeclareBoldMathCommand{\d}{d}
\DeclareBoldMathCommand{\m}{m}
\DeclareBoldMathCommand{\p}{p}
\DeclareBoldMathCommand{\q}{q}
\DeclareBoldMathCommand{\v}{v}
\DeclareBoldMathCommand{\V}{V}
\DeclareBoldMathCommand{\x}{x}
\DeclareBoldMathCommand{\t}{t}
\DeclareBoldMathCommand{\X}{X}
\DeclareBoldMathCommand{\Y}{Y}
\DeclareBoldMathCommand{\z}{z}
\DeclareBoldMathCommand{\Z}{Z}
\DeclareBoldMathCommand{\M}{M}
\DeclareBoldMathCommand{\n}{n}
\DeclareBoldMathCommand{\ssigma}{\sigma}
\DeclareBoldMathCommand{\SSigma}{\Sigma}
\DeclareBoldMathCommand{\OOmega}{\Omega}
\DeclareBoldMathCommand{\y}{y}
\DeclareBoldMathCommand{\U}{U}
\DeclareBoldMathCommand{\w}{w}
\DeclareBoldMathCommand{\W}{W}
\DeclareBoldMathCommand{\L}{L}

\DeclareBoldMathCommand{\s}{s}
\DeclareBoldMathCommand{\S}{S}
\DeclareBoldMathCommand{\A}{A}
\DeclareBoldMathCommand{\B}{B}
\DeclareBoldMathCommand{\C}{C}
\DeclareBoldMathCommand{\D}{D}
\DeclareBoldMathCommand{\E}{\mathbb{E}}
\DeclareBoldMathCommand{\G}{G}
\DeclareBoldMathCommand{\H}{H}
\DeclareBoldMathCommand{\P}{\mathbb{P}}
\DeclareBoldMathCommand{\Q}{Q}
\DeclareBoldMathCommand{\R}{R}
\DeclareBoldMathCommand{\X}{X}
\DeclareBoldMathCommand{\mmu}{\mu}
\DeclareBoldMathCommand{\ones}{1}
\DeclareBoldMathCommand{\zeros}{0}

\newcommand{\cO}{\mathcal{O}}
\newcommand{\tcO}{\widetilde{\cO}}

\newcommand{\indicator}[1]{\mathbf 1\{#1\}}

\newcommand{\gap}{\Delta}

\newcommand{\pulledArm}{I}
\newcommand{\nArms}{K}
\newcommand{\timeHorizon}{n}

\newcommand{\cumulGain}{G}
\newcommand{\setArms}{[\nArms]}
\newcommand{\mean}{\mu}
\newcommand{\gainVector}{\g}
\newcommand{\estGainVector}{\tilde\gainVector}
\newcommand{\estCumulGainVector}{\tilde\cumulGain}
\newcommand{\estBiasCumulGain}{\hat\cumulGain}
\newcommand{\recomArm}{J}
\newcommand{\bestArm}{k^\star}
\newcommand{\bestArmAd}{\bestArm_{\textsc{ADV}}}
\newcommand{\bestArmSto}{\bestArm_{\textsc{STO}}}
\newcommand{\learnerDist}{\p}
\newcommand{\pullsNumber}{T}
\newcommand{\advPro}{\textsc{ADV}}
\newcommand{\stoPro}{\textsc{STO}}
\newcommand{\basePro}{\textsc{BASE}}
\newcommand{\refarm}{\bar k}

\newcommand{\dividefac}{a}

\newcommand{\setArmsmo}{[2:\nArms]}
\newcommand{\earlyPhase}{L}
\newcommand{\SR}{{\normalfont \texttt{SR}}}

\newcommand{\SH}{{\normalfont \texttt{SH}}}

\newcommand{\ProbabilityONE}{\normalfont \texttt{\textcolor[rgb]{0.5,0.2,0}{ProbabilityONE}}}
\newcommand{\RULE}{{\normalfont  \texttt{Rule}}}
\newcommand{\Unif}{{\normalfont \textsc{UNIF}}}
\newcommand{\Both}{{\normalfont \textsc{{BOB}}}}
\newcommand{\Pone}{{\normalfont \texttt{\textcolor[rgb]{0.5,0.2,0}{P1}}}}
\newcommand{\complexity}{H}
\newcommand{\complexitySR}{\complexity_{\SR}}
\newcommand{\complexityUnif}{\complexity_{\Unif}}
\newcommand{\complexityBoth}{\complexity_{\Both}}
\newcommand{\complexityProOne}{\complexity_{\Pone}}
\newcommand{\Exp}{\E}
\newcommand{\Pro}{\P}

\newcommand{\ProErr}{e}
\newcommand{\eventc}{\xi}

\newcommand{\eventbob}{\xi}
\newcommand{\expectCumul}{M}

\newcommand{\sumlemma}{\bar G}
\newcommand{\pullearlyPhase}{\bar T}

\newcommand{\TODO}[1]{}

\newcommand{\CommaBin}{\mathbin{\raisebox{0.5ex}{,}}}

\renewcommand{\epsilon}{\varepsilon}
\renewcommand{\hat}{\widehat}
\renewcommand{\tilde}{\widetilde}
\renewcommand{\bar}{\overline}

\newcommand{\var}{\sigma^2}
\newcommand{\range}{b}

\newcommand{\lowBrv}{\ell}
\newcommand{\upBrv}{u}

\newcommand{\phaseTime}{n}
\newcommand{\propTime}{a}
\newcommand{\propTimeVec}{\a}
\newcommand{\propTimeVecSpa}{\A}

\newcommand{\Integer}{\mathbb{N}}

\newcommand{\Gaps}{\boldsymbol{\Delta}}

%% file: abstract.tex
% !TEX root = Main.tex
%
\begin{abstract}%   <- trailing '%' for backward compatibility of .sty file
We study bandit best-arm identification 
 with arbitrary and potentially adversarial rewards. 
    A simple random uniform
   learner obtains the optimal rate of error in the 
    adversarial scenario. However, this 
    type of strategy is suboptimal when the rewards are sampled 
    stochastically. Therefore, we ask:
    \emph{Can we design a learner that
   performs optimally in both the stochastic  
   and adversarial problems while not being aware of the nature of the rewards?}
        First, we show that designing such a learner is impossible in general. 
        In particular, to be robust to adversarial rewards, we can 
        only guarantee optimal rates of
         error on a subset of the stochastic problems. We give a lower bound 
         that characterizes the optimal rate in stochastic problems if the 
         strategy is constrained to be robust to adversarial rewards. 
Finally, we design a simple parameter-free algorithm and show that its 
probability of error matches (up to log factors) the lower bound in 
stochastic problems, and it is also robust to adversarial ones.
\end{abstract}%

%% file: intro.tex
% !TEX root = Main.tex
\section{Introduction}
In best-arm identification~\citep{Maron93HR,Bubeck09PE}, 
the \emph{learner} tries to identify the \emph{arm} (option,
decision) with the highest (expected) average \emph{reward} among $\nArms$
given arms.  At each round $t$ of the \emph{game} (the interaction
of the learner with its environment), each arm $k$ is assigned a
reward $\gainVector_{k,t}$. On this very same round $t$, a learner
chooses an arm $\pulledArm_t$ and \emph{only} observes the reward of
that arm, $\gainVector_{\pulledArm_t,t}$, while the rest of the vector
$\gainVector_{t}$ is hidden from the learner.

Typically, we assume stochastic rewards: Each arm $k$ is associated with
a distribution $\nu_k$ and, for all $k$ and $t$, the
$\gainVector_{k,t}$ is sampled i.i.d.\,from $\nu_k$. In this paper, we
aim for a robust solution for this setting and we allow the possibility
that the rewards are non-stochastic. The rewards could have been chosen even by an oblivious \emph{adversary}: The
rewards are fixed before the start of the game but they are not necessarily
drawn i.i.d.\,from a distribution. We focus on \emph{fixed-budget}, where a
total number of arm pulls $\timeHorizon$ is fixed and the learner
wishes to identify, as accurately as possible, the arm that attains the
highest cumulative reward. However, the results extend to a fixed-confidence 
case as we discuss in Section~\ref{s:exte}, together with \textit{adaptive} 
adversaries and other cases. % are also discussed therein.% Section~\ref{s:exte}.

% generating the rewards, it is difficult to define the
% guarantees are usually with respect to the randomness of the algorithm
% and the rewards, but the later do not have a probability measure
% defined on them.

Given $\gainVector$ and round $\timeHorizon$, we define the
cumulative gain of arm $k$ as
$\cumulGain_k=\sum_{t=1}^{\timeHorizon}\gainVector_{k,t}$.  Same
as for adversarial bandits for cumulative-regret~\citep{Auer02NM}, the best arm in hindsight is defined as
$\bestArm_{\gainVector}= \argmax_{k\in\setArms}
\cumulGain_{k}$.\footnote{An alternative definition of best-arm identification
	could be predicting
	$\argmax_{k\in\setArms}\gainVector_{k,\timeHorizon+1}$. However, this
	is impossible in the adversarial case where
	$\gainVector_{k,\timeHorizon+1}$ can be chosen arbitrarily without
	any dependence on $\gainVector_{k,[\timeHorizon]}$.}
%if not, maybe we could spend couple sentences arguing, why is this a good definition. 
%Or at least to the motivation examples below.
%}
% Note that in 
%the stochastic case the best arm is the arm with the highest mean 
%$\bestArm_{\gainVector}= \argmax_{k\in\setArms} \mean_{k}$.
In a similar way, we define the gaps in hindsight with respect to~$\gainVector$ between two arms $k$ and 
$j$ as $\gap^\gainVector_{k,j}\triangleq\frac{1}{\timeHorizon}
\sum_{t=1}^{\timeHorizon}(\gainVector_{k,t}-\gainVector_{j,t})$, 
giving a good proxy for the difficulty of discriminating between
these two arms even in an adversarial environment. 
We design a learner and show that the probability\footnote{Note that in our setup, 
	given~$\gainVector$, the randomness comes solely from the potential 
	internal randomization of the learner.} of error at round $\timeHorizon$ \emph{given} $\gainVector$ 
	against 
any fixed adversarial reward design~$\gainVector$
	is 
bounded by a measure of complexity depending on the 
gaps in hindsight with respect to~$\gainVector$ and~$\timeHorizon$. 
%We consider primarily the fixed-budget, i.e., the number of rounds 
%of the game is fixed in advance
% to $\timeHorizon$ and known both by the adversary and the 
%learner. 

Next, we discuss the motivations for studying non-stochastic best-arm identification. 
First, learning in the presence of adversarial data implies robustness. In a real-world best-arm 
identification, such as clinical trials or online ad recommendation, 
the assumption of i.i.d.\,data may not be valid. For instance, there could be a 
correlation between subsequent pulls of an arm. It is also possible that an  
adversary is trying to obscure the correct results: For example, the adversary might use a 
botnet to make the learner sell more ads.
As discussed 
by~\citet[Section~3]{Bubeck12RA},
a deterministic learner or a learner that eliminates arms with low observed cumulative reward in an early stage of the game
could be easily fooled by an 
adversary feeding it uninformative rewards in each of 
its deterministic pulls or in early stages of the game. 
Therefore, an efficient learner needs to
employ internal randomization and pull each arm with a positive probability 
$\learnerDist_{k,t}\triangleq\Pro(\pulledArm_t=k)$.
%arm after some time because it is believed to be clearly 
%suboptimal, the adversary could then change the value of 
%the arm for the rest of the game and fooling again the 
%learner. Therefore the learner will have to pull every 
%arm with a minimal probability.

Best-arm identification with
completely adversarial rewards is so difficult that a very conservative approach is already near-optimal. In Section~\ref{s:ULGenAd} we show that by playing the arms \emph{uniformly at random}, the  
learner obtains the optimal gap-dependent rates of error 
against worst-case adversarial sequence.
In the stochastic case, however, picking arms uniformly at random is 
suboptimal. This reveals the \textbf{\textcolor{pearOne}{best of both worlds (BOB)}} 
question: \emph{Is there a learner that attains the 
	optimal error rates in both the stochastic and adversarial settings 
	without knowledge of the environment?}

%want to solve best-arm identification but in a way that is robust to 
%non-stochastic rewards, and such a BOB learner would allow this robustness 
%without any loss of correctness guarantees. There is also hope, as 
%near optimal BOB results exist in the cumulative regret setting~\citep{Bubeck12BB,Auer16AA}.

% sentence or two\\ Victor : not sure, when I talked to people about it, they thought first that it was an easy problem and we were thinking that too for a long time!}

To study the above question, we face a number of \textbf{challenges}: 
How to efficiently mix the needs for randomization and exploration with
the urge to pull the most promising arms? 
Can the learner detect
if the rewards are stochastic or adversarial? What is an appropriate estimate 
of $\cumulGain_{k}$? % is to be used for the arm $k$? 
In the stochastic case,
$\estBiasCumulGain_{k} \triangleq
\frac{\timeHorizon\sum_{t=1}^{\timeHorizon}
	\indicator{\pulledArm_{t}=k}\gainVector_{k,t}}{\sum_{t'=1}^{\timeHorizon}\indicator{\pulledArm_{t'}=k}}
$
is commonly used, but if not used carefully, it can be easily biased 
by an adversary. In the adversarial case,
$\estCumulGainVector_{k} \triangleq \sum_{t=1}^{\timeHorizon}
\frac{\gainVector_{k,t}}
{\learnerDist_{k,t}} \indicator{\pulledArm_{t} = k} $ is usual but it can have a 
high variance if $\learnerDist_{k,t}$ is small (scaling with
$\sum_{t=1}^{\timeHorizon} (1/ \learnerDist_{k,t})$).  Controlling this
potentially large variance, especially when dealing with a stochastic
problem, is one of the main challenges that we face. In particular,
high variance can happen %in the beginning of the game 
if
a learner explores uniformly for too long.

\paragraph{Our contributions}
We consider a new 
formulation of the adversarial 
best-arm identification. 
We study whether 
a BOB result is possible for fixed-budget best-arm
identification. We answer the question negatively and show that for a class of stochastic problems,
no robust (to adversary) learner can achieve the optimal stochastic error rates.
%scenarios depending on the structure of the gaps in the stochastic problems.
To prove this result, we introduce a new measure of complexity of the 
stochastic part of the task. This measure of complexity gives the 
problem-dependent error rates that a robust learner
can guarantee in any stochastic problem. 
% This complexity is related to a lower-bound on the
%probability of error that one learner can obtain in the stochastic
%case while still being consistent in the adversarial case. 
We study
this new complexity in different stochastic regimes and provide several 
positive examples where BOB is possible but also several negative ones. 
%This results proves that their
%exist regimes in which any optimal learner for the stochastic case can
%be tricked by adversaries while there also exist regimes in which
We notice that even in stochastic problems where our lower bound seems 
to indicate that BOB is achievable, the robust uniform learner is clearly suboptimal.
Therefore there is a need for a
new algorithm to bridge the gap.  In Section~\ref{s:UpperBOB}, we
design a simple parameter-free learner and show that its error rate matches, up to log
factors, the lower bound in the stochastic case
as well as those of the uniform strategy in the adversarial case.  
%In
%Section~\ref{s:exte}, we discuss extensions of our
%
%
\paragraph{Related work} The stochastic best-arm identification was introduced in 
the fixed-budget setting by~\cite{Bubeck09PE} and~\cite{Audibert10BA}.
Refined upper and lower bounds 
can be found respectively in the works of~\cite{Karnin13AO} and~\cite{Carpentier16TB}. 
For cumulative regret, the BOB
question was raised by~\cite{Bubeck12BB}. \cite{Seldin14OP} 
gave a practical algorithm for addressing the same problem 
(see also~\citealp{Seldin17IP}). A lower bound and a refined upper bound for the 
problem was given by~\cite{Auer16AA}.

Best-arm identification has been studied in a different
non-stochastic setting by~\cite{Jamieson16NS} and~\cite{Li16HA}. 
At round $t$ for its $m$-th pull of arm $k$, 
their learner observes $\gainVector_{k,m}$, whereas
our learner  observes $\gainVector_{k,t}$.
% Therefore 
%in their setting, as $\timeHorizon$ grows, all 
%$\gainVector_{k,t}$ are observed and 
%Victor: dont know} it is 
%delayed until next pull. 
Moreover, their work % of~\cite{Jamieson16NS}
is specifically tailored for online hyperparameter 
optimization of learning methods, where the value of each 
hyperparameter is assumed to converge, at some unknown rate, 
to its true value as it is given more resources (e.g., data or time). 
Therefore, their objective is to identify the best arm once the 
convergence has happened while in our setting, we do not assume 
any convergence and are interested instead in comparing the 
cumulative rewards $\cumulGain_k$ of each arm. By assuming 
convergence of the arms and asymptotically having the learner 
observe all the rewards, they prove that a state-of-the-art deterministic learner from the vanilla stochastic best-arm identification also solves their setting.

\cite{Allesiardo2017SL} and \cite{Allesiardo17NS} analyze
a non-stationary stochastic best-arm problem in a fixed-confidence 
setting but under the assumption that the game can be split into independent
sub-games where the identity of the best arm does not change. This 
assumption precludes many of the hard examples where the adversary 
tricks the learner in the early stages of the game. The
learner can then simply use a randomized version
of Hoeffding Races and safely stop pulling arms. 
Also, while our notion of gaps is defined for round $\timeHorizon$, 
they define gaps at any intermediate round of the game 
but then consider the minimum gaps over time for each arm.
This leads to a larger notion of complexity and permits ignoring the variance of the estimates.
%They also study a stochastic problem with added corruption 
%which is outside the scope of this paper.
\vspace{.2cm}

%%\cite{Allesiardo2017SL,Allesiardo17NS} analyze
%a fixed confidence setting where several notions of 
%non-stochasticity are assumed. First the identity of 
%the best arm does not change throughout the game, second, 
%the adversary uses a limited and known amount of corruption that can be added 
%to the stochastic rewards, third, the best arm can change a 
%limited number of times during the game and needs to be 
%tracked. The proposed algorithms are randomized variants 
%of Hoeffding races with no specific guarantee in the 
%
%
%
\paragraph{Two  types of applications}  
%example and a good understanding of what our algorithm is good for}
%different tests can be run but only for one option out of 
%$\nArms$ at a a time while we would like to discriminate the 
%We make no  
%those do not
%hold in practice for many applications. 
%every day for  year before deciding to commercialize only 
%on which we have to test $\nArms$ ads but can only test per 
%configurations.
%the time of the day, or 
%internal complex dependencies.
% a worse case approach can lead to having over-conservative 
% learner. 
% and able to take advantage of \emph{easy data} in case there is 
% no such complex dependence or non-stationarity.

%best arm from a limited 
In the first type, the learner believes that future rewards will be similar to the ones already observed. 
Here, from the $\timeHorizon$ observations, the learner could use the estimated best arm in the upcoming rounds. 
In an expert setting, we might test the quality 
of experts for a limited time before committing to one.
We may also optimize the hyperparameters of algorithms. 
Contrary to \cite{Jamieson16NS}, our objective is to find the 
arm (hyperparameter) with the highest cumulative value over a test set, rather than the performance that is achieved by the hyperparameter after convergence.
In the second type, there are no new upcoming rounds.
The identity of the best arm is used to take an action based on the collected data.
% thought 
%arm for the initial phase might have a link with the rest 
%
%regularly over the quality of different manufacturing 
%devices might want to operate change at the end of the 
%year on the most deficient one.
As an example, consider a law-enforcing agency that collects 
information periodically from different targets, one target per week, and at the end of the year, it  
decides which target to investigate 
thoroughly over its activities in the past year.
This problem can 
be seen as a game between the agency and the malignant 
targets. Therefore, it would be useful to have algorithms that are 
robust to worst-case scenarios but still take advantage 
of easy data in case malignant targets actually do not take precautions to avoid being caught.
Finally, in our setting, we are not trying to be robust to 
corruption of the data, as we want to find the best arm whether it includes corruption or not,
 unlike %\vspace{.2cm}
\citet{Altschuler18BA} who study corrupted bandits.
%its cumulative regret}
%
%%%%%%%%%%%%%%%%%%%%%%%%%%%%%%%%%%%%%%%%
%

%% file: problemFormulation.tex
% !TEX root = Main.tex 
\section{More on the problem formulation in a general 
	adversarial and stochastic settings}
\label{s:set}
%
%problems initiated in the introduction.    
Let $[a:b]=\{a,a+1,\ldots,b\}$ with
$a,b\in\Integer$, $a\leq b$, and $[a]=[1:a]$.
A vector indexed by both round $t$ and a specific element
index $k$ is $\w_{k,t}$. We detail the general game protocol in Figure~\ref{fig:protocol}.
\begin{figure}[H]
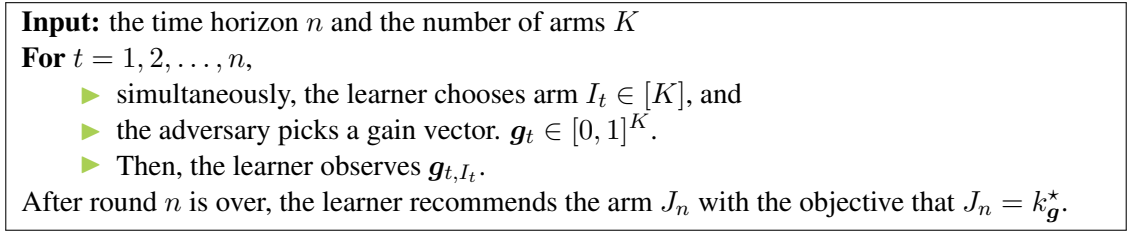

\centering
\framebox{
	\begin{minipage}{.95\textwidth}
		\textbf{Input:} the time horizon $\timeHorizon$ and the number of arms $\nArms$\\
		\textbf{For} $t=1,2, \ldots, \timeHorizon$, 
		
$\qquad$\raisebox{.00cm}{\textcolor{bull}{$\blacktriangleright$}}~ simultaneously, the learner chooses arm $\pulledArm_t\in [\nArms]$, and

$\qquad$\raisebox{.00cm}{\textcolor{bull}{$\blacktriangleright$}}~  the adversary picks a gain vector. 
$\gainVector_t\in [0,1]^\nArms$.

$\qquad$\raisebox{.04cm}{\textcolor{bull}{$\blacktriangleright$}}~ Then, the learner 
observes $\gainVector_{t,\pulledArm_t}$.

		After round $\timeHorizon$ is over, the learner 
		recommends the arm  $\recomArm_{\timeHorizon}$ with the objective that $\recomArm_{\timeHorizon} = \bestArm_{\gainVector}$.
	\end{minipage}
}
\caption{General protocol of the adversarial setting. The adversary 
	is oblivious.}\label{fig:protocol}
\end{figure}
\paragraph{Adversarial case} 
      The adversary is denoted as \advPro{}. It is the process that 
     generates $\gainVector$.
     %
  %   $t=\timeHorizon$, 
 Let $(m)$ denote 
the index of the $m$-th best arm in $\setArms$
 and $\cumulGain_{(m)}$ its corresponding 
 cumulative gain
 so that $\cumulGain_{(1)}> \cumulGain_{(2)} \geq \ldots 
 \geq \cumulGain_{(\nArms)}$. Dually to $(\cdot)$, $\langle k \rangle$~denotes 
 the rank of arm $k$ after sorting according to~$\cumulGain_{(\cdot)}$ so that $\langle (k) 
 \rangle=(\langle k \rangle)=k,~ \forall k\in \setArms$.
Without loss of generality, note that we assumed there 
exists
 a unique best arm $\bestArm_{\gainVector}=(1)$.
For each arm $k\in \setArms$, we define the gap 
$\gap^\gainVector_k$ as
\begin{align*} 
\timeHorizon\gap^\gainVector_{k} \triangleq \begin{cases}  
\cumulGain_{(1)}-\cumulGain_{k} & \text{if } k \neq \bestArm_{\gainVector}, \\
\cumulGain_{(1)}-\cumulGain_{(2)}       & \text{if } k =\bestArm_{\gainVector}.
\end{cases}%\cdot
\end{align*}
The gap can also be written as 
$\timeHorizon\gap^\gainVector_{k} = \big|\max\limits_{i \neq k} 
\cumulGain_{i} - \cumulGain_{k}\big|$. 
%
%
% what is the difference for us for the results}
 $\recomArm_{\timeHorizon} \in \setArms$ is the  arm 
returned by the learner at the end of the exploration phase.  
Given a budget
  $\timeHorizon$ and a fixed adversarial set of rewards 
  $\gainVector$ designed by an  adversary \advPro{}, the
   performance of the learner is measured by the 
   probability
    $\ProErr_{\advPro(\gainVector)}(\timeHorizon)$ 
    of not identifying the best 
    arm, i.e.,~$\ProErr_{\advPro(\gainVector)}(\timeHorizon)
     \triangleq \Pro\left(\recomArm_{\timeHorizon} \neq \bestArm_{\gainVector}\right).$
     The smaller $\ProErr_{\advPro(\gainVector)}(\timeHorizon)$, 
    the better the learner. The probability 
    is taken over the randomness of the learner as $\gainVector$ is fixed.
    An alternative definition of  the best arm, i.e., with
     highest $G_{k}$ in expectation over adversarially 
     sampled $\gainVector$, would lead to an impossible problem. 
     Indeed, it could happen that the best arm in expectation is actually 
     always clearly suboptimal on any realization $\gainVector$.
%
%
%
%
% at time $t$, $\learnerDist_t$, where $\learnerDist_{k,t}=\Pro(\pulledArm_t=k)$.
%
We define the random variables $\estGainVector_{k,t}$ as
\begin{equation*}
\estGainVector_{k,t} 
\triangleq
 \frac{\gainVector_{k,t}\indicator{\pulledArm_t = k}}
 {\learnerDist_{k,t}}\CommaBin
\end{equation*}
for 
arm $k$ at round $t$ and for which  $\Exp_{\pulledArm_t\sim\learnerDist_{t}}
\left[\estGainVector_{k,t}\right]=\gainVector_{k,t}.$
We similarly define $\estCumulGainVector_{k,t} \triangleq \sum_{t'=1}^{t} \estGainVector_{k,t'}.
$
\paragraph{Stochastic case}
 In stochastic bandits~\citep{Audibert10BA}, that
  we denote \stoPro{}, the distribution~$\nu_{k}$ of arm $k$ is bounded in $[0,1]$ with mean $\mu_{k}$. % and variance $\sigma^2_{k}$. 
 The ordering $(\cdot)$ is such that $\mean_{(1)}>\mean_{(2)}
 \geq\ldots\geq\mean_{(\nArms)}$, as we assume  the uniqueness of the best arm without loss 
 of generality.
 The distributions $\{\nu_{k}\}$ are unknown to the learner.
 %distribution $\nu_{k}$.
% $k$ has been pulled by the end of round $t$.
% %
 %
 The best arm to be identified is  $\bestArmSto= \argmax_{k\in\setArms} \mean_k$.
 Similar to the adversarial case, the gaps are 
 $\gap_{k} \triangleq |\max_{i \neq k} \mean_{i} - \mean_{k}|$, ranked as
$\gap_{(1)} \triangleq \gap_{(2)}\leq\ldots\leq\gap_{(\nArms)}$, 
 and the error 
% Victor:
 of the learner is $\ProErr_{\stoPro}(\timeHorizon) 
 \triangleq \Pro\left(\recomArm_n \neq \bestArmSto\right)$.  
 However, unlike in the adversarial case, this definition of the 
 error includes the randomness of rewards.
 Nonetheless, it is only with a probability upper-bounded 
 by $\rho=\cO(\nArms\exp(-\timeHorizon\gap^2_{(1)}))$
   that a $\gainVector\sim\stoPro$ can verify 
   $\bestArmSto\neq\bestArm_\gainVector$. However, this difference 
   with the adversarial formulation is not significant as the 
   probability $\rho$ is never larger than the probabilities of errors studied in this paper 
   and it is often insignificant with 
   respect to it.
 
\paragraph{Notions of complexity} Given 
gaps $\gap_{k}$ for $k\in\setArms$, we define two notions of 
complexity of the identification task, 
$\complexitySR$ and $\complexityUnif$, that were introduced by~\cite{Audibert10BA}. 
% In the stochastic case, these notions are defined 
In particular,
 \[
 \complexitySR
 \triangleq
 \max_{k\in\setArms}\frac{k}{\gap^2_{(k)}}
 \qquad\text{and}\qquad
 \complexityUnif 
\triangleq
% \nArms\max_{k\in\setArms}\frac{1}{\gap_{(k)}^2}=\frac{\nArms}{\min_{k\in\setArms}\gap_{(k)}^2}
% ~=~
% \nArms\max_{k\in\setArms}\frac{1}{\gap_{(k)}^2}
% =
 \frac{\nArms}{\gap_{(1)}^2}\cdot
 \]
   $\complexitySR$ relates to the complexity of the stochastic 
   case. $\complexityUnif$ will be used both in the 
   adversarial and stochastic cases. 
In the adversarial case, the complexity and the gaps 
are defined with respect to $\gainVector$ and 
but we also sometimes  write the uniform complexity as  $ \complexityUnif (\gainVector)$ for clarity.

\paragraph{Class of problems}  We define a set of classes to group problems with
very similar gap structure and with complexities that are only a 
constant multiplicative factor apart.
For any $0<\gap_1=\gap_2\leq\gap_3\leq \ldots\leq 
\gap_\nArms\leq 1/8$, we define a \textit{problem class} 
$\Gaps_{c}$ with
$c\geq 1$. 
Given these gaps and $c$, in the adversarial case, we say that 
$\gainVector\in\Gaps_{c}$  if  for all $k\in\setArms$, $\gap_k/c
 \leq\gap^{\gainVector}_k\leq c\gap_k$ except for only one 
 arm~$\refarm$  whose gap is related to the smallest gap as $\gap_{1}/c\leq\gap^{\gainVector}_{\refarm}\leq c\gap_{1}$.
In the stochastic case, $\stoPro\in\Gaps_{c}$ under 
the same condition on its gaps defined as $\gap_{k} \triangleq
 |\max_{i \neq k} \mean_{i} - \mean_{k}|$  for $k\in\setArms$.

%in that case \advPro{} and $\gainVector$ are essentially 
%randomly  $\gainVector$ before the game starts. In those cases we will be more 

%
%\[
%~=~
%\int_{\Reals}\log\left(\frac{d\nu(x)}{d\nu'(x)}\right)d\nu(x)
%\]

%% file: upLowGenAd.tex
% !TEX root = Main.tex
\section{General adversarial case: An optimal 
	learner can play uniformly at random}
\label{s:ULGenAd}
In this section, we define a simple learner  (\RULE{}) and in Theorem~\ref{th:UPARU}, we 
provide an upper bound on its probability of error 
 depending on the gap in hindsight. 
We also give  a matching lower bound for the general 
adversarial best-arm identification (Theorem~\ref{th:LOWadPRO}) 
proving that \RULE{} obtains the optimal gap-dependent rates of error against worst-case adversaries.
%
%            For $t=1,2, \ldots, \timeHorizon$\\
% with probability $\learnerDist_{k,t}=\Pro(\pulledArm_t=k)
% =\frac{1}{\nArms}$ for all $k\in\setArms$.
%
%    }
%

%Figure~\ref{fig:uniform}. 
% with probability $\learnerDist_{k,t}=\Pro(\pulledArm_t=k)
% =\frac{1}{\nArms}$ for all $k\in\setArms$
At round $t$, the \emph{random uniform  learner} (\RULE{}) 
selects an arm $\pulledArm_t\in [\nArms]$ uniformly at random, i.e., 
with probability $\learnerDist_{k,t}\triangleq\Pro(\pulledArm_t=k)
\triangleq 1/\nArms$ for all $k\in\setArms$.
%random from $\setArms$.
At the end of the game, the recommended 
arm $\recomArm_\timeHorizon$ is the one with highest 
estimated  
cumulative gain, 
$\recomArm_\timeHorizon \triangleq \argmax_{k\in\setArms}
\estCumulGainVector_{k,n}$. 
\begin{restatable}[\textcolor{titleTh}{Upper bound for \RULE{} in the adversarial case}]{theorem}{tata}\label{th:UPARU}
	For any horizon $\timeHorizon$, given rewards $\gainVector$ 
	chosen by an oblivious adversary, with $\gainVector_{k,t}\in [0,1]$ 
	for all $t\in[\timeHorizon]$ and for all $k\in\setArms$, \RULE{}  
	outputs an arm $\recomArm_\timeHorizon$
	with the guarantee that its probability of error 
	$\ProErr_{\advPro(\gainVector)}(\timeHorizon)$ verifies    
	\[
	%\mathbb{P}\big[\recomArm_\timeHorizon \neq \bestArmAd\big] 
	\ProErr_{\advPro(\gainVector)}(\timeHorizon)
	\leq
	\nArms\exp\left(-\frac{3\timeHorizon}{28\complexityUnif(\gainVector)}\right)\!\cdot
	\]
\end{restatable}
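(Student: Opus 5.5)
Since the rewards are fixed by an oblivious adversary, the only randomness comes from the learner's uniform draws $\pulledArm_1,\ldots,\pulledArm_\timeHorizon$, which are i.i.d.\ uniform on $\setArms$. The plan is to use a union bound over the suboptimal arms and then apply Bernstein's inequality to a sum of independent bounded variables. An error occurs only if some $k\neq\bestArm_{\gainVector}$ satisfies $\estCumulGainVector_{k,\timeHorizon}\geq\estCumulGainVector_{(1),\timeHorizon}$. Hence
\[
\ProErr_{\advPro(\gainVector)}(\timeHorizon)\leq\sum_{k\neq(1)}\Pro\Big(\sum_{t=1}^{\timeHorizon}X_{k,t}\geq \timeHorizon\gap^{\gainVector}_{k}\Big),
\qquad
X_{k,t}\triangleq(\estGainVector_{k,t}-\estGainVector_{(1),t})-(\gainVector_{k,t}-\gainVector_{(1),t}).
\]
Here $\sum_t(\gainVector_{(1),t}-\gainVector_{k,t})=\timeHorizon\gap^{\gainVector}_k$, and $\gap^{\gainVector}_k\geq\gap^{\gainVector}_{(1)}=\gap^{\gainVector}_{(2)}$ for every suboptimal arm. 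It therefore suffices to bound each term by $\exp(-3\timeHorizon(\gap^{\gainVector}_{(1)})^2/(28\nArms))$, since there are at most $\nArms-1\le\nArms$ terms and $\complexityUnif(\gainVector)=\nArms/(\gap^{\gainVector}_{(1)})^2$.

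The variables $X_{k,t}$ are independent across $t$ because the $\pulledArm_t$ are i.i.d.\ and $\gainVector$ is fixed. They are also centered, since $\Exp[\estGainVector_{k,t}]=\gainVector_{k,t}$. With $\learnerDist_{k,t}=1/\nArms$, the importance-weighted difference $\estGainVector_{k,t}-\estGainVector_{(1),t}$ takes the value $\nArms\gainVector_{k,t}$ or $-\nArms\gainVector_{(1),t}$ when the corresponding arm is pulled, and $0$ otherwise. This gives the one-sided bound $X_{k,t}\leq \nArms+1$, and in any case $|X_{k,t}|\le \nArms+1$.

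For the variance, $\Exp[X_{k,t}^2]\leq\Exp[(\estGainVector_{k,t}-\estGainVector_{(1),t})^2]=\nArms(\gainVector_{k,t}^2+\gainVector_{(1),t}^2)\leq 2\nArms$, because the two arms are never pulled simultaneously and so there is no cross term. Bernstein's inequality then yields, with $\epsilon=\timeHorizon\gap^{\gainVector}_k$,
\[
\Pro\Big(\sum_t X_{k,t}\geq\epsilon\Big)\leq\exp\!\Big(-\frac{\epsilon^2}{2(2\nArms\timeHorizon)+\tfrac{2}{3}(\nArms+1)\epsilon}\Big).
\]
Since $\gap^{\gainVector}_k\le1$ and $\nArms+1\le 2\nArms$, the denominator is at most $4\nArms\timeHorizon+\tfrac43\nArms\timeHorizon=\tfrac{16}{3}\nArms\timeHorizon$. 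The exponent is then at least $3\timeHorizon(\gap^{\gainVector}_k)^2/(16\nArms)$, which is better than the claimed $3/28$.

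The main obstacle is purely the constant bookkeeping. To land on exactly $3/28$, the authors presumably used a slightly looser range or variance bound, for instance range $\nArms+1\le 2\nArms$ together with variance $2\nArms$ in the form $\epsilon^2/(2\sigma^2+\tfrac23 b\epsilon)$ with $b$ and $\epsilon$ bounded differently. Any such choice suffices, since only a lower bound on the constant is needed. I would settle this by checking the precise Bernstein form and keeping $\gap^{\gainVector}_k\le 1$ as the only use of the gap size. I would also point out that ties in the $\argmax$ can be counted as errors, which is already covered by the non-strict inequality used in the union bound.
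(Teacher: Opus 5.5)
Your proof is correct, but it takes a slightly different route from the paper's.

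\textbf{The paper's route.} The paper never forms the pairwise difference. It observes that an error forces one of two events:
\begin{itemize}
\item $\estCumulGainVector_{1,\timeHorizon}-\cumulGain_1\le-\timeHorizon\gap^{\gainVector}_1/2$, or
\item $\estCumulGainVector_{k,\timeHorizon}-\cumulGain_k\ge\timeHorizon\gap^{\gainVector}_k/2$ for some $k\ge 2$.
\end{itemize}
It then applies Bernstein's inequality to each arm's own estimator separately, with per-round variance and range both at most $\nArms$. With deviation $\timeHorizon\gap/2$ and $\gap\le 1$, the exponent is $\frac{1}{8}\cdot\frac{6}{7}\cdot\frac{\timeHorizon\gap^2}{\nArms}=\frac{3\timeHorizon\gap^2}{28\nArms}$. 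That is where $3/28$ comes from, not from a looser range or variance bound on the difference as you guessed. The union bound has $\nArms$ terms: one shared event for the best arm and $\nArms-1$ for the others.

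\textbf{Your route.} You apply Bernstein to the centred difference instead. The full gap $\timeHorizon\gap^{\gainVector}_k$ is then available as the deviation, while the second moment only doubles to $2\nArms$, with no cross term because the two arms are never pulled in the same round. This gives the sharper exponent $3/16$ and only $\nArms-1$ union terms, which implies the stated bound.

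\textbf{What the paper's route buys.} Its per-arm decomposition is the template it reuses for \Pone{}. Lemma~\ref{l:errTwoArms} splits $\estCumulGainVector_{k,t}\le\estCumulGainVector_{j,t}$ into two single-arm deviations, and each gets its own phase-dependent variance bound. So your argument is tighter for \RULE{}, while the paper's is more modular.
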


\noindent\textbf{Sketch of the proof} \textit{(full proof in 
	Appendix~\ref{app:proofupadPRO})}:
The deviation of 
$\estCumulGainVector_{k,t}$ from $\cumulGain_{k,t}$ is bounded by 
a Bernstein bound which applies  since, given 
$\gainVector$ and the fact that $\learnerDist_t$ is fixed to 
constant values for all the rounds, the $\estGainVector_{k,t}$ 
are independent.  $\estGainVector_{k,t}$ are scaled 
Bernoulli random variables where the use of the Bernstein 
bound leads to a bound that scales with the 
variance of the $\estGainVector_{k,t}$ which is $\nArms$. Using a  Hoeffding bound would lead to  a 
bound that scales with the range of the $\estGainVector_{k,t}$ \emph{squared}
which is $\nArms^2$. %The final bound is obtained by a union argument is over all the different arms.
%
%}
\begin{restatable}[\textcolor{titleTh}{Lower bound for the adversarial problem}]{theorem}{toto}\label{th:LOWadPRO}
	Given any problem class $\Gaps_{3}$ with associated complexity
	$\complexityUnif$,    
	for any learner, for any horizon $\timeHorizon$ such that    
	$\nArms\exp\left( -\timeHorizon\gap^2_1 /128\right)\leq 1/128$ and $\nArms\geq 4096$, there exist 
	$\gainVector^1\in\Gaps_{3}$ and $\gainVector^2\in\Gaps_{3}$ so that  
	the  probabilities of error suffered by the learner on 
	$\gainVector^1$ and $\gainVector^2$, denoted $\ProErr_{\gainVector^1}(n)$ 
	and  $\ProErr_{\gainVector^2}(n)$ respectively, verify
	\[
	\max\left(\ProErr_{\gainVector^1}(n),\ProErr_{\gainVector^2}(n)\right)
	\geq
	\min\left(
	\frac{1}{128}\exp\left(-\frac{32\timeHorizon}{\complexityUnif}\right)\CommaBin
	\frac{1}{32}
	\right)\!\cdot
	\]     
\end{restatable}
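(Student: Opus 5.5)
The plan is to construct two adversarial reward sequences that are statistically very close from the learner's point of view but have different best arms. The key observation is that an oblivious adversary can hide the best-arm identity inside the \emph{timing} of the rewards.

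\textbf{Step 1: the hard pair.} Take stochastic-looking base rewards: arm $1$ and arm $2$ have Bernoulli means $1/2+\gap_1$ and $1/2$, and each arm $k\ge 3$ has mean $1/2-\gap_k$ (up to constant rescalings). I will make the rewards deterministic sequences by fixing, before the game, a realization of them. To get $\gainVector^1$ and $\gainVector^2$, I swap the role of two arms on a random-looking subset of rounds, so that $\gainVector^1$ has best arm $1$ and $\gainVector^2$ has best arm $2$.

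A more tractable variant draws the sequences randomly and applies a probabilistic-method argument: if the average error over a random adversary is large, some realization has large error. So I define two distributions $\nu^1,\nu^2$ over $\gainVector$. Under $\nu^1$, arm $1$ has mean $1/2+\gap_1$ and arm $2$ has mean $1/2$; under $\nu^2$ these two are exchanged. All other arms are i.i.d.\ with the prescribed gaps. The condition $\nArms\exp(-\timeHorizon\gap_1^2/128)\le 1/128$, together with Hoeffding and a union bound, ensures that with probability at least $1-1/128$ every realization lies in $\Gaps_3$ with its intended best arm. The factor $3$ absorbs deviations of order $\gap_1$ in the realized gaps.

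\textbf{Step 2: forcing the learner to pay $\nArms/\gap_1^2$.} Against a fixed pair of i.i.d.\ distributions, a learner could concentrate its pulls on arms $1,2$ and pay only $1/\gap_1^2$. To prevent this, I use a uniformly random relabeling: the adversary picks a uniformly random pair of arm indices $\{i,j\}$ to play the roles of arms $1,2$, and gives all other arms gap of order $\gap_1$ as well. This uses the freedom in $\Gaps_3$, where a single reference arm $\refarm$ may have gap near $\gap_1$. Since the ordering of arms is then hidden, the expected number of pulls of the distinguishing arm is about $\timeHorizon/\nArms$ on average over the relabeling.

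I would then apply the standard change-of-measure inequality (Bretagnolle--Huber, or the high-probability version used by \cite{Audibert10BA}):
\[
\max(\ProErr_{\nu^1},\ProErr_{\nu^2}) \ge \tfrac14\exp\bigl(-\mathrm{KL}(\P_{\nu^1},\P_{\nu^2})\bigr).
\]
The KL divergence equals the sum over the arms that differ of the expected pull count times the per-pull KL, which is $\cO(\gap_1^2)$. This gives a bound of order $(\timeHorizon/\nArms)\gap_1^2 = \timeHorizon/\complexityUnif$, and the constant $32$ comes from the KL of Bernoullis near $1/2$.

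Finally, I pass from the randomized adversary to deterministic $\gainVector^1,\gainVector^2$. For this I subtract the $1/128$ failure probability of Step 1, and the $\min(\cdot,1/32)$ branch handles the case where $\exp$ is not small.

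\textbf{Main obstacle.} The hard part is Step 2 together with the class constraint. The class $\Gaps_c$ fixes the gap profile, with only one arm allowed near $\gap_1$ besides the best arm. So I cannot make all arms ambiguous, and yet the learner must still be unable to locate which arm differs between $\gainVector^1$ and $\gainVector^2$.

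My first attempt will keep the gap profile fixed but make the \emph{change} between $\nu^1$ and $\nu^2$ time-localized. The two distributions differ only on the rewards of the best and second-best arms, through a shift confined to rounds the learner cannot anticipate. The adversary hides this in a random permutation of rounds per arm, so that any learner's pull allocation over those rounds is, in expectation, at most uniform. The hypothesis $\nArms\ge 4096$ will be used to control the averaging over the random permutation and the pair choice.
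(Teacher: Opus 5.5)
Your outer framework (randomized adversary, change of measure between two nearby problems with different best arms, derandomization by the probabilistic method) matches the paper's. The construction in Step~2, however, cannot produce the factor $\nArms$ in the exponent, and your time-localized fallback does not fix it. In every variant, the two distributions differ on the arms playing the best and second-best roles, hidden only by a random relabeling of arms or a random choice of rounds. From the learner's viewpoint these are essentially i.i.d.\ instances with the gap profile of $\Gaps_3$; your first variant, with all arms near $\gap_1$, even leaves $\Gaps_3$. For profiles with $\complexitySR\ll\complexityUnif$ (e.g.\ super-linear gaps), an adaptive learner discards the large-gap arms early and spends a constant fraction of its budget on the two contenders. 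The distinguishing arm is then pulled $\Theta(\timeHorizon)$ times, not $\timeHorizon/\nArms$. Pigeonhole bounds pull counts only under one fixed reference measure, and averaging over the relabeling $\sigma$ does not help, since $\Exp_{\nu_\sigma}[T_{\sigma(1)}]$ can be large for every $\sigma$. Confining the shift to a random set $S$ of rounds does not help either. Flipping the best arm needs a per-round shift of about $\timeHorizon\gap_1/|S|$, so the information gathered is about $(|S|/\timeHorizon)\cdot\Theta(\timeHorizon)\cdot(\timeHorizon\gap_1/|S|)^2\geq\Theta(\timeHorizon\gap_1^2)$. Concretely, take \SH{}, whose error on i.i.d.\ instances decays like $\exp(-c\,\timeHorizon/(\complexitySR\log\nArms))$ (Section~\ref{s:FormuBOB}). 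Your averaged errors for it are at most of that order. For super-linear gaps ($\complexitySR=2/\gap_1^2$) that exponent exceeds $32\timeHorizon/\complexityUnif$ by a factor of order $\nArms/\log\nArms$, so no derandomization can yield the claimed bound.

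The missing idea is to choose the distinguishing arm as a function of the fixed learner, and to make it competitive through non-stationary rewards that carry no information. The paper fixes a base Bernoulli problem (means $1/2$ and $1/2-\gap_k$). By pigeonhole under that single measure, it picks $\refarm\in[2:\nArms]$ with $\Exp[T_{\refarm}(\timeHorizon/2)]\leq\timeHorizon/\nArms$ over the first half of the game. Both problems follow the base problem in the first half, except that the second raises the mean of $\refarm$ by $2\gap_1$. In the second half both use the same deterministic rewards: $0$ for every arm except $\gainVector_{\refarm,t}=\gap_{\refarm}-\gap_1$. These lift the cumulative gain of $\refarm$ to just below, resp.\ just above, that of arm~$1$ while revealing nothing. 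Lemma~\ref{l:changeMeas} with $B=2\timeHorizon/\nArms$ (via Markov) then costs only $\exp(-32\gap_1^2\timeHorizon/\nArms)$, independently of $\gap_{\refarm}$. The realized $\gainVector$ stays in $\Gaps_3$ precisely because the class allows one exceptional arm with gap near $\gap_1$; that is what the exception is for, not extra ambiguity among the top arms. Finally, $\nArms\geq 4096$ is used in the derandomization, not to average over permutations. The Hoeffding failure probability $\nArms\exp(-\timeHorizon\gap_1^2/64)$ must be dominated by the target $\frac{1}{128}\exp(-32\timeHorizon\gap_1^2/\nArms)$, which can be far below $1/128$, and this requires $32/\nArms\leq 1/128$. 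Subtracting a constant $1/128$, as you suggest, would not suffice.
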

\noindent\textbf{Sketch of the proof} \textit{(full proof given 
	in Section~\ref{app:prooflowadPRO})}:
We construct two similar bandit problems. Between the two 
problems, only one arm differs by a change in the mean of 
order $\gap_{(1)}$ for about $\timeHorizon/(2\nArms)$ 
time steps. Therefore, using a change-of-measure argument, 
with a probability of order $\exp(-\timeHorizon/\complexityUnif)$ these two problems generate each a 
set of rewards,  $\gainVector^1$ and 
$\gainVector^2$ respectively, that the learner is not able to 
discriminate. In this undecidable case,  the learner 
still needs to recommend an estimated best arm.
However, these two problems have different best arms 
$\bestArm_{\gainVector^1}\neq\bestArm_{\gainVector^2}.$ 
Therefore, the learner makes a mistake of order 
$\exp(-\timeHorizon/\complexityUnif)$ on 
at least one of the two problems.

We consider any \emph{fixed} learner and let us have $K$ base Bernoulli distributions with  
means $\mean_1 \triangleq 1/2$ and for all $k\in[2:\nArms], \mean_k=1/2-\gap_k$. 
We consider the first half of the game from round $t=1$ 
to a round 
$\lfloor n/2\rfloor$ as a  set of rounds denoted 
$\earlyPhase$. 
%Because the total number of pulls by the learner is 
%limited by $\lfloor n/2\rfloor$ 
%in $\earlyPhase$, 
By Dirichlet's box principle, 
there exists at least one  arm, denoted $\refarm$,
that is pulled less than $\cO(\timeHorizon/(2\nArms))$ in 
expectation during $\earlyPhase$. This arm, that the learner 
does not explore very much, is then used to 
construct the two bandit problems that look similar to the learner. 
We now describe the two problems in detail.
%and therefore difficult 
% from the original Bernoulli distributions.
%The two bandit problems sample randomly their set of rewards $\gainVector$. 

The first problem  is  following the 
original Bernoulli distributions for all arms in phase~$\earlyPhase$.
Then the second part of the game, $t=\lfloor n/2\rfloor+1,
\ldots,\timeHorizon$, is deterministic.  Almost all the 
rewards of all the arms are $0$, except some rewards of 
$\refarm$ which are set to $1$. This is done so that in 
expectation in this setup, the total reward of $\refarm$ 
is $\timeHorizon(1/2-\gap_1)$ and therefore it becomes the second
best arm.

The second problem only differs from the previous one 
in its first, stochastic  part and only affects~$\refarm$ which instead of 
having the Bernoulli mean $1/2-\gap_{\refarm}$,  has now 
a mean 
of $1/2-\gap_{\refarm}+2\gap_{1}$. In this problem, 
the effect of the deterministic part is that in the end, 
when $t=\timeHorizon$, the expected mean of  arm 
$\refarm$ is 
$(1/2+\gap_1)$ instead of $(1/2-\gap_1)$, therefore in this case, 
$\refarm$ becomes the best arm.
%
%The two problems will be hard to discriminate for the learner but he will have to make a different choice in both. the event on which the learner will be confused with have a probability of order $\exp\left(-\frac{\timeHorizon}{\complexityUnif}\right)$ hence the lower bound.
%
\begin{remark}
	The assumption $\nArms\exp(-\gap^2_{(1)}
	\timeHorizon/8)\leq 1/32$ is mild. 
	Essentially, it  asks for horizon $\timeHorizon$ to be 
	large 
	enough so that the stochastic problem is learnable within 
	$\timeHorizon$ rounds. The assumption on~$\nArms$ is 
	likely to be an artifact of the proof. Even with this 
	assumption, our main message holds, in general, 
	no learner can perform better than the random uniform learner, up to constants.
\end{remark}%

%% file: bestOfBoth.tex
% !TEX root = Main.tex
%
\section{The best of both worlds challenge}
\label{s:FormuBOB}
In this section, we ask if we can have a learner 
that performs optimally under adversarial and stochastic rewards. 
The lower bound in Theorem~\ref{th:lowBOB} shows that in general,
this is impossible.\vspace{.2cm}

\paragraph{Existing robust solutions?} In the stochastic setting,
a state-of-the-art algorithm, Sequential Halving (\SH{}, 
\citealp{Karnin13AO})---see also Successive Rejects (\SR{}) 
by~\cite{Audibert10BA})---guarantees 
$\ProErr_{\stoPro}(\timeHorizon) \leq \cO\left(\log\nArms\exp
\left(-\timeHorizon/(\complexitySR\log\nArms)\right)\right)$. 
However, as discussed in the introduction, \SR{} or \SH{} can fail  
against a worst-case adversary.
On the other hand, as discussed by~\cite{Audibert10BA},  
uniform-like algorithms (like \RULE{}) can only guarantee that
in the stochastic case, we get $\ProErr_{\stoPro}(\timeHorizon) \leq  
\tcO\left(\exp\left(-\timeHorizon/\complexityUnif\right)\right)$.
In general, $\complexitySR\leq\complexityUnif$ and 
in some problems, we even have $\complexitySR=\complexityUnif/\nArms$. 
Therefore, \SH{} can notably outperform uniform 
algorithms in the stochastic case.%\vspace{.2cm}

%So we can notice that the algorithms that are optimal in one setting are suboptimal in the other. This raise a question:
%
\paragraph{The best of both worlds} We now reveal the holy grail of our endeavor, which is the following question: Does there exist 
a learner, unaware of the nature of the reward-generating process, that guarantees for any  $\timeHorizon$, for any stochastic problem 
\stoPro{}, and any set of rewards~$\gainVector$ that 
 its respective probabilities 
of misidentification $\ProErr_{\stoPro}(\timeHorizon) $ and
$\ProErr_{\advPro(\gainVector)}(\timeHorizon) $ simultaneously verify 
%for any iid problem and adversarial $\gainVector$,
\[
\ProErr_{\stoPro}(\timeHorizon) 
\leq
\tcO\left(\exp\left(-\frac{\timeHorizon}{\complexitySR\log\nArms}\right)\right)
\text{ ~~~and~~~ }
\ProErr_{\advPro(\gainVector)}(\timeHorizon) 
\leq  
\tcO
\left(\exp\left(-\frac{\timeHorizon}{\complexityUnif(\gainVector)}\right)\right)\textbf{?}
\]
\paragraph{Why is the BOB question challenging?}
The learner could choose, 
for arm $k$, at round $t$, to use the 
cumulative gain  estimator  $\estBiasCumulGain_{k,t} = 
\frac{t\sum_{t'=1}^{t}\indicator{\pulledArm_{t'}=k}\gainVector_{k,t'}}{\sum_{t'=1}^{t}\indicator{\pulledArm_{t'}=k}} 
$.
This estimator can be potentially highly biased if it is used against a 
malignant adversary. For this reason, we base our approach on 
the estimator $\estCumulGainVector_{k,t} = \sum_{t'=1}^{t}
\frac{\gainVector_{k,t'}}{\learnerDist_{k,t'}}\indicator{\pulledArm_{t'} = k}$.
However, this usage potentially introduces high variance in our estimates;
the final amount of variance of $\estCumulGainVector_{k,n}$ 
is the sum of the variance of each~$\estGainVector_{k,t}$ and 
therefore scales with $\sum_{t'=1}^{t} 1/ \learnerDist_{k,t'}$.
The high variance is most damaging in the stochastic case when trying 
to have a learner based on $\estCumulGainVector_{k,t}$ to obtain the 
optimal error rates of \cite{Karnin13AO}. Indeed, these 
optimal rates are obtained by algorithms using 
$\estBiasCumulGain_{k,t}$, which has no bias and small 
variance in the stochastic case.
Therefore,  
we strive to characterize the minimum amount of 
unavoidable variance of the mean estimators of each arm.
%learner.
The learner would like to allocate more pulls at any round $t$ 
to the arms that are among the best arms, which means having large 
$\learnerDist_{k,t}$ for these arms. Indeed,  discriminating 
between them is the hardest part of the task and large 
$\learnerDist_{k,t}$ reduces the variance term $1/\learnerDist_{k,t}$. 
%learner is already trying to solve.
However, it is natural to 
think that the learner is not able to guarantee that it pulls 
the best arms at the beginning of the game more than in a 
uniform fashion.
If the arms are pulled uniformly, the variance is of order $K$, 
which is very large.
The amount of time that the learner accumulates large variance on its estimate of the best arms because they are not yet well identified  determines the final probability of error. 
Intuitively, the lower bound in Theorem~\ref{th:lowBOB} 
constructs worst-case examples 
showing that any learner cannot pull the best arm more than 
a certain amount in some period of the game because it is difficult to identify the best arms. 
Therefore, 
this learner is susceptible to be tricked by an adversary. Our new learner
in Section~\ref{s:UpperBOB} tries to limit this effect by 
making early and almost costless bets on what are the 
best arms given the early rewards and starts to pull them more right away.
Note that if any learner allocates pulls uniformly for the first half of the game, as it is 
done in  algorithms like \SR{} or \SH{}, %~\citep{Audibert10BA}, 
then even if the pulls are randomized,
the variance of the  
estimator  $\estBiasCumulGain_{k}$ of  arm $k$ would still scale with $\nArms$ which prevents 
outperforming even the static random uniform learner.

Another approach could be a learner that determines online 
if the observed rewards are stochastic. This was used by~\cite{Bubeck12BB} and~\cite{Auer16AA}. They 
detect if the difference between $\estBiasCumulGain_{k,t}$
and $\estCumulGainVector_{k,t}$
is way too large. However, their bound itself uses  terms 
depending
on $\nArms$ and the variance of each arm, 
$1/\learnerDist_{k,t}$, which leads to similar open questions 
as discussed just above. In this paper, inspired by the approach 
of~\cite{Seldin14OP},  we give a practical 
simple parameter-free and versatile algorithm.  Furthermore,  the algorithms
that are based on stochastic tests are usually cumbersome and complex,  as discussed by~\citealp{Seldin14OP}.

%% file: lowerBoundBOB.tex
% !TEX root = Main.tex
%
%\subsection{Lower Bound for the Stochastic \& Adversarial Rewards}
%\label{s:LowBOB}
\paragraph{Why is the best of both worlds unachievable?}
We define a new notion of complexity, $\complexityBoth$ as
\[
\complexityBoth 
\triangleq
\frac{1}{
\gap_{(1)}}
\max_{k\in\setArms}\frac{k}{\gap_{(k)}}\cdot
\]
$\complexityBoth $ is a complexity for
the stochastic case.
As we detail  in Remark~\ref{rem:LOWbobCOMP}, 
$	\complexitySR\leq\complexityBoth\leq\complexityUnif.$
%Let $\mathcal S$ be the set of stochastic problem such that
%The upcoming Theorem~\ref{th:lowBOB} shows that
%for any $\nArms$ specified gaps, $0<\gap_1=\gap_2\leq\gap_3\leq \ldots\leq \gap_\nArms\leq 1/8$, 
%a stochastic problem can be built with a very similar gap landscape 
%and a complexity changed by at most a factor of $2$ such that on this problem
% it is not possible to have a probability of error 
%smaller than 
%$	\frac{1}{64}\exp\left(-512\frac{\timeHorizon}{\complexityBoth}\right)$
% %on any stochastic problem
%  without  having 
%  a probability of error  not converging to $0$ with large $\timeHorizon$
%   on some adversarial problem with
%  again a similar gap landscape.
%
\begin{restatable}[\textcolor{titleTh}{Lower bound for the BOB challenge}]{theorem}{thi}\label{th:lowBOB}
     For any class problem $\Gaps_{4}$, for any learner, 
     there exists an i.i.d.\,stochastic problem $\stoPro\in\Gaps_{4}$ with complexity $\complexityBoth$,
     such that  for any~$\timeHorizon$ satisfying  	$\nArms\exp\left(-\gap^2_1\timeHorizon\right/32)\leq 1/32$,  if
	the probability of error of the learner
	 on  \stoPro{}   
	satisfies
	\[	
\ProErr_{\stoPro}(\timeHorizon)
	\leq
	\frac{1}{64}\exp\left(-\frac{2048\timeHorizon}{\complexityBoth}\right)\!\CommaBin
	\]	
	then there exists an adversarial problem $\gainVector\in\Gaps_{4}$
	   that  makes the learner suffer a 
	constant error,
	\[	
	\ProErr_{\advPro(\gainVector)}(\timeHorizon)
	\geq
	 \frac{1}{16}\cdot
	\]
      \end{restatable}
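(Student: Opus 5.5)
Throughout, $k^\dagger$ is the arm at which the maximum in $\complexityBoth$ is attained, so that $\complexityBoth=k^\dagger/(\gap_{(1)}\gap_{(k^\dagger)})$. The argument is a two-phase change-of-measure construction, similar to Theorem~\ref{th:LOWadPRO}, but in the opposite logical direction. There, small stochastic error forced large adversarial error; here we show that a learner which does very well on \stoPro{} must have concentrated its pulls away from some arm during an early phase of length $\earlyPhase$. An adversary then hides the best arm in that under-sampled arm during the early phase.

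\textbf{Choosing the early phase and the weak arm.} Take the base stochastic problem \stoPro{} to be Bernoulli arms in $\Gaps_4$. Its top $k^\dagger$ arms have gaps of order at most $\gap_{(k^\dagger)}$ and the others have larger gaps. Let $\earlyPhase$ be the first $\ell\approx \timeHorizon\,\gap_{(1)}/\gap_{(k^\dagger)}$ rounds; this quantity is at most $\timeHorizon$, which justifies the choice of $\complexityBoth$. By Dirichlet's box principle, some arm $\refarm$ among the top $k^\dagger$ arms is pulled in expectation at most $\ell/k^\dagger$ times during $\earlyPhase$.

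\textbf{Forcing a dichotomy.} Perturb \stoPro{} into a stochastic problem $\stoPro'$, still in $\Gaps_4$, in which $\refarm$'s mean is raised by about $2\gap_{(k^\dagger)}$ only in $\earlyPhase$. That perturbation is not i.i.d., so realize it instead as an adversarial $\gainVector$: the rewards are drawn as in \stoPro{} except for $\refarm$ during $\earlyPhase$, and after $\earlyPhase$ they are tuned deterministically so that $\refarm$ ends up as the best arm with hindsight gap of order $\gap_{(1)}$. This requires a shift of $\gap_{(k^\dagger)}\ell\approx \timeHorizon\gap_{(1)}$ in cumulative gain, which is exactly why $\ell$ was chosen as above.

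The KL divergence between the learner's observation laws under \stoPro{} and under this $\gainVector$-generating process is
\[
\cO\!\left(\frac{\ell}{k^\dagger}\,\gap_{(k^\dagger)}^2\right)=\cO\!\left(\frac{\timeHorizon\,\gap_{(1)}\gap_{(k^\dagger)}}{k^\dagger}\right)=\cO\!\left(\frac{\timeHorizon}{\complexityBoth}\right).
\]
By the standard high-probability change of measure (Bretagnolle--Huber, or the likelihood-ratio event argument used in Theorem~\ref{th:LOWadPRO}), we get
\[
\Pro_{\stoPro}(\recomArm_\timeHorizon=\refarm)\gtrsim \exp\!\big(-c\,\timeHorizon/\complexityBoth\big)\,\Pro_{\gainVector}(\recomArm_\timeHorizon=\refarm).
\]
Under \stoPro{}, choosing $\refarm$ is an error (or \stoPro{} is chosen with $\refarm\neq(1)$). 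So if $\ProErr_{\stoPro}\le\frac1{64}\exp(-2048\timeHorizon/\complexityBoth)$, then $\refarm$ is returned on the $\gainVector$-process with probability below a constant. The process therefore errs with constant probability, averaged over the random draw of its rewards. The condition $\nArms\exp(-\gap_1^2\timeHorizon/32)\le1/32$ ensures that realizations have the intended ranking and lie in $\Gaps_4$ with high probability. Averaging then gives a fixed $\gainVector\in\Gaps_4$ with $\ProErr_{\advPro(\gainVector)}\ge 1/16$.

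\textbf{Main obstacle.} The delicate step is the quantifier order: \stoPro{} must be fixed before the learner's pull allocation is known, yet $\refarm$ depends on the learner. I would handle this by letting all top $k^\dagger$ arms share the same gap $\gap_{(k^\dagger)}$ structure, keeping the top two at $\gap_{(1)}$, so that any choice of $\refarm$ yields a valid $\gainVector\in\Gaps_4$. I would also check that the deterministic second phase, which contains mostly zeros as in Theorem~\ref{th:LOWadPRO}, keeps every hindsight gap within a factor $4$ of the class gaps.
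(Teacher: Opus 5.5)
There is a genuine gap. Your scalings are right: an early phase of $\ell\approx\timeHorizon\gap_{(1)}/\gap_{(k^\dagger)}$ rounds, an arm with about $\ell/k^\dagger$ expected early pulls, and an exponent of order $\timeHorizon/\complexityBoth$. The construction, however, does not support the change of measure you need, for two structural reasons.

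\textbf{The adversarial process must match \stoPro{} in law.} The $\gainVector$-process has to coincide with the i.i.d.\ problem everywhere except on $\refarm$ during $\earlyPhase$. A deterministic second phase (``mostly zeros, as in Theorem~\ref{th:LOWadPRO}'') destroys this. Every post-$\earlyPhase$ pull contributes a likelihood-ratio factor bounded away from $1$ (about $1/2$), so the two observation laws are nearly singular and your bound $\cO(\ell\gap^2_{(k^\dagger)}/k^\dagger)$ on the KL is false. In Theorem~\ref{th:LOWadPRO} this was harmless only because both compared processes shared the same deterministic half.

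Dropping the deterministic tuning does not rescue your direction. In \stoPro{}, $\refarm$ trails the best arm by $\timeHorizon\gap_{\refarm}$ over the whole horizon, with $\gap_{\refarm}$ possibly of order $\gap_{(k^\dagger)}\gg\gap_{(1)}$. A boost of $2\gap_{(k^\dagger)}$ over $\ell$ rounds only buys about $2\timeHorizon\gap_{(1)}$, which cannot cover that deficit.

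The paper reverses the roles. The permanent change goes into the \emph{i.i.d.}\ problem: $\mean^{\stoPro}_{\refarm}=1/2+\gap_1/2$, so $\refarm$ is the best arm of \stoPro{}, which $\Gaps_4$ allows for one exceptional arm. Meanwhile \advPro{} equals \basePro{} on $[1:\timeHorizon_i]$ and equals \stoPro{} afterwards. The two laws then differ only on $\refarm$ during $\earlyPhase_i$, and no deterministic rewards are used. Only an $\cO(\timeHorizon\gap_1)$ change in $\refarm$'s cumulative gain is needed to flip the best arm.

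\textbf{The change of measure runs in the wrong direction.} The conclusion ``tiny $\ProErr_{\stoPro}(\timeHorizon)$ forces constant adversarial error'' requires transferring probability \emph{from} the adversarial measure \emph{to} \stoPro{}. The transfer is on an event $\{\recomArm_\timeHorizon=\text{adversarial best arm}\}\cap\{\pullsNumber_{\refarm}(\ell)\le B\}$, and it also needs $\Pro_{\advPro}(\pullsNumber_{\refarm}(\ell)>B)\le 1/2$. That is, the pull-count bound must hold under the adversarial process.

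Your box-principle bound holds under \stoPro{}. Your process differs from \stoPro{} exactly on $\refarm$ during $\earlyPhase$, where a learner may pull the boosted arm more, so the bound does not transfer. With the bound you actually have, the likelihood-ratio argument only runs from \stoPro{} to the $\gainVector$-process, and it yields $\ProErr_{\advPro(\gainVector)}\gtrsim e^{-c\timeHorizon/\complexityBoth}$. Bretagnolle--Huber only gives $\ProErr_{\stoPro}+\ProErr_{\advPro(\gainVector)}\ge\tfrac12e^{-\mathrm{KL}}$. Neither yields a constant error.

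In the paper this is automatic. Because \advPro{} coincides with \basePro{} on $\earlyPhase_i$, the Markov bound from Equation~\ref{eq:notSoMuchPulls} holds verbatim under \advPro{}. Lemma~\ref{l:changeMeas} is then applied from \advPro{} to \stoPro{} on $\{\recomArm_\timeHorizon=1\}\cap\{\pullsNumber_{\refarm}(\timeHorizon_i)\le 4\timeHorizon_i/i\}$, which gives Equation~\ref{eq:mainlowBOB}.

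\textbf{Smaller points.}
\begin{itemize}
\item Your ``quantifier obstacle'' is not one. The statement reads ``for any learner there exists \stoPro{}'', so \stoPro{} may depend on $\refarm$.
\item Equalizing the top $k^\dagger$ gaps is not allowed in $\Gaps_4$.
\item The box principle should be run over $\setArmsmo$, taking the highest-mean under-pulled arm, so that $\refarm\neq 1$ and $\gap_{\refarm}\le\gap_i$.
\end{itemize}

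\textbf{A caution if you follow the paper.} The best-arm switch needs a per-round drop of order $\gap_i$ during $\earlyPhase_i$. But \advPro{} uses $\mean^{\basePro}_{\refarm}=1/2-\gap_{\refarm}$, and only $\gap_{\refarm}\le\gap_i$ is guaranteed. The paper's computation of $\expectCumul_{\refarm}$ replaces $\gap_{\refarm}$ by $\gap_i$ at that point, so that step needs an extra argument.
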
    
\begin{remark}\label{rem:LOWbobCOMP}
	In general, we have
	\[
	\complexitySR\leq\complexityBoth\leq\complexityUnif.
	\]
	%
%We first compare the complexities in a general regime. Then we spell out three different regimes to intuitively explore when the inequalities in the previous equation are strict or not.
Below, we compare the three  complexities in 
three specific gap regimes in order to intuitively explore   whether
the inequalities in the previous equation are strict or not. 
Interestingly, while in two regimes  $\complexitySR=\complexityBoth$, in the third regime, called the `square-root gaps', we can obtain $\complexityBoth=\sqrt{\nArms/2}\complexitySR$.  
This equality shows that on some problems and for large values of $\nArms$,
 our lower bound on 
the complexity of the BOB problem is 
significantly larger than the complexity of the strictly stochastic case.
This ultimately shows that no  learner can guarantee the BOB in general   and that any learner  that is optimal in all strict stochastic problems is then inconsistent against
worst-case adversaries.
	\end{remark}
%\begin{description}
%	\item[General Regime:] Let the set $S$ of pairs $(k,\alpha)$, be $S=\{k\in\setArmsmo,\alpha\geq 0 : \gap^{\alpha}_{(1)}=\frac{\gap^{\alpha}_{(k)}}{k}\}$. This set is not empty as for each $k$ it contains the pair $\left(k,\alpha=\frac{\log k}{\log\left(\frac{\gap{(k)}}{\gap_{(1)}}\right)}\right)$
%	Let us denote $\kappa=\max_{k,\alpha\in S} k^{\frac{1}{\alpha}}$
	%
	\paragraph{\raisebox{.04cm}{\textcolor{bull}{$\blacktriangleright$}}~Flat regime}  We assume all the gaps are equal, 
$k\in\setArms,~ \gap_k=\gap_{1}$. Then, 
$\complexitySR=\complexityBoth=\complexityUnif$. Having $\complexitySR=\complexityBoth$  shows that our  
stochastic BOB lower bound for robust learners (Theorem~\ref{th:lowBOB}) using $\complexityBoth$ is of the same order as the one in the 
strict stochastic setting~\citep{Audibert10BA} using $\complexitySR$.   
In this stochastic regime, \RULE{} is optimal while being robust to an adversary.
\paragraph{\raisebox{.04cm}{\textcolor{bull}{$\blacktriangleright$}}~Super-linear gaps} Let $(2)\in\argmin_{k\in\setArms} (\gap_{(k)}/k).$ 
This holds if 
$\forall k\in[3:\nArms],$ we have that $\gap_{(k)}=k\gap_{(1)}$, $\gap_{(1)}\leq 1/\nArms$. Then, 
$\complexitySR=\complexityBoth=(2/\nArms) \complexityUnif$. 
Again, our BOB lower bound is of the same order as in 
the strict stochastic setting. This seems to indicate that in this case, BOB is achievable. However, it is not achieved by the uniform 
learner that is clearly suboptimal. This observation demands a new robust learner. Intuitively,  the learner can  
identify bad arms quickly
and  start focusing early on the best arms without incurring 
high variance on its estimates for them.
\paragraph{\raisebox{.04cm}{\textcolor{bull}{$\blacktriangleright$}}~Square-root gaps}   We assume $(2)\!\in\!\argmin_{k\in\setArms} (\gap^2_{(k)}/k).$ 
%This means that for all $k\in\setArms$, $\gap_{(1)}\leq \frac{\gap{(k)}}{\sqrt{k}}$. 
Let us denote arm $j$ for which  $j\in\argmin_{k\in\setArms} (\gap_{(k)}/k)$. 
For some constant $c$, let $\gap_{(1)}=c\gap_{(j)}/j$. We have 
$c\leq \sqrt{2j}$ because  $\gap^2_{(1)}/2 \leq \gap^2_{(j)}/j$ 
as  $(2)\in\argmin_{k\in\setArms} (\gap^2_{(k)}/k)$.
Therefore,  \[\complexitySR=\frac{2}{\gap^2_{(2)}}=\frac{2}{\gap^2_{(1)}}= \frac{2}{\gap_{(1)}} \frac{j}{c\gap_{(j)}}=\frac{2\complexityBoth}{c}=\frac{2\complexityUnif}{\nArms}\cdot\]	
We can get $c=\sqrt{2j}=\sqrt{2\nArms}$.
This happens if 
$\sqrt{\nArms/2}\gap_{(1)}=\gap_{(\nArms)}$ and
$\sqrt{k/2}\gap_{(1)}\geq\gap_{(k)}$ for $k\in[3:\nArms-1]$. 
Then,  we get  $\sqrt{\nArms/2}\complexitySR=\complexityBoth$.
$\complexityBoth$ is $\sqrt{\nArms/2}$ larger than the complexity of
the strictly stochastic setting.
Intuitively,  the learner needs to spend some time to 
identify the `square-root gaps' suboptimal arms before starting 
to focus on the best arms. This makes it suffer an additional amount
of variance on its estimates for the best arms.
%\vspace{.1cm}
%
%\textbullet~ If all the gaps are such that for all $i$ , $\gap_i/i$ are
% equal (or more generally if, which means
% $\gap_1$ is quite small comparatively to the other gaps, the gaps in the example are increasing rapidly : linearly), we have the lowerbound 
%  $\complexitySR=\complexityBoth=\frac{1}{\nArms}\complexityUnif	$
%   the normal lowerbound again
%
%\textbullet~ If all the gaps are such that for all $i$ , $\gap^2_i/i$ are
% equal (or, more generally, If $1=\argmin_k \gap^2_k/k$, which 
% means arm 1 is quite small comparatively but as small as in
%  the previous case!, the gaps are increasing in a square root fashion ), in this case we have  $K=\argmin_k \gap_k/k$ we have the lowerbound 
%  $\complexitySR=\frac{1}{\sqrt{\nArms}}\complexityBoth=\frac{1}{\nArms}\complexityUnif	$ 
%  which is a new lowerbound larger than in the traditional case so 
%  thats cool. Indeed note that this is a case where the definition of the 
%  classical complexity is $\exp(-N\gap^2_1)$
%  

%
%\begin{remark}
%The assumption $\nArms\exp\left(-\frac{1}{32}\gap^2_{(1)}\timeHorizon\right)\leq 1/32$ is mild. 
%Essentially, at most, it  asks for $\timeHorizon$ to be large 
%enough so that the stochastic problem is solvable within the budget $\timeHorizon$.
%% $\timeHorizon$ is required to be large enough here.
%\end{remark}%
%

\vspace{.5cm}
\noindent\textbf{Sketch of the proof} 
\textit{(full proof in 
	Appendix~\ref{app:prooflowBOB})}:
Our proof of the lower bound uses some arguments of purely stochastic best-arm identification 
lower bounds of~\cite{Audibert10BA} and~\cite{Carpentier16TB}.
We have been also inspired by the lower bound of~\cite{Auer16AA} 
for the BOB question for the cumulative regret. However, 
our specific construction is new.

Consider a fixed learner. 
%To maximize 
%the bound we will set $i=\argmax_{k\in\setArmsmo} \frac{k}{\gap_{k}}\cdot$
%Let $n_i=n\dividefac_i$, with $\dividefac_i=\gap_1/\gap_i\leq 1$ be a number of early rounds in the game.
%As in the proof of Theorem~\ref{th:LOWadPRO} discussed in 
%Section~\ref{s:ULGenAd}, 
%We construct two similar bandit 
%problems, one stochastic and one adversarial.
We construct a stochastic and an adversarial problem.
Between the two problems, only one arm differs.
% by a change in the mean 
%of order $\gap_{i}$. Also this arm that will be observed by the learner only $\timeHorizon_i/i$ times. 
We bound the number of pulls from the learner on this arm.
Using a change-of-measure argument,  
 with a probability  $\cO\left(\exp\left(-\timeHorizon/\complexityBoth\right)\right)$ 
 the two problems are impossible to discriminate. 
However, the two problems have different best arms. 
Therefore the learner makes a mistake $\cO\left(\exp\left(-\timeHorizon/\complexityBoth\right)\right)$ 
on at least one of the two problems.

Let us define $K$ base Bernoulli distributions with  
means $\mean_1 \triangleq 1/2$ and for all remaining $k\in[2:\nArms], \mean_k\triangleq1/2-\gap_k$. 
 % that we call $\earlyPhase_i$.
Let $i\in\setArmsmo$ be  an arbitrary arm.
Let $n_i\triangleq n\dividefac_i$, where $\dividefac_i \triangleq \gap_1/\gap_i\leq 1$, is a number of early rounds in the game.
Because the total number of pulls by the learner is 
limited by $\timeHorizon_i$ 
during this phase, by Dirichlet's box principle,  there exists at least $K-i+1$  arms included in $[2:\nArms]$ %$\{(2),\ldots,(\nArms)\}$.
that are pulled  by the learner less than $\cO(\timeHorizon_i/i)$ times in 
expectation. 
Therefore, in this set of arms,  there is an arm, 
%($\refarm=1$ by construction),
 denoted $\refarm$, that 
 has a gap of order or smaller than $\gap_{i}$.
 This arm, with a small gap, that the learner does 
 not explore very much, is then used 
 to  construct the two similar bandit problems 
 that the learner has a hard time to differentiate.
The stochastic problem is made by only modifying the
original Bernoulli distribution of arm $\refarm$  
by setting it  to 
$\mu_{\refarm} \triangleq 1/2 + \gap_1/2$. 
The adversarial problem samples  $\gainVector$ randomly: 
 It is mimicking the stochastic 
problem  for all rounds and all arms with the exception 
of the $\refarm$ during the first  $\timeHorizon_i$ 
rounds of the game where the gains are from 
the base Bernoulli distribution (with mean $1-\gap_{\refarm}$).
%distributions until the end of phase $A_i$ and then switch to the same distributions as problem 1.

If  $\dividefac_i\geq \gap_1/\gap_i$--- 
fixing a large enough phase at the beginning to 
modify the identity of the best arm---then  the best 
arms in both problems are different. The event on which the 
two problems are impossible to discriminate has a probability of~$\cO\left(\exp(-\timeHorizon_i(\gap_i)^2/i\right)$. %\leq\exp(-\timeHorizon_ia_i\gap^2_1/i)\right)$.
To maximize this probability, we minimize $\dividefac_i$ 
while still ensuring  to  have the change of best arm 
between the two problems, by setting $\dividefac_i  \triangleq\gap_1/\gap_i$.
Therefore, the probability is now   
$\cO\left(\exp(-\timeHorizon\gap_{1}\gap_i/i)\right)$.
Again to maximize  it, we choose 
$i \triangleq \argmin_k (\gap_k/k)$ and obtain the claimed result for some fixed $\gainVector$.
%

%% file: upperBoundBOB.tex
%
% !TEX root = Main.tex
\section{A simple robust parameter-free algorithm for stochastic \& adversarial rewards}
\label{s:UpperBOB}
In this section, we present a new learner 
and analyze its theoretical performance against any i.i.d.\, stochastic 
problem or any adversarially designed rewards.

We call the algorithm \ProbabilityONE{}, denote it by \Pone{},
and detail it in Figure~\ref{fig:probaoneAlgo}.
Intuitively,  \Pone{} pulls the estimated best arm with 
``probability'' one, the estimated second best arm with ``probability'' 
one half, the estimated third best arm with ``probability'' one 
third, and so on until pulling the estimated worst arm with ``probability'' one over $\nArms$.\textsuperscript{\ref{foot:break}} 
In order to get proper probabilities, we need to normalize  
them by the $\nArms$-th harmonic number
$\bar\log \nArms =\sum_{k=1}^\nArms (1/k)$, where $\bar\log \nArms\leq \log\nArms+1$ for all positive integers $\nArms$. \Pone{} is following a Zipf distribution with an exponent of $1$~\citep{Powers98AE}.
\begin{figure}[H]
	\centering
	\framebox{
		~~~~~~~~~    \begin{minipage}{.9\textwidth}
			%    Given: the number of arms $\nArms$ \\[.05cm]
			\vspace{.15cm}
			\textbf{For} $t=1,2, \ldots$\vspace{.1cm}
			
			$\qquad$\raisebox{.02cm}{\textcolor{bull}{$\blacktriangleright$}}~  Sort and rank the arms by decreasing 
			order of $\estCumulGainVector_{\cdot,t-1}$: Rank arm $k$ as $\tilde{\langle k \rangle}_t\in[K]$\footnote{Equalities between arms
				or comparisons with arms that have not been pulled 
				yet are broken arbitrarily.\label{foot:break}}.\vspace{.2cm}
			
			$\qquad$\raisebox{.02cm}{\textcolor{bull}{$\blacktriangleright$}}~  Select arm $\pulledArm_t\in [\nArms]$ with 
			$\learnerDist_{k,t}\triangleq\Pro\left(\pulledArm_t=k\right) 
			\triangleq \displaystyle \frac{1}{~\tilde{\langle k \rangle}^{\vphantom{X}}_t~\bar\log\nArms~}$ 
			for all $k\in\setArms$.\vspace{.2cm}
			
			\textbf{Recommend}, at any given round $t$,  $\recomArm_t \triangleq  \argmax_{k\in\setArms}~ 
			\estCumulGainVector_{k,t}$.
		\end{minipage}
	}
	\caption{The \ProbabilityONE{} (\Pone{}) algorithm}\label{fig:probaoneAlgo}
\end{figure}
\noindent
The estimate used in \Pone{} is 
$\estCumulGainVector_{k,t-1}$ for arm $k$ at round $t$. 
\Pone{} is heavily inspired  by Successive Rejects (\SR{}) of~\citet{Audibert10BA}, as both are somehow ranking the 
arms and attempt to allocate to arm~$k$ a number of pulls 
$\tcO\left( \timeHorizon/\langle k\rangle\right)$ according to its true rank $\langle k\rangle$. 
Our new learner is parameter-free and naturally anytime 
(agnostic of $\timeHorizon$).
% It only needs to know  $\nArms$. 
As \SR{}, it does not need any knowledge 
of any complexity nor it is trying to estimate any. 
However, contrarily to \SR{}, it does not divide the game into different 
sampling phases. The same rudimentary sampling procedure 
is repeated at all rounds $t$ in \ae ternum. %Therefore \Pone{} is a natural 
%anytime algorithm not needing to know the time horizon 
%$\timeHorizon$. 
%Additionally it makes the sampling sub-routine very 
%versatile and adaptive to other potential applications and settings. 

As discussed in Section~\ref{s:FormuBOB}, in order to minimize the misidentification error in the stochastic case, it is crucial 
to limit 
the variance of the estimators for the best arms.
Therefore  \Pone{},  from
its very first pull, pulls more (i.e., with higher probability) the arms that are 
estimated to be among the best. 
First, this comes with almost no cost: Indeed, pulling 
the estimated best arm with probability 
$1/\bar\log\nArms$ does not prevent from pulling 
all the arms almost uniformly and more precisely with probability at 
least  $1/(\nArms\bar\log\nArms).$
Therefore, no suboptimal arm is actually left out in 
the early chase for  the best arm and the variances of the estimators 
can only increase by a factor of $\bar\log\nArms$ with respect 
to the uniform strategy.
Second, it gives the learner the flexibility to adapt 
to different types of stochastic problems with 
different gap regimes.   If in a setup,  some arms 
are clearly suboptimal, it is helpful to pull the clearly best 
arm more and right from the beginning. This is a more flexible behavior than the one of algorithms that  are fixing the number of pulls for each arm during a fixed period  in advance. 
Additionally, compared to a fixed-phase algorithm,  our
analysis is also more flexible:
We can analyze, for instance, the quality of the estimated ranking $\tilde{\langle\cdot\rangle}$ and therefore the adaptive sampling procedure of the arms 
at any round.
Actually, these rounds called \emph{comparison rounds}, 
can be chosen in a problem-dependent manner, in order to 
minimize the final probability of error. This is conspicuous in the complexity measure  present in the upper bound 
as it is defined as a minimum complexity among complexities 
defined for any set of comparison rounds.
Note again that this optimization procedure is only for 
the analysis of the learner while the learner itself is 
utterly agnostic of the optimal `virtual' phases and 
just follows its simple routine. We now define this 
new notion of complexity. First, we define  the proportion of rounds for 
comparison $\propTimeVec$  in a space~$\propTimeVecSpa.$
Let $\propTimeVecSpa\triangleq\left\{\propTimeVec
\in[0,1]^\nArms
: %\sum_{i=2}^\nArms \propTime_i \leq 1,
\timeHorizon\propTime_i\in\Integer ,  \forall i \in[\nArms],
1=\propTime_1=\propTime_2\geq\propTime_3\geq\ldots
\geq\propTime_{\nArms}>\propTime_{\nArms+1}=0
\right\}$. The complexity associated with the \Pone{} 
learner is $\complexityProOne$ and is defined 
first as
\[
\complexityProOne(\propTimeVec)
\triangleq
\max_{k\in\setArms}
\frac{
	\sum_{i=\langle k \rangle}^{\nArms}
	(\propTime_{i}-\propTime_{i+1})
	i
	+
	\frac{1}{24}\nArms\propTime_{\langle k \rangle}\gap_{k}}
{\propTime^2_{\langle k \rangle}\gap^2_{k}}\bar\log\nArms
\quad\text{and then,}\quad
\complexityProOne
\triangleq
\min_{\propTimeVec\in\propTimeVecSpa}
\complexityProOne(\propTimeVec).
\]
%
%$\complexityProOne(\propTimeVec)$ has a similar shape to the variance complexity in~\cite{Gabillon11MB} (See a detailed discussion there). 
In  $\complexityProOne(\propTimeVec)$, for  arm $k$,  the term
$    \sum_{i=\langle k \rangle}^{\nArms}
(\propTime_{i}-\propTime_{i+1})
i$  corresponds to the sum of 
variances $\tcO(i)$, for $i\in[\langle k \rangle:\nArms]$,  during a proportion of time
$\propTime_{i}-\propTime_{i+1}$ between the comparison 
rounds $i+1$ and~$i$. Indeed, 
%as discussed 
%later in the sketch of proof, 
we expect the estimated ranking 
of  arm $k$, $\tilde{\langle k \rangle}_t$, for 
$t\in[\timeHorizon\propTime_{i+1}:\timeHorizon\propTime_{i}]$, 
to be smaller than~$i$, which corresponds to its true ranking as $\langle k\rangle\leq i$.
This leads, for 
$t\geq\timeHorizon\propTime_{i+1}$, to
$\learnerDist_{k,t}\geq\ 1/(i\bar\log\nArms)$
% $\gainVector_{k,t}= \frac{\gainVector_{k,t}\indicator{\pulledArm_t = k}}
%{\learnerDist_{k,t}}\geq\frac{1}{1/i}$
and therefore, as  
$\estGainVector_{k,t} = \frac{\gainVector_{k,t}}
{\learnerDist_{k,t}}\indicator{\pulledArm_t = k},$ %\CommaBin$
to a variance of $\estGainVector_{k,t} $   smaller than $i\bar\log\nArms$.
In the denominator,  the term $\propTime_{\langle k \rangle}$ is proportional to  the amount of pulls,  $\propTime_{\langle k \rangle}\timeHorizon$,  allocated to arm~$k$.
\begin{restatable}[\textcolor{titleTh}{Upper bounds for \Pone{}}]{theorem}{lightres}    \label{th:UpBOB}
	For any stochastic  problem \stoPro{}
	with  complexity $\complexityProOne$
	and for any $\gainVector$ fixed by an oblivious adversary 
	with  complexity $\complexityUnif(\gainVector)$,
	%with $\gainVector_{k,t}\in [0,1]$ for all 
	%$t\in[\timeHorizon]$ and for all $k\in\setArms$,
	the  probabilities of error of~\Pone{}, denoted 
	$\ProErr_{\stoPro}(\timeHorizon)$ and $\ProErr_{\advPro(\gainVector)}(\timeHorizon)$
	in their respective environment, for any~$\timeHorizon$, simultaneously verify  	 
	\[
	\ProErr_{\stoPro}(\timeHorizon)
	\leq
	2K^3n\exp\left(-\frac{\timeHorizon}{128\complexityProOne}\right)\quad \text{and}
	\quad
	\ProErr_{\advPro(\gainVector)}(\timeHorizon)
	\leq
	\nArms\exp\left(-\frac{3\timeHorizon}
	{40\bar\log(\nArms)\complexityUnif(\gainVector)}\right)\!\cdot
	\]
\end{restatable}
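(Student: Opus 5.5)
The two bounds are treated separately; both rest on a martingale Bernstein (Freedman-type) inequality applied to the differences $\estCumulGainVector_{j,t}-\estCumulGainVector_{k,t}$. These are sums of martingale increments $X_s=(\estGainVector_{j,s}-\estGainVector_{k,s})-(\gainVector_{j,s}-\gainVector_{k,s})$ with respect to the filtration generated by past pulls, since $\learnerDist_s$ is determined by $\estCumulGainVector_{\cdot,s-1}$.

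\textbf{Adversarial bound.} The key property of \Pone{} is that $\learnerDist_{k,s}\geq 1/(\nArms\bar\log\nArms)$ deterministically for every $k$ and $s$. Hence $|X_s|\leq \nArms\bar\log\nArms+1$ and the conditional variance satisfies $\Exp[X_s^2\mid\mathcal F_{s-1}]\leq 1/\learnerDist_{j,s}+1/\learnerDist_{k,s}\leq 2\nArms\bar\log\nArms$. For each $k\neq\bestArm_{\gainVector}$ we compare $k$ with $\bestArm_\gainVector$: an error requires $\sum_s X_s\geq \timeHorizon\gap^\gainVector_k\geq\timeHorizon\gap^\gainVector_{(1)}$. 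Freedman's inequality with a deterministic variance bound $V=2\timeHorizon\nArms\bar\log\nArms$ and range $b\approx\nArms\bar\log\nArms$ gives $\exp\bigl(-\timeHorizon^2\gap^2/(2V+\tfrac23 b\timeHorizon\gap)\bigr)$. Using $\gap\leq 1$, the exponent is of order $3\timeHorizon\gap^2_{(1)}/(c\,\nArms\bar\log\nArms)$, and a union bound over $\nArms-1$ arms yields the stated form. Choosing the variance proxy carefully, for example by centering only one arm or using $\Exp\estGainVector^2\leq 1/\learnerDist$, should produce the constant $40$. This is the same argument as in Theorem~\ref{th:UPARU}, with an extra $\bar\log\nArms$ factor.

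\textbf{Stochastic bound.} Fix $\propTimeVec\in\propTimeVecSpa$ attaining $\complexityProOne$, and set comparison rounds $\timeHorizon\propTime_i$. Define a good event $\eventbob$: for each $i\in[2:\nArms]$ and each pair of arms $j,k$ with $\langle j\rangle<\langle k\rangle$ and $\langle k\rangle\geq i$, at every round $t\geq\timeHorizon\propTime_{i}$ the estimate ordering satisfies $\estCumulGainVector_{j,t}>\estCumulGainVector_{k,t}$ for $j$ among the top $i-1$ true arms. I would induct downward along the comparison rounds, showing that on $\eventbob$, during $[\timeHorizon\propTime_{i+1},\timeHorizon\propTime_i]$ every arm $k$ with $\langle k\rangle\leq i$ has estimated rank at most $i$. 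Then $\learnerDist_{k,s}\geq 1/(i\bar\log\nArms)$ on that interval, so the accumulated conditional variance of $\estCumulGainVector_{k,\cdot}$ up to time $\timeHorizon\propTime_{\langle k\rangle}$ is at most $\timeHorizon\bar\log\nArms\sum_{i\geq\langle k\rangle}(\propTime_i-\propTime_{i+1})i$. The error at a comparison round $\timeHorizon\propTime_{\langle k\rangle}$ between arm $k$ and a better arm has mean gap at least $\timeHorizon\propTime_{\langle k\rangle}\gap_k$ in expectation. Freedman's bound with the variance above and range $\nArms\bar\log\nArms$ produces exactly $\exp\bigl(-\timeHorizon/(c\,\complexityProOne(\propTimeVec))\bigr)$; the term $\frac1{24}\nArms\propTime_{\langle k\rangle}\gap_k$ in the numerator is the range contribution.

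\textbf{Main obstacle.} The conditional variances are random, since they depend on the ranking, so Freedman's inequality must be applied to a stopped or truncated martingale: the increments are replaced by $X_s\indicator{\text{ranking good up to }s-1}$, whose variance is deterministically bounded and which coincides with $X_s$ on the good event. Making this rigorous requires union bounds over all rounds $t$ (since the estimate must stay correct at every round after the comparison round, not just at it), over pairs of arms, and over stages. These account for the $\timeHorizon$ and $\nArms^3$ prefactors. The i.i.d. randomness of the rewards also enters the increments, but because rewards lie in $[0,1]$ and $\Exp[\gainVector_{k,s}]=\mean_k$, it only adds bounded variance absorbed in the constant $128$. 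Finally, the recommendation at round $\timeHorizon$ uses the stage $i=2$ with $\propTime_2=1$, which gives the correct best arm on $\eventbob$.
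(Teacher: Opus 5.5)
\textbf{Adversarial part and overall structure.} Your adversarial argument is sound. Working with the pairwise differences $\widetilde G_{k,t}-\widetilde G_{\bestArm_{\gainVector},t}$, rather than the paper's two one-sided per-arm deviations, even gives a constant better than $3/40$. Your truncation of the increments is also a cleaner version of what the paper does by conditioning on $\bigcap_{p>i}\xi_p$.

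\textbf{The gap is in the stochastic part: your good event cannot be shown to have high probability.} It requires that, for every $t\ge n a_i$, each of the top $i-1$ true arms be estimated strictly above each arm of true rank $\ge i$. Nothing separates the arm of rank $i-1$ from the arm of rank $i$: their means may be equal (in the flat regime all suboptimal arms share one mean) or arbitrarily close. In that case, ordering them correctly at all rounds after $na_i$ is essentially a coin flip for every $n$, or is governed by $\mu_{(i-1)}-\mu_{(i)}$, which does not appear in $\complexityProOne$. So no union bound can produce $2K^3n\exp(-n/(128\complexityProOne))$. The same error sits in your claim that comparing arm $k$ with ``a better arm'' has mean gap at least $n a_{\langle k\rangle}\Delta_k$. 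That holds only when the better arm is the best arm. For any other better arm $j$ the gap is $t(\mu_j-\mu_k)\le t\Delta_k$, and it may be $0$.

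\textbf{The missing idea is a factor-two slack in the events.} The paper uses
$\xi_i=\{\forall t>n_i,\ \forall k:\ \mu_1-\mu_k<\Delta_i/2\Rightarrow \widetilde{\langle k\rangle}_t<i\}$.
Only arms whose gap is below half of $\Delta_i$ must occupy the top $i-1$ estimated positions; arms near the boundary may be misranked. For such a $k$ and any arm $j$ of rank $\ge i$, there is then a genuine margin $\mu_k-\mu_j>\Delta_j/2\ge\Delta_i/2$.

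\textbf{The price is a different variance bound for the competitor.} These weaker events no longer guarantee that a competitor $j$ has estimated rank at most $\ell$ in every phase $\ell\ge j$. The paper handles this as follows.
\begin{itemize}
\item It defines $j^+$ as the last rank whose gap is at most $2\Delta_j$.
\item It uses $\xi_{\ell+1}$ to get $\widetilde{\langle j\rangle}_{t'}\le\ell$ only in phases $\ell\ge j^+$, and rank at most $j^+$ afterwards.
\item It weakens the required deviation to $t\Delta_j/4\ge t\Delta_{j^+}/8$.
\end{itemize}
After using monotonicity in $t$, the Freedman exponent is then exactly the term of $\complexityProOne(\propTimeVec)$ for the arm of rank $j^+$. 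This halving of gaps is what the constant $128$ absorbs. Your induction needs to be rebuilt around these weaker events and this $j^+$ bookkeeping.
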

\noindent\textbf{Sketch of the proof} 
\textit{(full proof in 
	Appendix~\ref{app:proofupBOB})}:
For the \textit{adversarial case}, it is enough that the learner pulls 
each arm with a probability larger than $1/(\nArms\bar\log\nArms)$ to obtain
the same complexity $\complexityUnif$ as \RULE,  up to a factor $\log\nArms$.
For the \textit{stochastic case},
we consider $\nArms-1$  arbitrary `virtual' 
phases that each  ends at round 
$\timeHorizon_i=\timeHorizon\propTime_i$, that will be chosen in hindsight 
to minimize the upper bound. Note that \Pone{} is oblivious to these values.
The phases are following a countdown from phase $\nArms$ to 
phase~$2$ that is the last one.
Intuitively, $n_i$ is a round after which we expect the 
following event~$\eventbob_i$ to happen with  
high probability: 
%$\cO\left(\exp(-\timeHorizon/\complexityProOne)\right)$
For all $t>\phaseTime_i$, \Pone{} has 
well estimated the rank of any arm $k$ with a significantly smaller 
gap than the $i$-th gap, in particular, $\tilde{\langle k \rangle}_t\leq i-1< i$, 
if $ \mean_{(1)}-\mean_k\leq\gap_{(i)}/2.$
The important consequence  is that any such arm $k$, for $t>\phaseTime_i$, will be pulled with $\learnerDist_{k,t}\geq 1/(i-1)$ leading to a 
smaller variance (of order $i-1$) in their estimates 
$\estGainVector_{k,t}$ for $t>\phaseTime_i$. 
Reducing these variances leads 
in turn to  better estimates in the 
rest of the game.
The proof works iteratively over the phases. We 
consider that an error has occurred as soon as the 
estimated ranking is wrong at the end of a phase~$i$, i.e., 
that $\eventbob_i$ does not hold. We bound the  probability 
of making such a mistake at the end of phase $i$, give 
the fact that no mistakes were made in previous phases.
Indeed, with no past mistake before phase $i$, the 
learner is guaranteed to have sharp estimates.
Summing all the errors gives a bound on the probability 
of not ranking well the best 
arm at the end of the last phase $\timeHorizon_2=\timeHorizon$.

To bound $\ProErr_{\stoPro}(\timeHorizon)$,
we use the Bernstein inequality for martingale  differences. This inequality takes into account the 
variances and holds despite the dependencies of the random 
variables $\estGainVector_{k,t}$.

When minimizing $\complexityProOne$ by choosing $\propTime_k$, for $k\in\setArms$, we need to trade off between
short phases, possibly meaning not enough samples 
to discriminate the suboptimal arms (the denominator term of the suboptimal arms is small) and long phases,
which means that in early stages, the best arms are considered as badly 
ranked for a long time and the variance of their mean estimators is increasing 
with the length of the early phases (the numerator term for the best arms is larger).
\begin{restatable}{corollary}{cocores}\label{coco}
	The complexity $\complexityProOne$ of \Pone{} matches the  complexity $\complexityBoth$
	from the lower bound of Theorem~\ref{th:lowBOB}
	of up to log factors,
	\[\complexityProOne = \cO\left(\complexityBoth\log^2\nArms\right)\!.\]
\end{restatable}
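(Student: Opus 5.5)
Since $\complexityProOne$ is a minimum over $\propTimeVec\in\propTimeVecSpa$, it suffices to exhibit one admissible vector $\propTimeVec$ with $\complexityProOne(\propTimeVec)=\cO(\complexityBoth\log^2\nArms)$. Following the proof sketch of Theorem~\ref{th:lowBOB}, where the phase for rank $i$ had length $\timeHorizon\gap_{(1)}/\gap_{(i)}$, I take
\[
\propTime_1=\propTime_2=1,\qquad \propTime_i\approx\frac{\gap_{(1)}}{\gap_{(i)}}\quad(i\in[3:\nArms]),
\]
rounded so that $\timeHorizon\propTime_i\in\Integer$. This is nonincreasing because the gaps are sorted, and it is positive. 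Rounding costs only constants if $\timeHorizon$ is not tiny; otherwise both sides of the error bound are trivial.

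\textbf{Denominator and second numerator term.} Fix an arm $k$ and write $j=\langle k\rangle$, so $\gap_k=\gap_{(j)}$. Then $\propTime_{j}\gap_k=\gap_{(1)}$ for every $j$, including $j=1,2$ since $\gap_{(1)}=\gap_{(2)}$. The denominator is therefore $\gap_{(1)}^2$ for every arm. This is the point of the choice: every arm has the same signal $\propTime_j\gap_{(j)}$. The second numerator term is $\frac{1}{24}\nArms\gap_{(1)}$, which contributes $\nArms/(24\gap_{(1)})$. Since $\complexityBoth\ge \nArms/(\gap_{(1)}\gap_{(\nArms)})$ and $\gap_{(\nArms)}\le 1/8$, this contribution is $\cO(\complexityBoth)$.

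\textbf{Main term.} For the first numerator term I use summation by parts:
\[
\sum_{i=j}^{\nArms}(\propTime_i-\propTime_{i+1})\,i=j\,\propTime_j+\sum_{i=j+1}^{\nArms}\propTime_i\le \sum_{i=1}^{\nArms}\propTime_i+\propTime_2 .
\]
Here I bound $j\propTime_j\le\sum_{i\le j}\propTime_i+\propTime_2$ by monotonicity. This sum is at most $2+\gap_{(1)}\sum_{i=3}^{\nArms}1/\gap_{(i)}$. Let $M\triangleq\max_i i/\gap_{(i)}=\gap_{(1)}\complexityBoth$. Then $1/\gap_{(i)}\le M/i$, so
\[
\sum_{i\ge3}\frac{1}{\gap_{(i)}}\le M\,\bar\log\nArms .
\]
Dividing by $\gap^2_{(1)}$ gives a bound $\cO\big((1/\gap_{(1)}^2+\complexityBoth\log\nArms)\big)$. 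The term $1/\gap_{(1)}^2$ is at most $2/\gap_{(2)}^2\le\complexityBoth$, using $M\ge 2/\gap_{(2)}=2/\gap_{(1)}$.

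\textbf{Conclusion and main obstacle.} Multiplying by the outer factor $\bar\log\nArms$ in the definition gives $\complexityProOne(\propTimeVec)=\cO(\complexityBoth\log^2\nArms)$, which proves the corollary. The one step that needs care, and where I expect the argument to be tightest, is the harmonic-sum bound $\sum_i 1/\gap_{(i)}\le \max_i(i/\gap_{(i)})\cdot\bar\log\nArms$. It is the source of the second logarithm and the only place where the structure of $\complexityBoth$ enters. The rest is bookkeeping: the integer rounding of $\timeHorizon\propTime_i$, and checking the ranks $j=1,2$ separately, which works because $\propTime_1=\propTime_2=1$ and $\gap_{(1)}=\gap_{(2)}$.
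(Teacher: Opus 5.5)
Your proposal is correct and follows essentially the paper's proof: you make the same choice $\propTime_k=\gap_{(1)}/\gap_{(k)}$, which makes every denominator equal to $\gap^2_{(1)}$, you apply the same summation by parts, and you use the same harmonic bound $1/\gap_{(i)}\le \bigl(\max_{i'} i'/\gap_{(i')}\bigr)/i$ as the source of the second logarithm. The remaining differences are cosmetic: you absorb the $j\propTime_j$ term via monotonicity into an extra $\cO(1/\gap^2_{(1)})\le\cO(\complexityBoth)$ term, and you handle the integer rounding explicitly where the paper assumes it without loss of generality.
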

\noindent The result of Corollary~\ref{coco} is obtained by setting  $\propTime_k =   \gap_{(1)}/\gap_{(k)},$ $\forall k \in\setArms$.\footnote{To ease the exposition, we assume without 
	loss of generality that 
	$    \timeHorizon\propTime_i\in\Integer,~ \forall i \in[\nArms].$}  Notice that the same values  were also 
used in the lower bound in Theorem~\ref{th:lowBOB}. The full proof  of Corollary~\ref{coco}
is in Appendix~\ref{s:moreOnRem}, 
where we also discuss the relation between 
$\complexityProOne$ and $\complexitySR$ and 
$\complexityBoth$ for different regimes of the 
gaps.
Corollary~\ref{coco} demonstrates that \Pone{} achieves the best that can be wished 
for in the two worlds, up to log factors.
%
%
% First, what happens to the STO part? 
%Second, why in the ADV 1/2 of P1 and 1/2 of RULE, gives you this?}
\begin{remark}
	In the adversarial case, a  modification to $\Pone{}$ leads to similar upper bound as for \RULE{},  where  $\complexityUnif(\gainVector)$ appears instead of $\complexityUnif(\gainVector)\bar\log \nArms$.
	Indeed, with probability one half we can play according to \RULE{} and otherwise use \Pone{}. We keep the  recommendation $\recomArm_\timeHorizon=\argmax_{k\in\setArms}~ 
	\estCumulGainVector_{k,\timeHorizon}$.
\end{remark}
\begin{remark}
	We studied the hard (adversarial) and easy data (stochastic) settings. 
	However, as discussed by~\cite{Seldin14OP}~and~\cite{Allesiardo17NS}, we can consider intermediate settings of difficulty.
	First, quite simply, the result in the stochastic case would still hold up
	to constants when the gaps of the arms do not change by more than the 
	same numerical factor during the game.
	More generally, we could design variants of $\complexityProOne$ under 
	the assumption that after some round $t'$ the (ground-truth) ranks of all the 
	arms are upper bounded each by a constant.
	Indeed, soon after~$t'$, \Pone{} will itself rank every arm at most according to its 
	maximum rank.  Such results would even hold in 
	a case of a change of the identity of the best arm in the game.
	%
	%Then a careful reader will se that one can easily  redefine the notion of gaps and have a notion of maximal ranking to extend our results. 
	%We look at a version of the problem where the best arm is define as in the stochastic problem, just not necessarily stochastic.
	%So we have a distribution at each time and arm $\nu_{k,t}$ and a best arm at each time step $\bestArm_{t}=\argmax_{k\in\setArms}\Exp\left[\sum_{t'=1}^t \mean_{k}(t)\right]$
	%$\langle k \rangle_t \leq r_{k,s}$  
	%
\end{remark}%

%% file: Extensions.tex
% !TEX root = Main.tex
\section{The simplicity of \Pone{} and its sampling routine have potential extensions}\label{s:exte}
%
%As \Pone{} and its sampling routine are simple and 
%parameter-free, they have the potential to be adapted 
%to variants of the fixed-budget setting of our paper. 
%it could be reused in numerous other variations of the setting of the best arm identification problem where additionally one wants to be robust to potential adversarial chosen rewards. 
%In this section, we discuss some of them.

%\subsection{Fixed Confidence setting}\label{s:exteFC}
%
\paragraph{Fixed-confidence}
%setting .
In this i.i.d.\,stochastic setting, (\citealp{Maron93HR}, \citealp{Even-Dar06AE,Mnih08EB,
	Kalyanakrishnan12PA,Kaufmann13IC,Garivier16OB}) the goal is to 
design a learner that stops as soon as possible 
and returns the best arm with a fixed confidence.
Let~$\widetilde{n}$ be the round when the algorithm stops and 
$\recomArm_{\widetilde{n}}$ its returned arm. Given a 
confidence level $\delta$, the learner has to guarantee 
that $\Pro\left(\recomArm_{\widetilde{n}}= 
\bestArmSto\right) \leq \delta$. The performance of 
the learner is then measured by its \textit{sample 
	complexity}, which is the number of rounds $\widetilde{n}$ before stopping, 
either in expectation or in high probability.

Mimicking the Hoeffding and Bernstein Races~\citep{Maron93HR,
	Mnih08EB}, we could design Freedman Races based on \Pone{}.
All the arms would be pulled according to 
the parameter-free sampling routine of \Pone{}. No arm could be ever discarded because we could happen to face an adversary.
The learner would stop using the \Pone{} routine based on having the confidence intervals from the Bernstein concentration 
inequality for martingales and in particular, it would stop when the confidence interval for the empirically best 
arm is separated from the confidence intervals for all the other arms.
For this fixed-confidence variant of  \Pone{}, we could reuse the proof 
techniques developed for the fixed-budget setting and bound the 
accumulation of variance of the estimates. Then, in the stochastic case, we would be 
able to guarantee that the expected 
sample complexity of such  algorithm is $\tcO\left(\complexityBoth\log(1/\delta)\right)$, up to log factors.

For the adversarial case, we can consider
an infinite sequence of rewards $\gainVector$ fixed by 
the adversary for all arms.
% (as small as possible) on the rewards up to round $t$, 
% 
Assume that the sample complexity  on the rewards up to round $t$, 
is bounded by some $\n(\gainVector_{[t]})$, the smaller the better. 
We can then guarantee that
if at any round $\tilde\timeHorizon$, 
$\gainVector_{[\tilde\timeHorizon]}$  verifies 
$\n(\gainVector_{[\tilde\timeHorizon]}) \leq \tilde\timeHorizon$, 
then with probability $1-\delta$, the learner can both 
stop and recommend the best arm $\bestArm_{\gainVector_{[\tilde\timeHorizon]}}$ 
at round $\tilde\timeHorizon$.
Our Freedman algorithm would be able to satisfy this 
requirement with the complexity of uniform allocation,
$\tcO\left(\complexityUnif(\gainVector_{[\tilde\timeHorizon]})\log(1/\delta)\right)\!.$
Note that the learner could possibly never stop. 
\paragraph{Streams, windows, thresholds, $m$-set, and active 
	anomaly detection}
\Pone{} can  be used to
recommend  the  best arm in 
the latest time window 
between  $t-W$ and $t$ for each round $t$. Essentially, $W$ would replace $\timeHorizon$ in our bounds if \Pone{} recommends the estimated best arm in that window. Also, \Pone{} could also be adapted to identify $m$ best arms out of $\nArms$ as did~\cite{Bubeck12MI}
The key is to redefine the gap with respect to the $m$-th best arm instead of the best arm.
We could also extend \Pone{} to a setting where the rewards converge~\citep{Li16HA}.
\cite{Locatelli16AO} defined the gaps with respect 
to a given fixed threshold and the objective is to determine which arms have 
a mean higher than the threshold. Again, our approach would apply. Moreover, it could prove to be a good 
robust approach to the problems that are linked to the 
threshold bandit problem as discussed by~\citet[Section~3]{Locatelli16AO}, one of them being the \emph{active anomaly detection} \citep{carpentier2014extreme}.
Indeed, in adversarial anomaly detection, the learner might be monitoring 
different streams of non-stochastic rewards and could potentially detect an 
anomaly if one of the streams outputs a reward signal that is on average larger
than a given threshold during a time window period $W.$ A variant of \Pone{} would then be a robust anomaly detector and would be also adaptive to easy data.
\paragraph{Adaptive adversaries} Our upper bound results extend 
naturally  to adaptive adversaries given the following condition:  
$\complexityUnif$ is an upper bound on the complexity of 
all $\gainVector$ that the adaptive adversary can possibly generate. The proofs remain the same except that the Bernstein concentration inequality in Theorem~\ref{th:UPARU} is replaced by the Bernstein concentration inequality for martingales.
\vfil

%% file: appendices.tex
% !TEX root = Main.tex

%
\input{appExp}
\noindent In the remainder of the appendix, for  a random variable $X$, we denote its variance  by $\var_X$.
Moreover, we also write that a bounded random variable
$X\in [\lowBrv_X,\upBrv_X]$  has a range $\range_X=\upBrv_X-\lowBrv_X.$ 
%otherwise start with the introductory " "}
\input{appUpperUniform}
%
\input{appTechLem}
%
\input{appLOWAd}
%
\input{appLOWBOB}
%
\input{appUPPBOB}

%% file: appExp.tex
%
% !TEX root = Main.tex 

\section{Experiments in the stochastic setting} 
\label{app:Exp}
After the theoretical main course, we propose an experimental dessert.
%We propose a few simple experiments to illustrate our theoretical analysis. 
We  reuse the experimental setups of~\cite{Audibert10BA} in Experiments~\ref{fp:A} to~\ref{fp:G}. We only consider Bernoulli distributions and the optimal arm  has always mean
$1/2$. Each experiment corresponds to a different situation for the gaps. They are either clustered in a few groups
or distributed according to an arithmetic or geometric progression. Experiment~\ref{fp:H} reuses the `square-root gap' scenario when $\complexityBoth = \sqrt{2K}\complexitySR$ as detailed in Remark~\ref{rem:LOWbobCOMP}. The experimental setups are given below.
\begin{itemize}[leftmargin=*]
	\item[] 
	\begin{fp}{fp:A}{}
\textit{One group of bad arms},
	$K= 20$, $\mu_{2:20}= 0.4 \equiv \forall j\in \{2,\ldots,20\}, \mu_j= 0.4$
\end{fp}
	%Experiment 1: 
		\item[]  \begin{fp}{fp:B}{} \textit{Two groups of bad arms},
	$K	= 20$, $\mu_{2:6} =  0.42$, $\mu_{7:20} =  0.38$.\end{fp}
		\item[] \begin{fp}{fp:C}{} \textit{Geometric progression},
	$K	= 4$, $\mu_{i} = 0.5 -(0.37)^i$, $i\in \{2,	3,4\}$ \end{fp}
		\item[] 	\begin{fp}{fp:D}{}
	\textit{6
	arms divided into three groups},	
	$K	= 6$, $\mu_2 = 0.42$, 	$\mu_{3:4}= 0.4$, $\mu_{5:6}= 0.35$\end{fp}
		\item[] 	\begin{fp}{fp:E}{} \textit{Arithmetic progression},
	$K 	= 15$, $\mu_i= 0.5 -0.025i$, $i\in \{	2,\ldots,15\}$ \end{fp}
		\item[] 	\begin{fp}{fp:F}{} \textit{2 good arms and a large group of bad arms}, $K= 20$, $\mu_2 = 0.48$, $\mu_{3:20}= 0.37$\end{fp}
		\item[]  	\begin{fp}{fp:G}{} \textit{Three groups of bad arms},
	$K = 30$, $\mu_{2:6}	= 0.45$, $\mu_{7:20} = 0.43$,
	$\mu_{21:30}= 0.38$	 \end{fp}
			\item[]  \begin{fp}{fp:H}{} \textit{Square-root gaps}
			$K	= 100$, $\mu_{i} = 0.5 - 0.25\sqrt{i/(2K)}$, $i\in [2:100]$ \end{fp}
\end{itemize}
In Table~\ref{tabloo}, we report the complexities $\complexitySR$, $\complexityBoth$, and $\complexityUnif$ computed in these experimental setups. Unsurprisingly, in Experiments~\ref{fp:A}, \ref{fp:C}, and~\ref{fp:E} we recover $\complexitySR=\complexityBoth$ and in Experiment~\ref{fp:H}, we have $\complexityBoth = \sqrt{2K}\complexitySR$. Experiments~\ref{fp:B}, \ref{fp:D}, \ref{fp:F}, and~\ref{fp:G} then give an idea about the behavior of  $\complexitySR$, $\complexityBoth$, and  $\complexityUnif$ with respect to each other.
\begin{table}[H]
		\center
\begin{tabular}{|>{\columncolor[gray]{.97}}l>{\columncolor{pearDark!30}}r
		>{\columncolor{pearYe!30}}r
		>{\columncolor{pearThree!30}}r|}
	\hline 
	\cellcolor{gray!40}\textbf{Experimental setup} & \cellcolor{pearDark!70}$\boldsymbol{\textcolor{white}{\complexitySR}}$ & \cellcolor{pearYe!70}$\boldsymbol{\textcolor{white}{\complexityBoth}}$ & \cellcolor{pearThree!70}$\boldsymbol{\textcolor{white}{\complexityUnif}}$\\
	\hline
	\ref{fp:A}. One group of bad arms& 2000 & 2000 & 2000 \\
	\ref{fp:B}. Two groups of bad arms &1389 & 2083 & 3125\\
	\ref{fp:C}.	Geometric progression& 5540 & 5540 & 11080\\
	\ref{fp:D}. 6 	arms divided into three groups &400 & 500 & 938\\
	\ref{fp:E}.	Arithmetic progression& 3200 & 3200 & 24000\\
	\ref{fp:F}. 2 good arms and a large group of bad arms &5000 & 7692 &50000\\
	\ref{fp:G}.	Three groups of bad arms &4082 & 5714 & 12000\\
	\ref{fp:H}.	Square-root gaps &3200 & 22627 & 160000\\
	\hline
\end{tabular}
\caption{Comparing complexities  $\complexitySR$, $\complexityBoth$, and  $\complexityUnif$.}
\label{tabloo}\vspace{-.4cm}
\end{table}
\noindent
In Figure~\ref{exp}, we report the average success rate (which is an estimate of the probabilities of error) of~\SR, \Pone, and the \textit{static uniform allocation} on the 8 experimental problems previously detailed. The static uniform allocation is not the algorithm \RULE. \RULE{} samples an arm uniformly at random while the \textit{static} uniform allocation
simply allocates $n/K$ pulls to each arm deterministically.
The empirical results follow very closely our theoretical findings as the empirical behavior in Figure~\ref{exp} mimics the behavior of the complexities in Table~\ref{tabloo}. As~\cite{Audibert10BA}, we chose horizon~$\timeHorizon$ to be of the order of the complexity $\complexity_1=\sum_{k\in\setArms}(1/\gap^2_k),$ where $\complexitySR\leq\complexity_1\leq\complexitySR\log\nArms.$

\begin{figure}[H]
	\center
	\includegraphics[width =.46\textwidth]{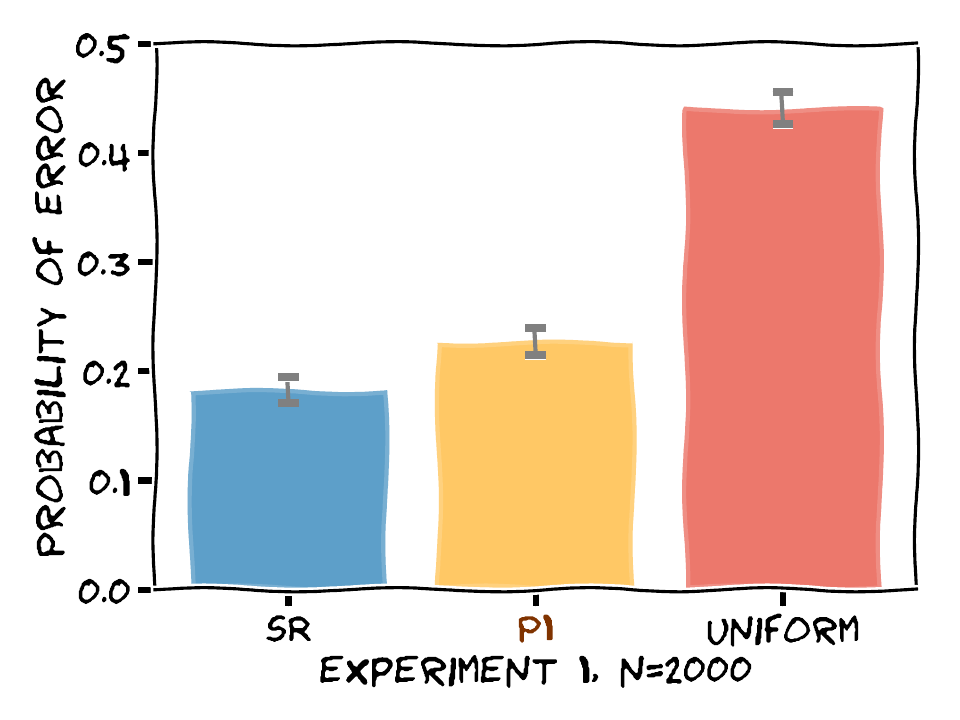}$\qquad$\includegraphics[width =.46\textwidth]{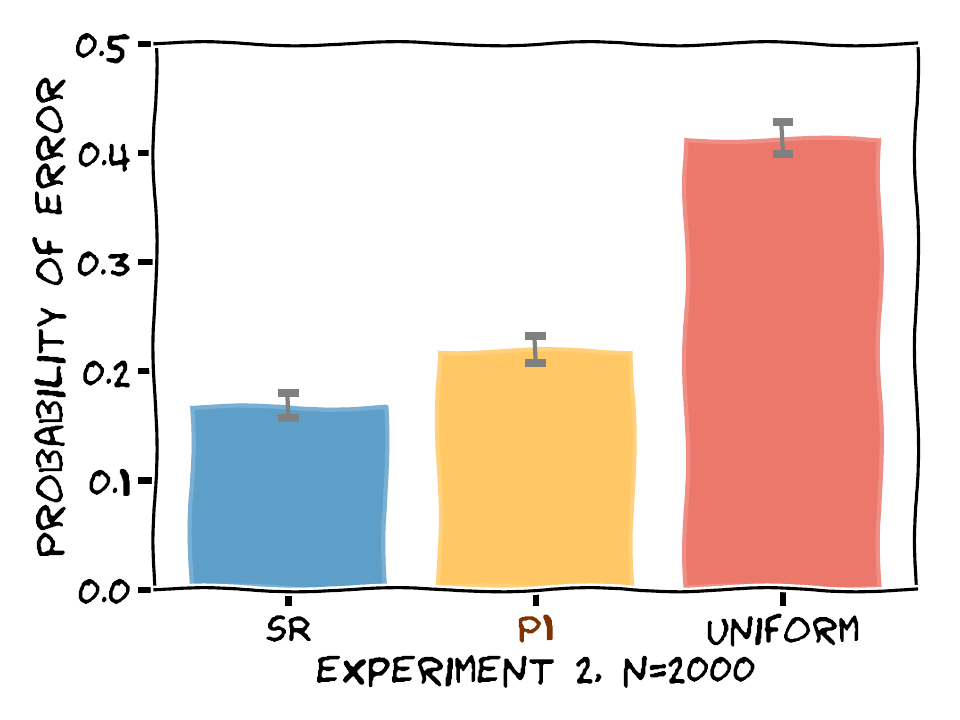}
		\includegraphics[width =.46\textwidth]{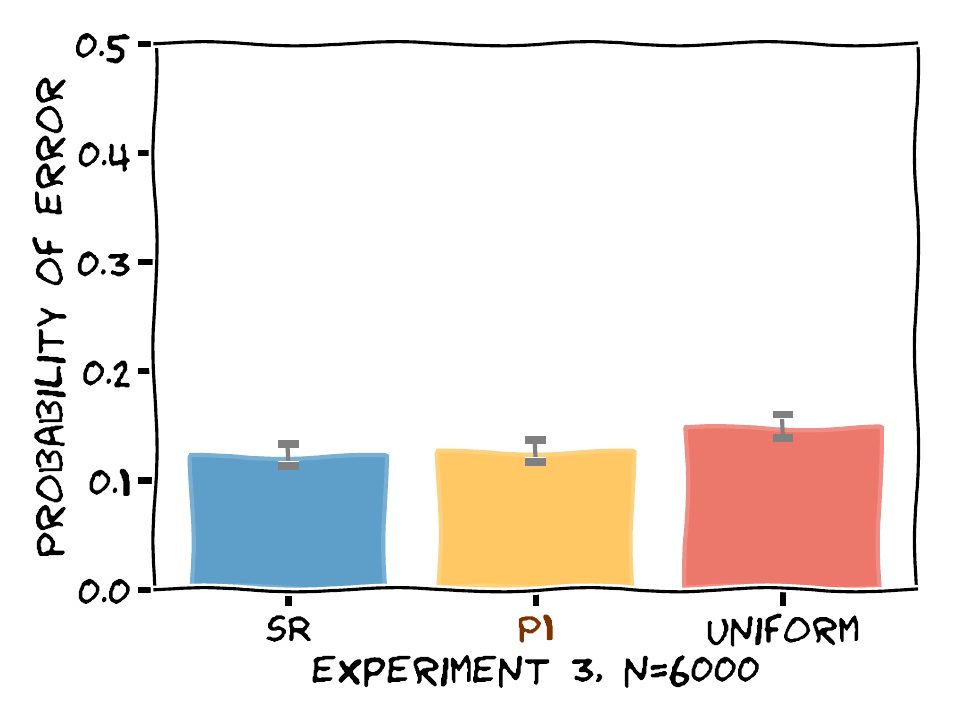}$\qquad$\includegraphics[width =.46\textwidth]{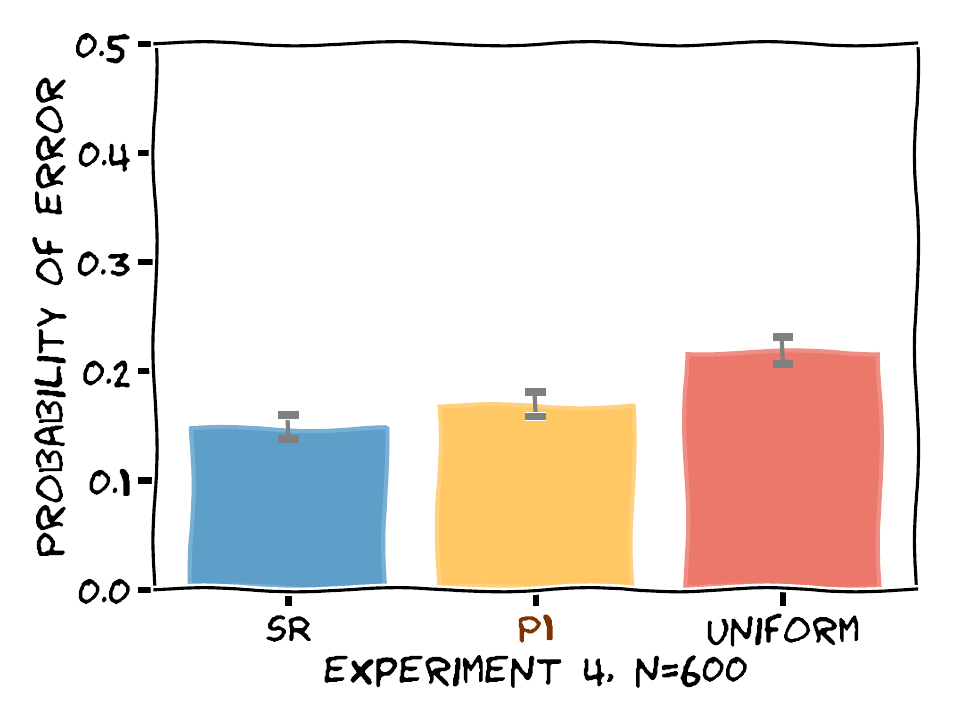}
			\includegraphics[width =.46\textwidth]{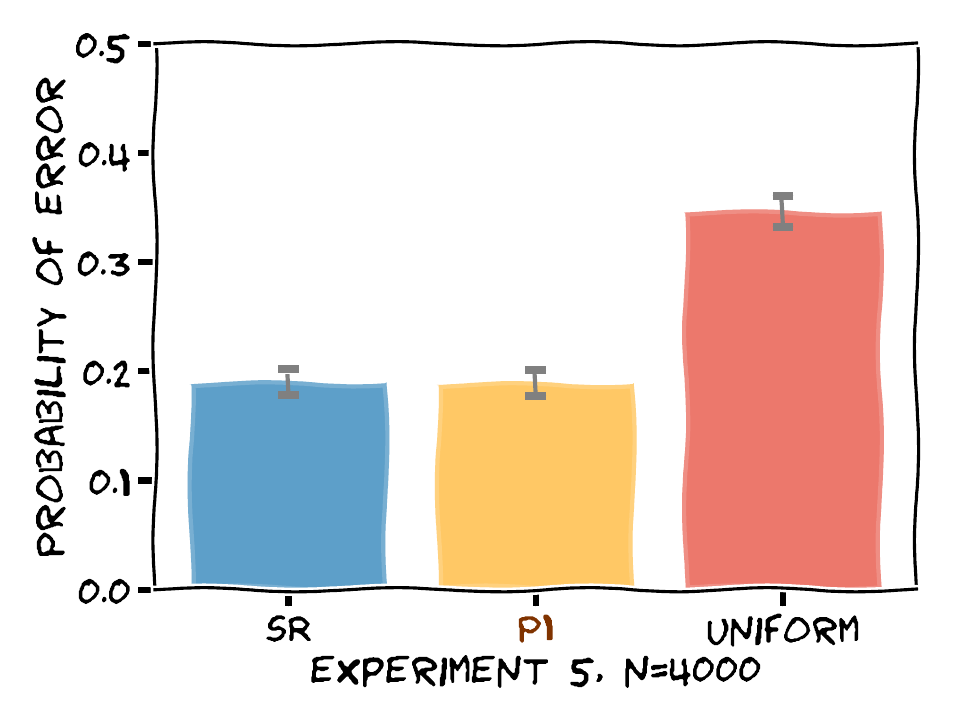}$\qquad$\includegraphics[width =.46\textwidth]{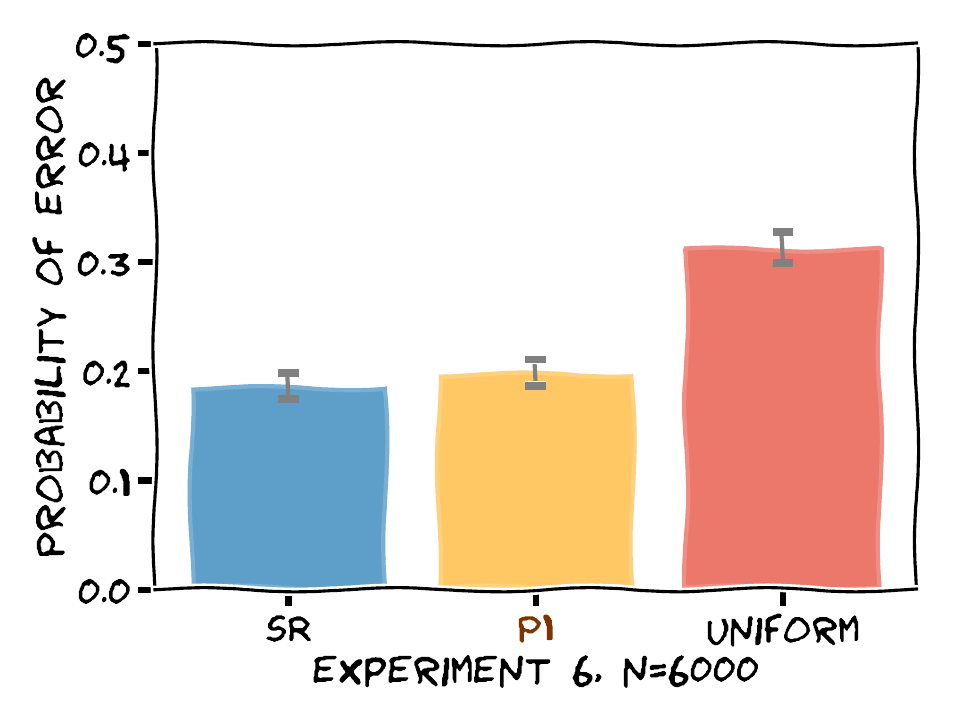}
				\includegraphics[width =.46\textwidth]{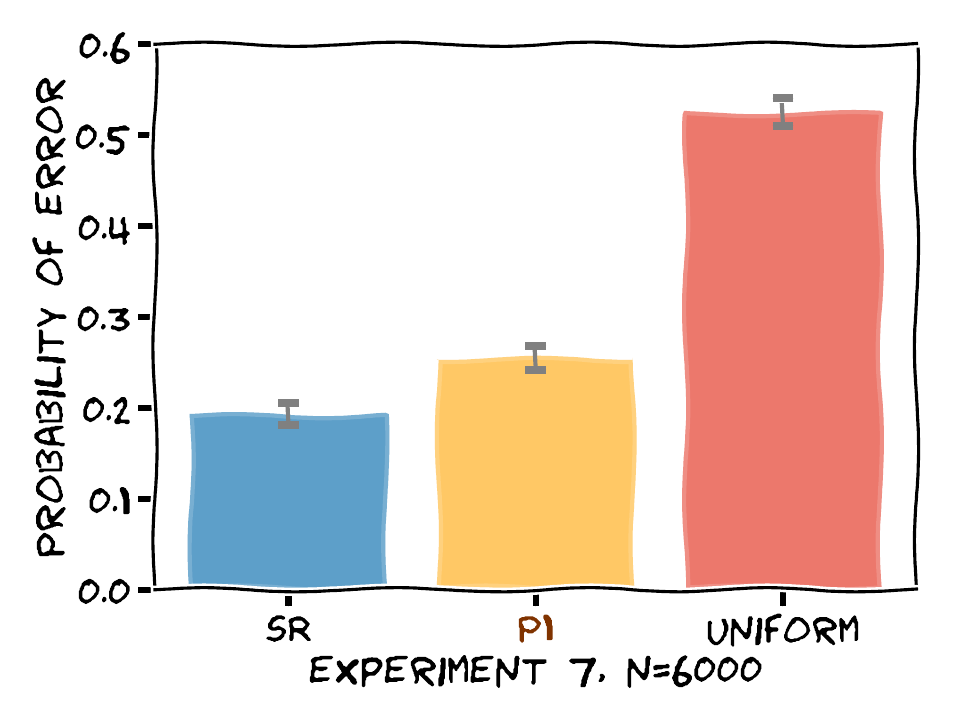}$\qquad$\includegraphics[width =.46\textwidth]{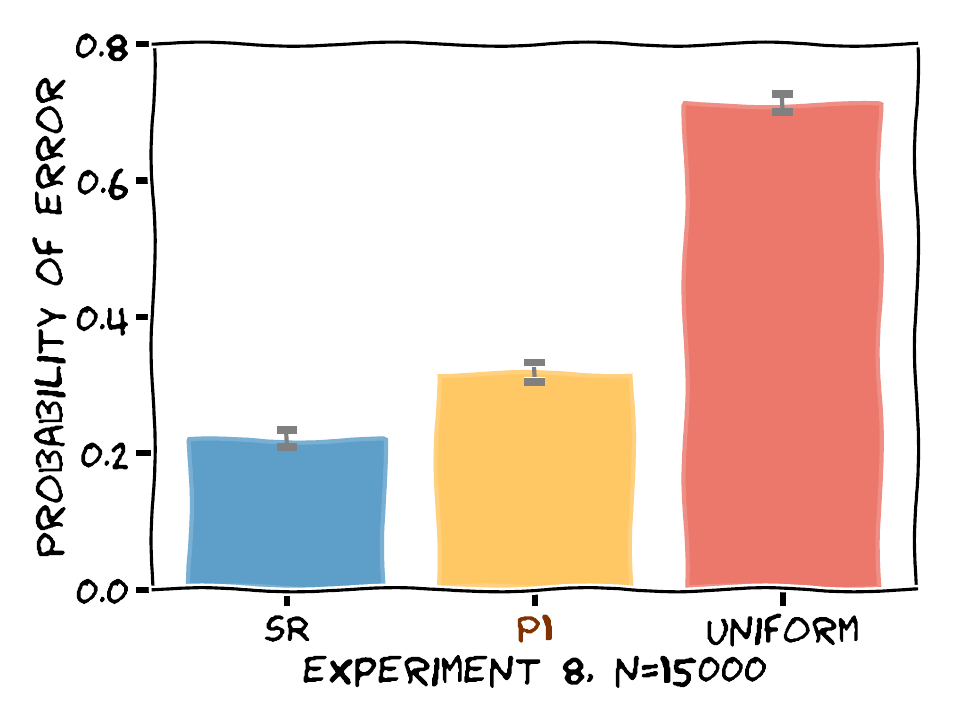}
				\caption{Probabilities of error of \SR, \Pone, and the static uniform allocation.}\label{exp}
\end{figure}

%% file: appUpperUniform.tex
% !TEX root = Main.tex 
\section{Upper bound on the probability of error of \RULE{} in the 
	general adversarial case} %(Theorem~\ref{th:UPARU}) }
\label{app:proofupadPRO}
\setcounter{scratchcounter}{\value{theorem}}\setcounter{theorem}{\the\numexpr\getrefnumber{th:UPARU}-1}
\tata*
\setcounter{theorem}{\the\numexpr\value{scratchcounter}}
\begin{proof}%[Theorem~\ref{th:UPARU}]
	%
	%To avoid technicalities and ease the reading, we henceforth assume that
	%    $\nArms$ is a power of $2$. %$K=\log(\TphaseNumber)$.
	We assume that the arms are sorted by their means 
	so that arm $1$ is the best.    
	Given the adversary gain vector $\gainVector$, the 
	random variables $\estGainVector_{k,t}$ are conditionally 
	independent from each other for all $k\in\setArms$ and 
	$t\in[\timeHorizon]$ as we have $\learnerDist_{k,t} 
	\triangleq 1/\nArms$, fixed for all 
	$k\in\setArms$ and $t\in[\timeHorizon]$.
	We have
	\begin{align*}
	\ProErr_{\advPro(\gainVector)}(\timeHorizon)
	%    ~=~
	%        \ProErr_{\gainVector}(\timeHorizon)
	&\triangleq
	\Pro\left(  \recomArm_\timeHorizon \neq \bestArm_\gainVector\right)
	=    
	\Pro\left(\exists k\in\setArmsmo :
	\estCumulGainVector_{1,\timeHorizon} 
	\geq\estCumulGainVector_{k,\timeHorizon}
	\middle|\,\gainVector
	\right)\\
	&    \leq
	\Pro\left(\exists k\in\setArmsmo :\estCumulGainVector_{k,\timeHorizon} -\cumulGain_{k}\geq\frac{\timeHorizon\gap^\gainVector_{k}}{2} 
	\text{\ \ or\ \ }
	\estCumulGainVector_{1,\timeHorizon} -\cumulGain_{1}
	\leq\frac{\timeHorizon\gap^\gainVector_{1}}{2} 
	\middle|\,\gainVector
	\right)    \\
	&    \leq  
	\Pro\left(\estCumulGainVector_{1,\timeHorizon} -\cumulGain_{1}
	\leq\frac{\timeHorizon\gap^\gainVector_{1}}{2} \middle|\,\gainVector
	\right)    +
	\sum_{k=2}^{\nArms} 
	\Pro\left(\estCumulGainVector_{k,\timeHorizon} -\cumulGain_{k}
	\geq\frac{\timeHorizon\gap^\gainVector_{k}}{2} \middle|\, \gainVector
	\right)    \\
	&    \stackrel{\textbf{(a)}}{\leq} \sum_{k=1}^{\nArms} 
	\exp\left(-\frac{3(\gap^\gainVector_{k})^2\timeHorizon}
	{28\nArms}\right)\\
	&\leq \nArms  \exp\left(-\frac{3(\gap^\gainVector_{1})^2\timeHorizon}
	{28\nArms}\right)\!\CommaBin
	\end{align*}
	where \textbf{(a)} is using Bernstein's inequality 
	applied to the sum of the random variables with mean zero 
	that are $\estGainVector_{k,t}-\gainVector_{k,t}$ for 
	which we have the following  bounds on  the variance and range.
	The variance  of $ \estGainVector_{k,t}$ is the 
	variance of a scaled Bernoulli random variable 
	with parameter $1/\nArms$ and range 
	$[0, \nArms\gainVector_{k,t}]$,
	therefore we have 
	$|\estGainVector_{k,t}-\gainVector_{k,t} |
	\leq \nArms$, 
	and 
	$\var_{\estGainVector_{k,t}-\gainVector_{k,t}}
	=    \var_{\estGainVector_{k,t}}=
	1/\nArms(1-1/\nArms)\nArms^2\gainVector_{k,t}^2\leq \nArms$.
	We also use $\gap^\gainVector_k\leq 1$, for all 
	$k\in\setArms$ so that we have for all $k\in\setArmsmo$ 
	(and a symmetrical argument for $k=1$), 
	\begin{align*}
	\Pro\left(\estCumulGainVector_{k,\timeHorizon} - 
	\estCumulGainVector_{k,\timeHorizon} -\timeHorizon
	\gap^\frac{\gainVector_k}{2}\leq -\frac{\timeHorizon\gap^\gainVector_k}{2}\right) 
	&\leq
	\exp\left(-\frac{(\gap^\gainVector_k/2)^2\timeHorizon^2/2}
	{\sum_{t=1}^{\timeHorizon}\var_{\estGainVector_{k,t}-\gainVector_{k,t}}+\frac{1}{6} \nArms \gap^\gainVector_k\timeHorizon }\right)\\
	%&\qquad\qquad \qquad\qquad
	&\leq 
	\exp\left(-\frac{(\gap^\gainVector_k)^2\timeHorizon^2/8}
	{\timeHorizon\nArms+\frac{1}{6} \nArms \timeHorizon }\right)\\
	&\leq 
	\exp\left(-\frac{(\gap^\gainVector_k)^2\timeHorizon^2/8}
	{\timeHorizon\nArms+\frac{1}{6} \nArms \timeHorizon }\right)\\
	&=
	\exp\left(-\frac{3(\gap^\gainVector_k)^2\timeHorizon}
	{28\nArms}\right)\!\cdot
	\end{align*}\!\vskip -0.8cm\end{proof}%
%    }
%
%
%

%% file: appTechLem.tex
%
% !TEX root = Main.tex 
\section{Change of measure}
\label{app:changeMeasureLem}
\begin{lemma}[\textcolor{titleTh}{Change of measure}]\label{l:changeMeas}
	Let $\earlyPhase$ be a phase, i.e., a subset of rounds of 
	the game, $\earlyPhase\subset [\timeHorizon]$. Let us 
	consider two  bandit problems. In these two problems, at all rounds $t\in[\timeHorizon]$, and for all arms $k\in\setArms$, the rewards $\gainVector_{k,t}$ are sampled in a stochastic learner-oblivious independent fashion from a distribution $\nu_{k,t}$.
	We consider problems that for all rounds of the game only differ in the rewards of one arm $\refarm$ 
	during phase~$\earlyPhase$. For all $t\in\earlyPhase$, 
	in the two problems, the distribution of  arm $\refarm$ is a Bernoulli 
	(independent  of all the other events in the bandit game for any round $t\in\earlyPhase$) with 
	means $\mean^2_{\refarm}(t)\triangleq\mean^2_{\refarm}\triangleq 1/2+\gap$ 
	and $\mean^1_{\refarm}(t)\triangleq\mean^1_{\refarm}\triangleq1/2-\gap'$ respectively for the two problems, 
	where $1/8>\gap'\geq \gap \geq 0$. 
		The expectation and probability with respect to the
	learner and the samples of this problem $p$ are denoted by
	$\Exp_{p}$ and $\Pro_{p}$.
	Then, if we have an event $W$
	depending only on $\gainVector$ generated by the problems and the actions of the learner $\pulledArm_{[\timeHorizon]}$
	 %which includes an upper 
	when the number of rounds  arm $\refarm$ is 
	pulled during phase $\earlyPhase$ is upper-bounded by $B$, we have
	% on the number of times the arm $\refarm$ is 
	%pulled during phase $\earlyPhase$, we have 
	%
	\[
	\Pro_{2}\left(W \right)
	\geq
	\frac{\Pro_{1}\left(W  \right)}{8}
	\exp\left(-16\left(\gap'\right)^2B\right)\!.
	\]
\end{lemma}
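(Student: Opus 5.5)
The plan is a standard likelihood-ratio change of measure on the learner's full history, following the lower bounds of \cite{Audibert10BA} and \cite{Carpentier16TB}. Let $T_{\refarm}$ be the number of pulls of $\refarm$ during $\earlyPhase$; by hypothesis $T_{\refarm}\leq B$ on $W$. I would work on the extended probability space containing the learner's internal randomness, all rewards $\gainVector$, and the actions $\pulledArm_{[\timeHorizon]}$.

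Under the two problems, the joint law of $(\gainVector,\pulledArm_{[\timeHorizon]})$ differs only in the factors for $\gainVector_{\refarm,t}$ with $t\in\earlyPhase$. Because $W$ depends on the full $\gainVector$, including unobserved rewards, a naive likelihood ratio would involve all $|\earlyPhase|$ samples of $\refarm$, not just $B$ of them. To avoid this, I would couple the two problems so that they share every reward of $\refarm$ in $\earlyPhase$ that the learner does not observe.

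The coupling goes as follows. For each $t\in\earlyPhase$, draw $\gainVector_{\refarm,t}$ through a two-stage construction in which a common component is shared between the problems and a problem-specific part has $\mathrm{Bernoulli}(1/2\pm\cdot)$ marginals. A cleaner alternative is to reveal $\gainVector_{\refarm,t}$ ``lazily'': the learner's filtration contains only observed rewards, and the unobserved rewards are resampled from each problem's own law at the end. Under this construction, the density ratio $d\Pro_2/d\Pro_1$ restricted to the learner-observed history is
\[
\Lambda=\prod_{t\in\earlyPhase:\pulledArm_t=\refarm}\frac{\mean^2_{\refarm}{}^{\gainVector_{\refarm,t}}(1-\mean^2_{\refarm})^{1-\gainVector_{\refarm,t}}}{\mean^1_{\refarm}{}^{\gainVector_{\refarm,t}}(1-\mean^1_{\refarm})^{1-\gainVector_{\refarm,t}}}.
\]
I expect the handling of the unobserved rewards to be the main obstacle. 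If $W$ truly depends on the unobserved $\gainVector_{\refarm,t}$, the two problems assign it different conditional laws, and this must be absorbed by the constant $1/8$. My first attempt would be to note that in the intended applications $W$ depends on the unobserved rewards of $\refarm$ only through events that hold with probability at least $1/2$ under both problems, for instance concentration of $\sum_{t\in\earlyPhase}\gainVector_{\refarm,t}$ around its mean. I would then intersect $W$ with such a typical event, paying a factor $1/2$.

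Next, bound $\Lambda$ from below on a high-probability subset of $W$. Each factor is at least $\frac{1/2+\gap}{1/2-\gap'}\geq 1$ when the reward is $1$, and at least $\frac{1/2-\gap}{1/2+\gap'}\geq 1-4\gap'$ when it is $0$. This crude bound gives $\Lambda\geq(1-4\gap')^{B}\geq\exp(-8\gap' B)$, which is linear rather than quadratic in $\gap'$. To obtain the $(\gap')^2$ rate, I would instead write $\log\Lambda=\sum(\text{terms})$ over the at most $B$ observed pulls. This sum has mean $\geq -\mathrm{KL}\cdot T_{\refarm}$ under $\Pro_1$, with $\mathrm{KL}(\mathrm{Ber}(1/2-\gap'),\mathrm{Ber}(1/2+\gap))\leq 8(\gap')^2$ since $\gap\le\gap'<1/8$. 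Its fluctuations are a martingale whose increments are of order $\gap'$. Applying Markov's or Chebyshev's inequality to the negative deviation, the event $\{\log\Lambda\geq-16(\gap')^2B\}$, restricted to $W$, keeps $\Pro_1$-mass at least $\Pro_1(W)/4$ up to a constant; this deviation step, handled e.g.\ as in \cite{Audibert10BA}, also needs care. Finally, $\Pro_2(W)\geq\Exp_1[\Lambda\indicator{W\cap\{\log\Lambda\ge-16(\gap')^2B\}}]$, which combined with the factor $1/2$ from the typical event gives $\Pro_1(W)\exp(-16(\gap')^2B)/8$.
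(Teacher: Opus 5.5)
\textbf{There is a genuine gap, and the paper's proof has the same one.} It sits in the two places where you turn a high-probability statement into a \emph{multiplicative} loss: ``intersect $W$ with such a typical event, paying a factor $1/2$'', and ``the event $\{\log\Lambda\geq-16(\gap')^2B\}$, restricted to $W$, keeps $\Pro_1$-mass at least $\Pro_1(W)/4$''. Markov, Chebyshev or a median bound give $\Pro_1(E^c)\le\delta$, hence only $\Pro_1(W\cap E)\geq\Pro_1(W)-\delta$. For an arbitrary $W$, nothing prevents $W$ from lying inside $E^c$. (Separately, when $(\gap')^2B$ is of order one or smaller, the fluctuations of $\log\Lambda$ are of order $\gap'\sqrt B$ and exceed $(\gap')^2B$, so your threshold would need an additive constant.)

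No argument can close this step, because the inequality is false for general $W$. Take $\gap=\gap'=1/10$, $B=10$, a learner that pulls $\refarm$ at exactly ten fixed rounds of $\earlyPhase$, and let $W$ be the event that all ten observed rewards of $\refarm$ are $0$. Then $\Pro_2(W)/\Pro_1(W)=(0.4/0.6)^{10}\approx0.017$, while the claimed bound is $e^{-1.6}/8\approx0.025$. Likewise, take a learner that never pulls $\refarm$ ($B=0$) and $W=\{\sum_{t\in\earlyPhase}\gainVector_{\refarm,t}\le|\earlyPhase|/2\}$. Then $\Pro_1(W)\to1$ while $\Pro_2(W)\to0$ as $|\earlyPhase|$ grows. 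So your worry about unobserved rewards is well founded and cannot be absorbed into the constant.

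The paper's proof follows your skeleton, except that it gets the $(\gap')^2$ rate pointwise rather than through a KL/martingale estimate. It writes the likelihood ratio in terms of $\sumlemma$ and $\pullearlyPhase$ only, and on $\{\sumlemma\geq\lfloor\pullearlyPhase(1/2-\gap')\rfloor\}$ lower-bounds it by $(1-4\gap')^{2\gap'B+2}$. It then makes exactly your step: it goes from $\Pro_1(W\cap\{\sumlemma\geq\lfloor\pullearlyPhase(1/2-\gap')\rfloor\})$ to $\Pro_1(W)/2$ using only the unconditional median bound. It also implicitly assumes $W$ is measurable with respect to the learner's history; otherwise the ratio would involve all $|\earlyPhase|$ rewards of $\refarm$. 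And its claim that $\sumlemma$ is binomial given $\pullearlyPhase$ fails for adaptive learners, which your martingale view handles correctly.

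\textbf{What can be proved.} What is true, and all that Theorems~\ref{th:LOWadPRO} and~\ref{th:lowBOB} need, is an additive version for history-measurable $W$. There $W=\{\recomArm_\timeHorizon=1\}\cap\{\pullsNumber_{\refarm}\le\cdot\}$, and $\Pro_1(W)$ is at least $1/4$ minus the error on problem~$1$.
\begin{itemize}
\item The observed rewards of $\refarm$ are i.i.d.\ by optional skipping. Kolmogorov's maximal inequality on their partial sums shows that $E=\{\sumlemma\geq\pullearlyPhase(1/2-\gap')-2\sqrt B\}$ satisfies $\Pro_1(E^c\cap\{\pullearlyPhase\le B\})\le1/16$.
\item On $E\cap\{\pullearlyPhase\le B\}$ the ratio is at least $(1-4\gap')^{2\gap'B+4\sqrt B}\ge e^{-32(\gap')^2B-16}$, using $2ab\le a^2+b^2$.
\item Hence $\Pro_2(W)\ge e^{-16}e^{-32(\gap')^2B}\left(\Pro_1(W)-1/16\right)$, which yields both lower bounds with different constants.
\end{itemize}
Your lazy-revelation set-up is the right frame. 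Restrict $W$ to history events and replace your multiplicative conclusions by this additive one.
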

\begin{proof}
	This lemma is a slight extension of  Lemma 12 
	by~\cite{Auer16AA}
	which in turn is 
	based on the result of~\cite{Mannor04SC}. In the case of \cite{Auer16AA}, $\gap' \triangleq \gap$.
	
	For $p\in\{1,2\}$,
	let $\Pro_{p}$ 	and $\Exp_{p}$ 
	denote the probability and expectation with respect to 
	the  bandit
	problem $p$ defined above. 
	Let $\sumlemma\triangleq\sum_{t\in\earlyPhase} \gainVector_{\refarm,t}\indicator{\pulledArm_t=\refarm}$
		be the sum of rewards received when 
	playing arm $\refarm$ in phase
	$\earlyPhase$. Conditioned on the number of pulls of 
	$\refarm$ during phase $\earlyPhase$ that we denote 
	by  $\pullearlyPhase$, $\sumlemma$	
	is a binomial random variable with parameters
	$\pullearlyPhase$	and $\mean^p_{\refarm}$ in problem $p$.
	Hence, by \cite{Kaas80MM},
	$$
	\Pro_{1}\left(  \sumlemma \leq   \lfloor\pullearlyPhase (1/2-\gap')\rfloor
	\right) \leq \frac12\cdot
	$$
	
	\noindent
	Let $w$ denote  a particular realization of rewards
	$\gainVector_{k,t}$, $k\in\setArms$, $t\in \earlyPhase$, 
	and learner choices
	$\left\{\pulledArm_t\right\}_{t\in \earlyPhase}$. For any realization
	$w$,
	the probabilities
	$\Pro_{1}(w)$ 
	and
	$\Pro_{2}(w)$
	are related by
	\[
	\Pro_{2}(w)
	=
	\Pro_{1}(w)\frac{(1/2+\gap)^{\sumlemma(w)} (1/2-\gap)^{\pullearlyPhase(w)-\sumlemma(w)}
	}{
		(1/2-\gap')^{\sumlemma(w)} (1/2+\gap')^{\pullearlyPhase(w)-\sumlemma(w)}}\cdot
	\]
	Also, since $\frac{1/2+\gap}{1/2-\gap'}\geq 1$ and both 
	the numerator and denominator of the previous fraction are 
	positive, then the function $x\mapsto\frac{1/2+\gap+x}{1/2-\gap'+x}$ 
	is non-increasing for $x\geq0$. Therefore, we have  
	\[
	\frac{1/2+\gap}{1/2-\gap'}
	\geq
	 \frac{1/2+\gap+(\gap'-\gap)/2}{1/2-\gap'+(\gap'-\gap)/2}
	 =
	  \frac{1/2+(\gap'+\gap)/2}{1/2-(\gap'+\gap)/2}\cdot
	\]
	Similarly, since $\frac{1/2-\gap}{1/2+\gap'}\leq 1$ 
	and both the numerator and denominator of the previous 
	fraction are positive, then the function 
	$x\mapsto\frac{1/2-\gap-x}{1/2+\gap'-x}$ is non-increasing 
	for $1/2-\gap\geq x\geq0$. Therefore, choosing 
	$x=(\gap'-\gap)/2$, which verifies $x=(\gap'-\gap)/2\leq 1/2-\gap$ we have  that
	\[
	\frac{1/2-\gap}{1/2+\gap'}
	\geq
	 \frac{1/2-\gap-(\gap'-\gap)/2}{1/2+\gap'-(\gap'-\gap)/2}
	 =
	  \frac{1/2-(\gap'+\gap)/2}{1/2+(\gap'+\gap)/2}\cdot
	\]
	Therefore, we have 
	\begin{align*}
	\Pro_{2}(w)
	&\geq
	\Pro_{1}(w)
	\frac{(1+\gap+\gap')^{\sumlemma(w)} (1-\gap-\gap')^{\pullearlyPhase(w)-\sumlemma(w)}
	}{
		(1-\gap'-\gap)^{\sumlemma(w)} (1+\gap'+\gap)^{\pullearlyPhase(w)-\sumlemma(w)}}\\
	&=
	\Pro_{1}(w)
	\left(\frac{1-\gap-\gap'}{1+\gap+\gap'}\right)
	^{\pullearlyPhase(w)-2\sumlemma(w)}
	\geq
	\Pro_{1}(w)
	\left(\frac{1-2\gap'}{1+2\gap'}\right)^{\pullearlyPhase(w)
		-2\sumlemma(w)}\hspace{-4.5em}\cdot
	\end{align*}
	If 	$ \sumlemma(w) \geq \lfloor\pullearlyPhase(w) (1/2-\gap')\rfloor$, then
	$\Pro_{2}(w)
	\geq
	\Pro_{1}(w)
	\left(\frac{1-2\gap'}{1+2\gap'}\right)^{2\gap'\pullearlyPhase(w)+2}\hspace{-4em}\cdot$
	
	\noindent
	Hence,
	\begin{align*}
	\Pro_{2}\left(W \right)
	&\geq
	\Pro_{2}\left(W  \cap \sumlemma \geq \lfloor\pullearlyPhase (1-\gap')\rfloor\right)\\
	&\geq
	\Pro_{1}\left(W  \cap \sumlemma \geq \lfloor\pullearlyPhase (1-\gap')\rfloor\right)
	\left(\frac{1-2\gap'}{1+2\gap'}\right)^{2\gap'B+2}\\
	&\geq
	\frac{\Pro_{1}\left(W  \right)}{2}
	\left(\frac{1-2\gap'}{1+2\gap'}\right)^{2\gap'B+2}\\
	&\geq
	\frac{\Pro_{1}\left(W  \right)}{2}
	\left(1-4\gap'\right)^{2\gap'B+2}\\
	&\stackrel{\textbf{(a)}}{\geq}
	\frac{\Pro_{1}\left(W  \right)}{8}
	\exp\left(-16(\gap')^2B\right)\!,
	\end{align*}
	where \textbf{(a)} is because for $0\leq x\leq 1/2$, $1-x\geq e^{-2x}$ and $\gap'\leq 1/8$.
\end{proof}%

%% file: appLOWAd.tex
% !TEX root = Main.tex
\section{Lower bound in the general adversarial setting} %(Theorem~\ref{th:LOWadPRO}) }
\label{app:prooflowadPRO}
	\setcounter{scratchcounter}{\value{theorem}}\setcounter{theorem}{\the\numexpr\getrefnumber{th:LOWadPRO}-1}
\toto*
\setcounter{theorem}{\the\numexpr\value{scratchcounter}}
\begin{proof}%[Theorem~\ref{th:LOWadPRO}]
	Without loss of generality, we assume that $n$ is even.
	Let $\pullsNumber_{k}(t)$ be the number of times that arm 
	 $k$ has been pulled by the end of round $t$.
	Let us consider any fixed learner and a base problem, denoted \basePro{}, with $\nu_1,\ldots, \nu_\nArms$, 
	Bernoulli distributions with means $\mean^{\basePro}_1,\ldots, \mean^{\basePro}_\nArms$ 
	such that $\mean^{\basePro}_1\triangleq 1/2$, and for all $k\in[2,\nArms], \mean^{\basePro}_k\triangleq1/2-\gap_k$ 
	(the gaps specified in the claim of the theorem).
		The expectation and probability with respect to the
	learner and the samples of this problem are denoted
	$\Exp_{\basePro}$ and $\Pro_{\basePro}$.% with $\gap_k\geq 0$ for .
	
	We consider  an early  period of the game $t=1, \dots, \timeHorizon/2$,\
	that we denote as $\earlyPhase\triangleq[1: \timeHorizon/2]$. %	
	The proof could work with any set of $\timeHorizon/2$ 
	rounds that are not necessarily the first of the game 
	but we stick to the early ones to ease the notation.
	Now we want to identify an arm that the learner will 
	not pull
	very much in the base problem \basePro{}
	during the period $\earlyPhase$.
	For our learner, during the period~$\earlyPhase$, in 
	any case at least one \emph{fixed} arm  among $\setArmsmo$ is pulled 
	in expectation
	less or equal to   
	$ \timeHorizon/\nArms $ as otherwise the total 
	number of pulls
	is in expectation strictly larger than $(\nArms-1)\times\timeHorizon/\nArms \geq \timeHorizon/2$. 
	We denote this arm as~$\refarm$. By this construction we have that
	\begin{equation}\label{eq:notSoMuchPullsAD}
	\Exp_{\basePro}\left[\pullsNumber_{\refarm}\left(\frac{n}{2}\right) \right]
%		\pullsNumber_{\refarm}(n_i) 
	\leq  \frac{\timeHorizon}{\nArms}\cdot
	\end{equation}
	Now we construct the two adversarial problems $\advPro_1$ and  $\advPro_2$.
	These adversarial problems sample  $\gainVector$ randomly.
	 During phase $\earlyPhase$, at rounds $t\in\earlyPhase$, 
	 for any arm $k$, $g_{k,t}$ is sampled from the Bernoulli 
	 distribution with means $\mu_{k}^{p}(t)$, $p\in\{1,2\}$ 
	 specifying the problem at hand.
	The expectation and probability with respect to the
	learner and the samples of this problem $p$ are denoted by
	$\Exp_{p}$ and~$\Pro_{p}$.

	For problem $\advPro_1$, the distributions of 
	the arms during phase $\earlyPhase$, $t\in\earlyPhase$, 
	are the same as in the \basePro{} 
	$\mu^{1}_{k}(t) \triangleq \mu^{\basePro}_{k} $ 
	for all $k\in \setArms$.
	After  phase $\earlyPhase$, for $t>  \timeHorizon/2$, 
	problem $\advPro_1$ assigns deterministically the following gains,
	 for all $k\neq \refarm$, for $t>  \timeHorizon/2$, $\gainVector_{k,t}=0$ 
	 and $\gainVector_{\refarm,t}=\gap_{\refarm}-\gap_1$.
	Therefore, the final expected cumulative gain of arm 
	 $\refarm$ is  \[\Exp_1\left[\cumulGain_{\refarm,n}\right]
	= \frac{\timeHorizon}{2}\mean^{\basePro}_{\refarm}+
	\frac{\timeHorizon}{2}\left(\gap_{\refarm}-\gap_1\right)=
	\frac{\timeHorizon}{2}\left(\mean^{\basePro}_1-\gap_1\right)\] and $\Exp_1\left[\cumulGain_{k,n}\right]= 
	\timeHorizon\mean^{\basePro}_k/2$ for all $k\neq\refarm$.
	The problem $\advPro_2$ only differs from the 
	previous one in its stochastic first part 
	(the deterministic second part is the same) and only for arm $\refarm$ where, 
	for $t\leq  \timeHorizon/2$, where $\mu_{\refarm}^{2}(t)\triangleq\mu_{\refarm}^{\basePro}+2\gap_1$.
	Therefore, the final expected cumulative gain of arm $\refarm$  
	is  \[\Exp_2\left[\cumulGain_{\refarm,n}\right]
	= \frac{\timeHorizon}{2}\left(\mean^{\basePro}_1+\gap_1\right).\]

	\noindent
	Let us consider the events
	\begin{align*}
	\eventc_1 &\triangleq \left\{ \forall k\in\setArmsmo,~  \cumulGain_{k}-
	\Exp_1\left[\cumulGain_{k}\right] \leq  \frac{\timeHorizon \gap_1}{8} \right\} 
	\cup \left\{  \Exp_1\left[\cumulGain_{1}\right]-\cumulGain_{1} 
	\leq  \frac{\timeHorizon \gap_1}{8} \right\} \text{\ and}\\
%	\]
	%
%	
	% 
%		\[
	\eventc_2 &\triangleq \left\{ \forall k\in\setArms, k\neq \refarm,~   
	\cumulGain_{k}-\Exp_2\left[\cumulGain_{k}\right] \leq  \frac{\timeHorizon \gap_1 }{8}
	\right\} \cup \left\{  \Exp_2\left[
	\cumulGain_{\refarm}\right]-\cumulGain_{\refarm} \leq  \frac{\timeHorizon \gap_1 }{8}\right\}\!\cdot
	\end{align*}
	Using a standard Hoeffding argument with a 
	union bound we have that 
	\[
	\Pro_1\left(\eventc_1\right) \geq 1-\nArms\exp\left(
	-2\left(\frac{\gap_1}{8}\right)^2\frac{\timeHorizon}{2}\right)
	\]
	and the same result holds for $\Pro_2\left(\eventc_2\right)$.
	Also note that  in the  problem $p$, for 
	any $\gainVector$ that is compatible with 
	$\eventc_p$, the gaps associated to $\gainVector$ verify
	$\gap_k/2\leq\gap^{\gainVector}_k\leq 3 \gap_k$ for all $k\neq \refarm$
	and $\gap_1/2\leq\gap^{\gainVector}_{\refarm}\leq2\gap_1$
which gives $4\complexityUnif\geq\complexityUnif(\gainVector)\geq
 \complexityUnif/9$. Also on $\gainVector\in\eventc_1$, 
 we have $\bestArm_\gainVector=1$ and  on $\gainVector\in\eventc_2$, we have $\bestArm_\gainVector=\refarm$.

	Now we prove the following relation 
	between the probability of error 
	in the  problem $\advPro_2$, $\ProErr_{2}(\timeHorizon)$, and the
	probability of successful identification in the  problem $\advPro_1$,
	$1-\ProErr_{1}(\timeHorizon)$,
	where in this case, for problem $p$, $\ProErr_{p}(\timeHorizon)$ 
	is defined here as $\ProErr_{p}(\timeHorizon) \triangleq
	 \Pro_{\gainVector\sim \advPro_p}\left(\recomArm_n \neq \bestArm_{\gainVector}\right)$,
		\begin{equation}\label{eq:mainlowGenAd}
	\ProErr_{2}(\timeHorizon)
	+\nArms\exp\left(-\frac{\gap^2_1\timeHorizon}{64}\right)
	\geq
	\frac{1}{8}\left(\frac{1}{4}-\ProErr_{1}(\timeHorizon) \right)
	\exp\left(-\frac{32\gap^2_{1}\timeHorizon}{\nArms}\right)\!\cdot
	\end{equation}
	We have that
	\begin{align*}
	\ProErr_{2}(\timeHorizon)
	+\nArms\exp\left(-\frac{\gap^2_1\timeHorizon}{64}\right)
	&\stackrel{\textbf{(a)}}{\geq}
	\Pro_{2}\left(\recomArm_\timeHorizon\neq \refarm\right)\\
	&\stackrel{\textbf{(b)}}{\geq}
	\Pro_{2}\left(\recomArm_\timeHorizon=1\right)\\
	&\stackrel{\textcolor{white}{\textbf{(b)}}}{\geq}
	\Pro_{2}\left(\recomArm_\timeHorizon=1 \cap \pullsNumber_{\refarm} \left(\frac{\timeHorizon}{2}\right) 
	\leq \frac{2\timeHorizon}{\nArms} \right)\\
	&\stackrel{\textbf{(c)}}{\geq}
	\Pro_{1}\left(\recomArm_\timeHorizon=1 \cap \pullsNumber_{\refarm} \left(\frac{\timeHorizon}{2}\right) 
	\leq \frac{2\timeHorizon}{\nArms} \right)\frac{1}{8}\exp\left(-\frac{16\gap^2_{1}2\timeHorizon}{\nArms}\right)\\
	&\stackrel{\textbf{(d)}}{\geq}
	\left(\Pro_{1}\left(\recomArm_\timeHorizon=1 \right)-\frac{1}{2}\right)\frac{1}{8}
	\exp\left(-\frac{32\gap^2_{1}\timeHorizon}{\nArms}\right)\\
	&\stackrel{\textbf{(a)}}{\geq}
	\frac{1}{8}\left(\frac{1}{4}-\ProErr_{1}(\timeHorizon) \right)
	\exp\left(-\frac{32\gap^2_{1}\timeHorizon}{\nArms}\right)\!\CommaBin
	\end{align*}
	where 
	\textbf{(a)}
	is because first, as computed above, 
	for problem $p$,
	\[\Pro_{ p}\left(\bestArm_{\gainVector}=1\right)\geq \Pro(\eventc_p) 
	\geq 1-\nArms\exp\left(-\frac{\gap^2_1\timeHorizon}{64}\right)\!\CommaBin\] 
	since; if we denote $1^\star=1$ (the arm with the highest mean in $p=1$)
	and $2^\star=\refarm$ (the arm with the highest mean in  $p=2$)
	and in general $p^\star$, we have that
	\begin{align*}	
	\ProErr_{p}(\timeHorizon)
	&=
	1- \Pro_{p}\left(\recomArm_\timeHorizon=\bestArm_{\gainVector}\right)
	=
	1- \Pro_{p}\left(\recomArm_\timeHorizon=\bestArm_{\gainVector} 
	\cap \bestArm_{\gainVector}=p^\star\right)
	- \Pro_{p}\left(\recomArm_\timeHorizon=\bestArm_{\gainVector}
	\cap\bestArm_{\gainVector}\neq p^\star\right)\\
&	\geq
	1-
	\Pro_{p}\left(\recomArm_\timeHorizon=p^\star\right)
	- \Pro_{p}\left(\bestArm_{\gainVector}\neq p^\star\right)
		\geq
	1-
	\Pro_{p}\left(\recomArm_\timeHorizon=p^\star\right)
	- \nArms\exp\left(-\frac{\gap^2_1\timeHorizon}{64}\right)\!\cdot
	\end{align*}
	Moreover, 
	$\nArms\exp\left(-\gap^2_1\timeHorizon/64\right) \leq 1/4$ 
	by an assumption of the theorem.
	Next, \textbf{(b)}  is because we have $1\neq\refarm$ 
	by construction and
	\textbf{(c)} uses the change-of-measure 
	argument from Lemma~\ref{l:changeMeas}.
	Finally, \textbf{(d)} is because 
	\begin{align*}
	\Pro_{1}\left(\recomArm_\timeHorizon=1 \right)
	&=
	\Pro_{1}\left(\recomArm_\timeHorizon=1 \cap \pullsNumber_{\refarm} \left(\frac{\timeHorizon}{2}\right) 
	\leq \frac{2\timeHorizon}{\nArms} \right)
	+
	\Pro_{1}\left(\recomArm_\timeHorizon=1 \cap 
	\pullsNumber_{\refarm} \left(\frac{\timeHorizon}{2}\right) 
	> \frac{2\timeHorizon}{\nArms} \right)\\
	&\leq
	\Pro_{1}\left(\recomArm_\timeHorizon=1\cap \pullsNumber_{\refarm} \left(\frac{\timeHorizon}{2}\right) 
	\leq \frac{2\timeHorizon}{\nArms} \right)
	+
	\Pro_{1}\left( \pullsNumber_{\refarm} \left(\frac{\timeHorizon}{2}\right) 
	> \frac{2\timeHorizon}{\nArms} \right)\\
	&\leq
	\Pro_{1}\left(\recomArm_\timeHorizon=1 \cap \pullsNumber_{\refarm} \left(\frac{\timeHorizon}{2}\right) 
	\leq \frac{2\timeHorizon}{\nArms} \right)
	+
	\frac12\CommaBin
	\end{align*}
	where the last inequality combines Equation~\ref{eq:notSoMuchPullsAD} 
	and Markov's inequality.
	
%\smallskip	\noindent
\paragraph{Two cases}
After proving Equation~\ref{eq:mainlowGenAd}, we	finally 
	 distinguish two cases. \\
	%\bigskip 
	\noindent\textbf{Case 1) $\ProErr_{2}(\timeHorizon)>
		\exp\left(-32\gap^2_{1}\timeHorizon/\nArms\right)\!/64$:}
	%
%	We have
%	%
%	\begin{align*}
%	\frac{1}{64}\exp\left(-32\gap^2_{1}\timeHorizon/\nArms\right)
%	&~\stackrel{(a)}{\geq}~
%	4\nArms\exp\left(-\gap^2_{1}\timeHorizon/128\right)\exp\left(-32\gap^2_{1}\timeHorizon/\nArms\right)\\
%		&~\stackrel{(b)}{\geq}~
%4\nArms\exp\left(-\gap^2_{1}\timeHorizon/128\right)\exp\left(-\gap^2_{1}\timeHorizon/128\right)
%	\end{align*}
%	%
%	where \textbf{(a)} is because by assumption we have  $8\nArms\exp\left( -\timeHorizon\gap^2_1 /128\right)\leq 1/16$ and
%	\textbf{(b)} is because by assumption we have  $\nArms>32\times128$.
%	
%	Therefore 
%$\ProErr_{2}(\timeHorizon)>
%\frac{1}{64}\exp(-32\gap^2_{1}\timeHorizon/\nArms)$
%	then 	
%	$\ProErr_{2}(\timeHorizon)>
%	4\nArms\exp\left(-\frac{\gap^2_1\timeHorizon}{64}\right)$.
		We claim  there exists $\gainVector\in\eventc_2$,  
	as discussed above,  that its best arm is  $\refarm$ and it 
	possesses a complexity verifying $4\complexityUnif\geq\complexityUnif(\gainVector)\geq \complexityUnif/9$,  
	such that the lower bound holds. We prove the statement by contradiction.
	Indeed if no $\gainVector\in\eventc_2$ is such that 
$
\ProErr_{\gainVector}(n)
	\geq
	 \exp( -\timeHorizon32\gap^2_1 /\nArms)/128,
$	then we have 
	\begin{align*}
	\ProErr_{2}(\timeHorizon)
&=
\Pro_{\gainVector\sim \advPro_2}\left(\recomArm_n \neq \bestArm_{\gainVector}\right)\\
&\leq
\Pro\left(\recomArm_n \neq \bestArm_{\gainVector}\cap\gainVector\in\eventc_2\right)
+
\Pro\left(\recomArm_n \neq \bestArm_{\gainVector}\cap\gainVector\notin\eventc_2\right)\\
&\leq
\Pro\left(\recomArm_n \neq \bestArm_{\gainVector}\cap\gainVector\in\eventc_2\right)
+
\Pro\left(\gainVector\notin\eventc_2\right)\\
&\leq
 \frac{1}{128}\exp\left( -\frac{\timeHorizon32\gap^2_1 }{\nArms}\right)
+
\nArms\exp\left(-\frac{\gap^2_1\timeHorizon}{64}\right)\\
 &=
 \frac{1}{128}\exp\left( -\frac{\timeHorizon32\gap^2_1 }{\nArms}\right)
 +
\nArms\exp\left(-\frac{\gap^2_1\timeHorizon}{128}\right)\exp\left(-\frac{\gap^2_1\timeHorizon}{128}\right)\\
 &\stackrel{\textbf{(a)}}{\leq}
   \frac{1}{128}\exp\left( -\frac{\timeHorizon32\gap^2_1 }{\nArms}\right)
 +
 \frac{1}{128}\exp\left( -\frac{\timeHorizon\gap^2_1}{128}\right)\\
 &\stackrel{\textbf{(b)}}{\leq}
 \frac{1}{128}\exp\left( -\frac{\timeHorizon32\gap^2_1 }{\nArms}\right)
 +
 \frac{1}{128}\exp\left( -\frac{\timeHorizon32\gap^2_1 }{\nArms}\right)\!\CommaBin
\end{align*}
where \textbf{\textbf{(a)}} is because
$\nArms\exp\left( -\timeHorizon\gap^2_1 /128\right)\leq 1/128$ 
 and 
 \textbf{(b)}
because $\nArms>32\times128$
 by assumptions.
	This is a contradiction with the original assumption of that case.

	%%%%%%%%%%%%%%%%%%%%%%%%%%%%%%%%%%%%%
		\smallskip	
	\noindent\textbf{Case 2) $\ProErr_{2}(\timeHorizon)\leq
	\exp(-32\gap^2_{1}\timeHorizon/\nArms)/64$:}
We first want to prove that this assumption gives
	\begin{equation}\label{eq:lowbobcase2}
\ProErr_{1}(\timeHorizon)\geq \frac{1}{16}\CommaBin
	\end{equation}
	that we prove by first using Equation~\ref{eq:mainlowGenAd} which gives
	\[
		\frac{1}{64}\exp\left(-\frac{32\gap^2_{1}\timeHorizon}{\nArms}\right)
		+
		\nArms\exp\left(-\frac{\gap^2_1\timeHorizon}{64}\right)
		\geq
		\frac{1}{8}\left(\frac{1}{4}-\ProErr_{1}(\timeHorizon) 
		\right)\exp\left(-\frac{32\gap^2_{1}\timeHorizon}{\nArms}\right)\!\CommaBin
%\ProErr_{2}(\timeHorizon)\geq 1/8 - 64\nArms\exp\left(-\frac{\gap^2_1\timeHorizon}{8}\right) / \exp(-32\gap^2_{1}\timeHorizon/\nArms) 
%=1/8-64\nArms\exp\left( \timeHorizon\gap^2_1 (32/\nArms-1/8)  \right)
%\geq 1/8-64\nArms\exp\left( -\timeHorizon\gap^2_1 /16)  \right)
%\geq 1/16
	\]
and hence
	\[
\frac{1}{8}
+
8\nArms\frac{\exp\left(-\frac{\gap^2_1\timeHorizon}{64}\right)}
{\exp\left(-\frac{32\gap^2_{1}\timeHorizon}{\nArms}\right)}
\geq
\frac{1}{4}-\ProErr_{1}(\timeHorizon). 
\]
Therefore, we have 	
	\begin{align*}
\ProErr_{1}(\timeHorizon) 
&\geq
\frac{1}{8}
-
8\nArms\exp\left(\frac{32\gap^2_{1}\timeHorizon}{\nArms}-\frac{\gap^2_1\timeHorizon}{64}\right)\\
&\stackrel{\textbf{(a)}}{\geq}
\frac{1}{8}
-
8\nArms\exp\left(-\frac{\gap^2_1\timeHorizon}{128}\right)\\
&\stackrel{\textbf{(b)}}{\geq}
\frac{1}{16}\CommaBin
\end{align*}
where \textbf{(a)} is 
	because $\nArms>32\times128$ and
	\textbf{{(b)}} is because $\nArms\exp\left( -\timeHorizon\gap^2_1 /128\right)\leq 1/128$ by assumptions.	
	We now claim that there exists at least one~$\gainVector\in\eventc_1$   
	with best arm $1$  such that 
	$\ProErr_{\gainVector}(\timeHorizon)\geq 1/32$. 
	The proof is by contradiction. Let us assume that for all $\gainVector\in\eventc_1$, $\ProErr_{\gainVector}(\timeHorizon)< 1/32$. 
	Then, we have 
	\begin{align*}
\ProErr_{1}(\timeHorizon)
	&=
	\Pro_{\gainVector\sim\advPro_1}\left(\recomArm_n \neq 
	\bestArm_{\gainVector}\right)\\
	&=
	\Pro_{\gainVector\sim\advPro_1}\left(\recomArm_n \neq 
	\bestArm_{\gainVector}|\,\gainVector\in\eventc_1\right)
	P\left(\eventc_1\right)
	+
	\Pro_{\gainVector\sim\advPro_1}\left(\recomArm_n \neq 
	\bestArm_{\gainVector}|\,\gainVector\notin\eventc_1\right)
	P\left(\neg\eventc_1\right)\\
	&\leq
\frac{1}{32}
	\times 1
	+
	1\times
	\nArms\exp\left(-\frac{\gap^2_1\timeHorizon}{64}\right)\\
	&\leq
	\frac{1}{32}
	\times 1
	+
	1\times
	\frac{1}{128}\\
	&<
	\frac{1}{16}\CommaBin
\end{align*} which contradicts Equation~\ref{eq:lowbobcase2}.
\end{proof}%

%% file: appLOWBOB.tex
% !TEX root = Main.tex
%

\section{Lower bound in the best of both worlds} %setting (Theorem~\ref{th:lowBOB}) }
\label{app:prooflowBOB}
\setcounter{scratchcounter}{\value{theorem}}\setcounter{theorem}{\the\numexpr\getrefnumber{th:lowBOB}-1}
\thi*
\setcounter{theorem}{\the\numexpr\value{scratchcounter}}
\begin{proof}%[Theorem~\ref{th:lowBOB}]
	Let $\pullsNumber_{k}(t)$ be the number of times that arm 
	$k$ has been pulled by the end of round $t$.
	Let us consider any fixed learner.
	Let us consider a base problem, denoted \basePro{}, with $\nu_1,\ldots, \nu_\nArms$, 
	Bernoulli distributions with mean $\mean^{\basePro}_1,
	\ldots, \mean^{\basePro}_\nArms$ 
	such that $\mean^{\basePro}_1 \triangleq 1/2$, and for all $k\in[2,\nArms], 
	\mean^{\basePro}_k \triangleq  1/2-\gap_k$ 
	(the gaps specified in the claim of the theorem).
	The expectation and probability with respect to the
	learner and the samples of this problem are 
	$\Exp_{\basePro}$ and $\Pro_{\basePro}$.% with $\gap_k\geq 0$ for .
	
	Here, $i\in\setArmsmo$ denotes the rank of a suboptimal arm in the
	base problem. Next, we consider  a constant $\dividefac_i\leq 1$.
	We also consider  an early  period of the game $t=1,\ldots,
	\timeHorizon_i\triangleq\lceil \timeHorizon\dividefac_i\rceil$,\
	that we denote~$\earlyPhase_i$. %
	The proof could work with any set of $\timeHorizon_i$ rounds 
	that are not necessarily the first of the game but we stick 
	to the early ones to ease the notation.
	Now we want to identify an arm with a small gap that the 
	learner will not pull
	very much in the base problem \basePro{}
	during the period~$\earlyPhase_i$.
	From our learner, during the period $\earlyPhase_i$, 
	in any case at most 
	$i-2$ arms among $\setArmsmo$ %, noted $S$,  
	will, in expectation, be pulled strictly more than 
	$2\timeHorizon_i/i $ as otherwise the total number of pulls
	is strictly larger than $(i-1)\times2\timeHorizon_i/i\geq\timeHorizon_i$.
	Therefore, we have at least $\nArms-1-(i-2)=\nArms-i+1$ arms 
	included in the set $\setArmsmo$, 
	and that form a set
	noted $S$,  that are pulled in expectation less than $2\timeHorizon_i/i $.
	Among these arms, let us consider  arm $\refarm \triangleq \argmax_{k\in S}
	\mean_k$ with highest mean. By construction we have
	\begin{equation}\label{eq:notSoMuchPulls}
	\Exp_{\basePro}\left[\pullsNumber_{\refarm}(n_i) \right]\leq \frac{2n_i}{i}\cdot
	\end{equation}
	Note that by construction, we have also that  $\gap_{\refarm}\leq\gap_i$, because
	otherwise it would mean that $\mean_{\refarm}<\mean_i$ 
	and so there would exist at most $\nArms-i-1$ arms with lower means than $\refarm$.
	This contradicts the fact that $\refarm$ has the highest 
	mean among $\nArms-i+1$ arms.

	Now we construct an i.i.d.\,stochastic problem, denoted \stoPro{}, where
	the distribution of the arms are the same as in the \basePro{} 
	problem  except for arm $\refarm$, $\mu^{\stoPro}_{k}\triangleq \mu^{\basePro}_{k} $ 
	for all $k\neq \refarm$.
	%    We distinguish two cases: if the  $\refarm=1$ then
	We set in  \stoPro{},         
	$\mu^{\stoPro}_{\refarm} \triangleq 1/2 + \gap_1/2$.
	This means that 
	the best arm in the \stoPro{} is the arm $\refarm$, $\bestArmSto\triangleq\refarm$.
	Also note that the gaps in the \stoPro{} problem verify
	$\gap_k\leq \gap^{\stoPro}_k=\gap_k+\gap_1/2\leq 2\gap_k$ for 
	all $k\in[2,\refarm-1]\cup[\refarm+1:\nArms]$. Also, $\gap_1= 
	\gap^{\stoPro}_1/2$ and $ \gap^{\stoPro}_{\refarm}=\gap_{1}/2$.
	Therefore,  \[\frac{\complexityBoth^{\basePro}}{2}
	\leq\complexityBoth^{\stoPro}\leq 8\complexityBoth^{\basePro}.\]
	The expectation and probability with respect to the
	learner and the samples of this problem are denoted
	$\Exp_{\stoPro}$ and $\Pro_{\stoPro}$.

	The second bandit problem is the adversarial one, denoted \advPro{}.
	This adversarial problem samples  $\gainVector$ randomly.
	At round $t\in[\timeHorizon]$, for arm $k$, $g_{k,t}$ is sampled 
	from the Bernoulli distribution with mean $\mu_{k}^{\advPro}(t)$.
	For all $t$ and $k\neq\refarm$, \advPro{} follows the \basePro{} 
	problem:  $\mu_{k}^{\advPro}(t)\triangleq\mu_{k}^{\basePro}$.
	For arm $\refarm$,
	until the end of phase $\earlyPhase_i$, for all $t$ with $1\leq 
	t\leq n_i$, $\mu^{\advPro}_{\refarm}(t)\triangleq\mu_{\refarm}^{\basePro}$ 
	and then a switch happens, for $n_i< t\leq n$, arm $\refarm$ 
	possesses  the same distributions as in the \stoPro{} problem,  
	$\mu^{\advPro}_{\refarm}(t)\triangleq \mu_{\refarm}^{\stoPro}$.
	The expectation and probability with respect to the
	learner and the samples of this problem are denoted
	$\Exp_{\advPro}$ and $\Pro_{\advPro}$.

	Let us now study the identity of the best arm in \advPro{}.
	We want to show that, with high probability, 
	the best arm in the \advPro{} is  arm $1$ if we have 
	$\dividefac_i\geq \gap_1/ \gap_i.$
	We denote the expected cumulative gain in \advPro{} of each 
	arm $k\in\setArms$ as $\expectCumul_{k}\triangleq\sum_{t=1}^{\timeHorizon} \mean^{\advPro}_{k}(t)$.
	For arm $\refarm$, we have 
	\begin{align*}    
	\expectCumul_{\refarm}&=
	\sum_{t=1}^{\timeHorizon_i} \mean^{\advPro}_{\refarm}(t)
	+
	\sum_{t=\timeHorizon_i+1}^{\timeHorizon} \mean^{\advPro}_{\refarm}(t)\\
	&=
	\timeHorizon_i \mean^{\basePro}_{\refarm}
	+
	(\timeHorizon-\timeHorizon_i) \mean^{\stoPro}_{\refarm}  \\
	&=
	\timeHorizon_i \left(\frac12 - \gap_i\right)
	+
	(\timeHorizon-\timeHorizon_i) \left(\frac12 + \frac{\gap_1}{2}\right)\\
	&\leq
	\frac{\timeHorizon}{2}-  \timeHorizon_i \gap_i
	+
	\frac{\timeHorizon \gap_1}{2}\ \\    
	&\leq    
	\frac{\timeHorizon}{2} - \dividefac_i\timeHorizon \gap_i
	+
	\frac{\timeHorizon \gap_1}{2}\\\
	&\leq    
	\frac{\timeHorizon}{2} - \timeHorizon \gap_1
	+
	\frac{\timeHorizon \gap_1}{2}\\\        
	&=    
	\frac{\timeHorizon}{2} - \frac{\timeHorizon \gap_1}{2}\cdot
	\end{align*}    
	For all $k\in\setArms$, $k\neq\refarm$ and $k\neq1$, we have $\expectCumul_{k}=     
	\timeHorizon/2 - \timeHorizon \gap_k$. Furthermore, $\expectCumul_{1}=     
	\timeHorizon/2$. 
	Let us consider the event
	$\eventc =\left\{ \forall k\in\setArms,  |\cumulGain_k-\expectCumul_{k}| \leq  \timeHorizon \gap_1/8 \right\}
	$.
	Using a standard Hoeffding argument with a union bound we have 
	that $\Pro(\eventc) \geq 1-\nArms\exp\left(-\gap^2_1\timeHorizon/32\right)$.
	Then, in the \advPro{} problem, for any $\gainVector$ that is 
	compatible with $\eventc$, the gaps associated with $\gainVector$ 
	verify
	$\gap_k/2\leq\gap^{\gainVector}_k\leq2\gap_k$ for all $k\neq 
	\refarm$ and $\gap_1/4\leq\gap^{\gainVector}_{\refarm}\leq\gap_1$.    
	Note that therefore the only difference between the two problems 
	\stoPro{} and \advPro{} is for  arm $\refarm$ during phase $\earlyPhase_i$.

	If we have $\dividefac_i\geq \gap_1 / \gap_i$,  then with high probability, the respective best arms 
	in problem \stoPro{} and problem \advPro{}
	are
	different, i.e., $\bestArmAd\neq\bestArmSto$. That is what we assume for
	the rest of the proof. Indeed, we want to use the fact that 
	the two models are hard to differentiate from the learner point 
	of view with a certain probability and that then the learner has 
	to either choose to recommend $\bestArmAd$ or $\bestArmSto$, which are different, and therefore possibly suffer a mistake.

	Then we prove the following relation between the probability of error 
	in the stochastic problem $\ProErr_{\stoPro}(\timeHorizon)$ to the
	probability of successful identification in the adversarial problem
	$1-\ProErr_{\advPro}(\timeHorizon)$:
	where $\ProErr_{\advPro}(\timeHorizon)$ is defined here as 
	$\ProErr_{\advPro}(\timeHorizon) \triangleq \Pro_{\gainVector\sim\advPro}\left(\recomArm_n \neq \bestArm_{\gainVector}\right)$,
	\begin{equation}\label{eq:mainlowBOB}
	\ProErr_{\stoPro}(\timeHorizon)
	\geq
	\frac{1}{8}\left(\frac{1}{4}-\ProErr_{\advPro}(\timeHorizon) \right)\exp\left(-\frac{64\gap^2_{i}\timeHorizon_i}{i}\right)\!\cdot
	\end{equation}
	To obtain Equation~\ref{eq:mainlowBOB}, we write
	\begin{align*}
	\ProErr_{\stoPro}(\timeHorizon)
	&=
	\Pro_{\stoPro}\left(\recomArm_\timeHorizon\neq \refarm\right)\\
	&\stackrel{\textbf{(a)}}{\geq}
	\Pro_{\stoPro}\left(\recomArm_\timeHorizon=1\right)\\
	&\stackrel{\textcolor{white}{(b)}}{\geq}
	\Pro_{\stoPro}\left(\recomArm_\timeHorizon=1 
	\cap \pullsNumber_{\refarm} (\timeHorizon_i) \leq  \frac{4\timeHorizon_i}{i}\right)\\
	&\stackrel{\textbf{(b)}}{\geq}
	\Pro_{\advPro}\left(\recomArm_\timeHorizon=1 
	\cap \pullsNumber_{\refarm} (\timeHorizon_i) 
	\leq  \frac{4\timeHorizon_i}{i}\right)\frac{1}{8}\exp\left(-\frac{16\gap^2_{\refarm}4\timeHorizon_i}{i}\right)\\
	&\stackrel{\textbf{(c)}}{\geq}
	\left(\Pro_{\advPro}\left(\recomArm_\timeHorizon=1 \right)
	-\frac{1}{2}\right)\frac{1}{8}\exp\left(-\frac{64\gap^2_{i}\timeHorizon_i}{i}\right)\\
	&\stackrel{\textbf{(d)}}{\geq}
	\frac{1}{8}\left(\frac{1}{4}-\ProErr_{\advPro}(\timeHorizon) \right)
	\exp\left(-\frac{64\gap^2_{i}\timeHorizon_i}{i}\right)\!\CommaBin
	\end{align*}
	where 
	\textbf{(a)}  is because we have $1\neq\refarm$ by construction
	\textbf{(b)} uses the change-of-measure argument from Lemma~\ref{l:changeMeas}. 
	Step~\textbf{(c)} is because $\gap_{\refarm}\leq\gap_i$ and 
	\begin{align*}
	\Pro_{\advPro}\left(\recomArm_\timeHorizon=1 \right)
	&=
	\Pro_{\advPro}\left(\recomArm_\timeHorizon=1 \cap 
	\pullsNumber_{\refarm} (\timeHorizon_i) \leq  \frac{4\timeHorizon_i}{i} \right)
	+
	\Pro_{\advPro}\left(\recomArm_\timeHorizon=1 \cap 
	\pullsNumber_{\refarm} (\timeHorizon_i)>  \frac{4\timeHorizon_i}{i} \right)\\
	&\leq
	\Pro_{\advPro}\left(\recomArm_\timeHorizon=1 \cap 
	\pullsNumber_{\refarm} (\timeHorizon_i) \leq  \frac{4\timeHorizon_i}{i}\right)
	+
	\Pro_{\advPro}\left( \pullsNumber_{\refarm} (\timeHorizon_i)> \frac{4\timeHorizon_i}{i} \right)\\
	&\leq
	\Pro_{\advPro}\left(\recomArm_\timeHorizon=1 \cap \pullsNumber_{\refarm} (\timeHorizon_i) \leq \frac{4\timeHorizon_i}{i} \right)
	+
	\frac12\CommaBin
	\end{align*}
	where the last inequality combines 
	Equation~\ref{eq:notSoMuchPulls} and a Markov inequality.
	Step \textbf{(d)}
	is because first, as computed above,
	$\Pro_{\gainVector\sim\advPro}\left(\bestArm_{\gainVector}=1\right)\geq \Pro(\eventc) \geq 1-\nArms\exp\left(-\gap^2_1\timeHorizon/32\right)$  
	and therefore,
	we have
	\begin{align*}
	\ProErr_{\advPro}(\timeHorizon)
	&    =
	1- \Pro_{\gainVector\sim\advPro}
	\left(\recomArm_\timeHorizon=\bestArm_{\gainVector}\right)\\
	&    =
	1- \Pro_{\advPro}\left(\recomArm_\timeHorizon=\bestArm_{\gainVector}\cap\bestArm_{\gainVector}=1\right)
	- \Pro_{\gainVector\sim\advPro}
	\left(\recomArm_\timeHorizon=\bestArm_{\gainVector}\cap\bestArm_{\gainVector}\neq 1\right)\\
	&    \geq
	1-
	\Pro_{\advPro}\left(\recomArm_\timeHorizon=1\right)
	-      \Pro_{\gainVector\sim\advPro}
	\left(\bestArm_{\gainVector}\neq 1\right)\\
	&    \geq
	1-
	\Pro_{\advPro}\left(\recomArm_\timeHorizon=1\right)
	- \nArms\exp\left(-\frac{\gap^2_1\timeHorizon}{32}
	\right)\\
	&    \geq
	1-
	\Pro_{\advPro}\left(\recomArm_\timeHorizon=1\right)
	- \frac14\CommaBin
	\end{align*}
	where
	$\nArms\exp\left(-\gap^2_1\timeHorizon/32\right)\leq 1/4$ 
	holds by assumption of the theorem.    
	Having just proved Equation~\ref{eq:mainlowBOB}, we proceed with 
	the rest of the proof.
	In order to maximize the lower bound we maximize $n_i$ by 
	setting $\dividefac_i\triangleq\gap_1/ \gap_i$.
	Then again to maximize the lower bound, we finally 
	choose $i \triangleq \argmax_{k\in\setArms} (k/\gap_k).$
	Using $\complexityBoth^{\basePro}/2\leq\complexityBoth^{\stoPro}\leq8\complexityBoth^{\basePro}$, we have 
	\begin{align*}
	\frac{1}{64}\exp\left(-\frac{2048\timeHorizon}
	{\complexityBoth^{\stoPro}}\right)
	&\leq
	\frac{1}{64}\exp\left(-\frac{256\timeHorizon}{\complexityBoth^{\basePro}}\right)\\
	&= \frac{1}{64}\exp\left(-\frac{256\gap_{i}\gap_1\timeHorizon}{i}\right)\\
	&= \frac{1}{64}\exp\left(-\frac{128\gap^2_{i}}{i}\frac{{2\gap_1}\timeHorizon}{\gap_i}\right)\\
	&\leq \frac{1}{64}\exp\left(-\frac{128\gap^2_{i}}{i}
	\left(\left\lceil\frac{2\gap_1\timeHorizon}{\gap_i}\right\rceil-1\right)\right)\\
	&\leq
	\frac{1}{64}\exp\left(-\frac{64\gap^2_{i}\timeHorizon_i}{i}\right)\!\cdot
	\end{align*}
	Therefore,     
	if $\ProErr_{\stoPro}(\timeHorizon)
	\leq 
	1/64\exp\left(-2048\timeHorizon/\complexityBoth^{\stoPro}\right)$ 
	then using the inequality above, we get that 
	$\ProErr_{\stoPro}(\timeHorizon)\!\leq\!\exp\left(-64\gap^2_{i}\timeHorizon_i/i\right)\!/64.$        
	Finally, using the inequality in Equation~\ref{eq:mainlowBOB},    we have that
	if $\ProErr_{\stoPro}(\timeHorizon)
	\leq 
	\exp\left(-64\gap^2_{i}\timeHorizon_i/i\right)/64$
	then  $\ProErr_{\advPro}(\timeHorizon)\geq 1/8$.

	We now claim that there exists at least one~$\gainVector\in\eventc$  such that 
	$\ProErr_{\gainVector}(\timeHorizon)\geq 1/16$. 
	The proof is by contradiction. Let us assume that for 
	all $\gainVector\in\eventc$,  $\ProErr_{\gainVector}(\timeHorizon)< 1/16$. 
	Then, we have 
	\begin{align*}
	\ProErr_{\advPro}(\timeHorizon)
	&    =
	\Pro_{\gainVector\sim\advPro}\left(\recomArm_n \neq \bestArm_{\gainVector}\right)\\
	&       =
	\Pro_{\gainVector\sim\advPro}\left(\recomArm_n \neq \bestArm_{\gainVector}|\,\gainVector\in\eventc\right)
	P(\eventc)
	+
	\Pro_{\gainVector\sim\advPro}\left(\recomArm_n \neq \bestArm_{\gainVector}|\,\gainVector\notin\eventc\right)
	P(\neg\eventc)\\
	&    \leq
	\frac{1}{16}
	\times 1
	+
	1\times
	\nArms\exp\left(-\frac{\gap^2_1\timeHorizon}{32}\right)\\
	&    \leq
	\frac{1}{16}
	\times 1
	+
	1\times
	\frac{1}{32}\\
	&    <
	\frac{1}{8}\CommaBin
	\end{align*} which is a contradiction.%
\end{proof}%

%% file: appUPPBOB.tex
% !TEX root = Main.tex
%
\section{Upper bound in best of both worlds} %Setting (Theorem~\ref{th:UpBOB}) }
\label{app:proofupBOB}
\begin{lemma}\label{l:errTwoArms}
	    Let $\eventbob_p$  be the events defined in Equation~\ref{eq:eventbob} for all $p\in[2:\nArms]$.
	On the conjunction of events $\cap_{p=i+1}^{\nArms+1}
	\eventbob_{p}$ and for
$i\in[2:\nArms]$,	in an i.i.d.\,stochastic 
environment, and complexity $\complexityProOne$, playing~\Pone{},
	given two distinct  arms  $ j \in [i:\nArms] $, $k\in[\nArms]$   and 
	a round $t\geq \phaseTime_{i}$ such that $\mean_1-\mean_k< \gap_{i}/2,$
	we have
	\[
\Pro\left(\estCumulGainVector_{k,t}\leq  \estCumulGainVector_{j,t} \right)
	\leq
 2\exp\left(-\frac{\timeHorizon}{128\complexityProOne}\right)\!\cdot
	\]
\end{lemma}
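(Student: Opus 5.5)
The plan is to reduce the comparison $\estCumulGainVector_{k,t}\leq\estCumulGainVector_{j,t}$ to two one-sided deviation events and to control each with the Bernstein (Freedman) inequality for martingale differences. The increments $\estGainVector_{k,s}-\mean_k$ are martingale differences with respect to the history, because $\learnerDist_{k,s}$ is determined by the past and, given the past, $\Exp[\estGainVector_{k,s}]=\mean_k$ in the i.i.d.\ environment. First I fix the size of the mean separation. Since $j\in[i:\nArms]$, its gap is at least $\gap_{i}$, so $\mean_1-\mean_j\geq\gap_i$. Combined with $\mean_1-\mean_k<\gap_i/2$ this gives $\mean_k-\mean_j>\gap_i/2$, and also $\mean_k-\mean_j\geq(\mean_1-\mean_j)/2$. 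Writing $D\triangleq t(\mean_k-\mean_j)$, I then split
\[
\Pro\left(\estCumulGainVector_{k,t}\leq\estCumulGainVector_{j,t}\right)\leq \Pro\left(\estCumulGainVector_{k,t}-t\mean_k\leq -D/2\right)+\Pro\left(\estCumulGainVector_{j,t}-t\mean_j\geq D/2\right),
\]
and each term should be at most $\exp(-\timeHorizon/(128\complexityProOne))$, which accounts for the factor $2$.

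For arm $k$ the events $\cap_{p=i+1}^{\nArms+1}\eventbob_p$ are what give small variance. For each $p\geq i+1$ and each round $s>\phaseTime_p$, arm $k$ satisfies $\mean_1-\mean_k<\gap_i/2\leq\gap_p/2$, so $\eventbob_p$ gives $\tilde{\langle k\rangle}_s\leq p-1$ and hence $\learnerDist_{k,s}\geq 1/((p-1)\bar\log\nArms)$. The conditional variance of $\estGainVector_{k,s}$ is at most $1/\learnerDist_{k,s}$. Summing phase by phase (rounds in $(\phaseTime_{p+1},\phaseTime_p]$ cost at most $p\,\bar\log\nArms$ each, rounds after $\phaseTime_i$ at most $(i-1)\bar\log\nArms$), the predictable variance up to $t$ is bounded by
\[
\bar\log\nArms\Big(\timeHorizon\sum_{p\geq i}(\propTime_p-\propTime_{p+1})p+(t-\phaseTime_i)\,i\Big).
\]
The increments are bounded in range by $\nArms\bar\log\nArms$. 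Freedman's inequality with deviation $D/2\geq t\gap_i/4$ then gives an exponent of order
\[
\frac{t^2\gap_i^2}{\bar\log\nArms\big(\timeHorizon\sum_{p\geq i}(\propTime_p-\propTime_{p+1})p+\tfrac{1}{24}\nArms t\gap_i\big)}.
\]
Because $t\geq\phaseTime_i=\timeHorizon\propTime_i$, the rounds after $\phaseTime_i$ only help: the extra $t-\phaseTime_i$ rounds add at most $i$ per round to the variance but grow the squared deviation quadratically. So the exponent is at least $\timeHorizon$ over a ratio of the form appearing in $\complexityProOne(\propTimeVec)$, with numerator $\sum_{p\geq i}(\propTime_p-\propTime_{p+1})p+\frac{1}{24}\nArms\propTime_i\gap_i$ and denominator $\propTime_i^2\gap_i^2$. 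This matches the max defining $\complexityProOne(\propTimeVec)$ evaluated at the arm of rank $i$, up to constants. Taking $\propTimeVec$ as the minimizer yields $\exp(-\timeHorizon/(128\complexityProOne))$ after tracking constants through the $1/2$ and $1/4$ factors.

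The main obstacle is the arm $j$ term, because the events $\eventbob_p$ say nothing about the rank of a bad arm $j$. Its $\learnerDist_{j,s}$ may be as small as $1/(\nArms\bar\log\nArms)$ throughout, so its variance is only bounded by $t\nArms\bar\log\nArms$. What I would try first is to use the larger deviation available for $j$, namely $D/2\geq t(\mean_1-\mean_j)/4$. I would then compare the resulting exponent $t(\mean_1-\mean_j)^2/(\nArms\bar\log\nArms)$ with the $\complexityProOne(\propTimeVec)$ term indexed by arm $j$ itself. That term has numerator at most $\nArms\propTime_{j}(1+\gap_j/24)$ and uses $t\geq\timeHorizon\propTime_i\geq\timeHorizon\propTime_j$. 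If this comparison goes the wrong way (the max in $\complexityProOne$ is a lower bound on $\complexityProOne$, not an upper one), the fallback is a different decomposition. I would center both estimators at a common threshold $t(\mean_1-\gap_i/2)$-type level, so that the deviation charged to $j$ is governed by $\gap_j$ and its full variance $\nArms$ appears exactly as in the numerator term $\sum_{p\geq\langle j\rangle}(\propTime_p-\propTime_{p+1})p$ for arm $j$.
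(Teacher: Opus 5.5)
Your overall decomposition and your handling of the arm-$k$ term follow the paper: phase-wise control of $\tilde{\langle k\rangle}_s$ through $\eventbob_{\ell+1}$, $\ell\ge i$, then Freedman's inequality, monotonicity of the exponent in $t\ge\phaseTime_i$, and comparison with the rank-$i$ term of $\complexityProOne(\propTimeVec)$. There is one slip. After $\phaseTime_i$ you only get $\tilde{\langle k\rangle}_s\le i$ from $\eventbob_{i+1}$, not $\le i-1$, because $\eventbob_i$ is not in the conjunction. Your displayed bound with $(t-\phaseTime_i)\,i$ is the correct one.

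The genuine gap is the arm-$j$ term, and it rests on a false premise. The events $\eventbob_p$ constrain the rank of \emph{every} arm whose gap is below $\gap_p/2$, so they do control the rank of $j$ as soon as $\gap_p>2\gap_j$. This is the missing idea.

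The paper lets $j^++1$ be the first arm with $\gap_{j^++1}>2\gap_j$ (with the convention $\gap_{\nArms+1}=3\gap_\nArms$). Then $j^+\ge j\ge i$ and $\gap_{j^+}\le 2\gap_j$. The events $\eventbob_{\ell+1}$ for $\ell\ge j^+$ all lie in $\cap_{p=i+1}^{\nArms+1}\eventbob_p$. They give $\tilde{\langle j\rangle}_s\le \ell$ on $(\phaseTime_{\ell+1},\phaseTime_\ell]$ and $\tilde{\langle j\rangle}_s\le j^+$ after $\phaseTime_{j^+}$. Hence the predictable variance of arm $j$ is at most $\bar\log\nArms\big(\sum_{\ell\ge j^+}(\phaseTime_\ell-\phaseTime_{\ell+1})\ell+(t-\phaseTime_{j^+})j^+\big)$. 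Freedman's inequality with deviation $t\gap_{j^+}/8\le t\gap_j/4$, plus monotonicity in $t\ge\phaseTime_i\ge\phaseTime_{j^+}$, then matches the $j$-term to the term of $\complexityProOne(\propTimeVec)$ for the arm of rank $j^+$, not rank $j$.

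Neither of your two routes can replace this.
\begin{itemize}
\item Whatever threshold you center at, the deviation available to $j$ is at most $t(\mean_k-\mean_j)\le t\gap_j$, and recentering does not change the variance.
\item With the crude variance $t\nArms\bar\log\nArms$, the exponent is therefore at best of order $t\gap_j^2/(\nArms\bar\log\nArms)$.
\item In the super-linear regime with $\propTime_\ell=1/\ell$, $i=j=2$ and $t=\timeHorizon$, this exponent is $\timeHorizon\gap_1^2/(\nArms\bar\log\nArms)$.
\item There $\complexityProOne(\propTimeVec)=\cO(\log^2\nArms/\gap_1^2)$, so you fall short of the claimed bound by a factor of order $\nArms/\log\nArms$ in the exponent.
\end{itemize}
Your crude bound does suffice in the special case $\gap_\nArms\le 2\gap_j$, i.e.\ $j^+=\nArms$. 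In that case you must compare with the rank-$\nArms$ term, using $t\ge\phaseTime_\nArms$, rather than with the rank-$j$ term.
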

\begin{proof}
	We  prove that for any  proportions of rounds $\propTimeVec$, we have
		\[
	\Pro\left(\estCumulGainVector_{k,t}\leq  \estCumulGainVector_{j,t} \right)
	\leq
	2\exp\left(-\frac{\timeHorizon}{128\complexityProOne(\propTimeVec)}\right)\!\cdot
	\]
	Then, the claim of the Lemma~\ref{l:errTwoArms} comes from $\complexityProOne
	=
	\min_{\propTimeVec\in\propTimeVecSpa}
	\complexityProOne(\propTimeVec)$.	
First, notice that
\[\mean_k-\mean_j= \mean_k-\mean_1+\mean_1-\mean_j
\stackrel{\textbf{(a)}}{>}
-\frac{\gap_{j}}{2}+\mean_1-\mean_j
=
\frac{\gap_{j}}{2}
%~=~
%\frac{\gap_{i+1}}{2}
>
0,
\]
where \textbf{(a)} is because by the assumption of the lemma,
$\mean_1-\mean_k<\gap_{i}/2\leq \gap_{j}/2$.
%and  \textbf{(b)} is because as $ j \geq i+1 $
%$\gap_{j}\geq\gap_{i+1}$.
We decompose
\begin{align}
\Pro\left(\estCumulGainVector_{k,t}\leq  
\estCumulGainVector_{j,t} \right)
&\stackrel{\textbf{(c)}}{\leq}
\Pro\left(t\mu_k-\estCumulGainVector_{k,t}  
\geq  t\frac{\mu_k-\mu_j}{2} \right)
+
\Pro\left(\estCumulGainVector_{j,t} - t\mu_j 
\geq  t\frac{\mu_k-\mu_j}{2} \right) \nonumber\\
&\stackrel{\textbf{(d)}}{\leq}
\Pro\left(t\mu_k-\estCumulGainVector_{k,t} > 
\frac{t\gap_{j}}{4} \right)
+
\Pro\left(\estCumulGainVector_{j,t} - t\mu_j > 
 \frac{t\gap_{j}}{4} \right) \label{eq13}\!\CommaBin
\end{align}
where \textbf{(c)} is because $\mean_k-\mean_j>0$
and \textbf{(d)} is because $\mean_k-\mean_j> \gap_{j}/2.$

We  now bound the two terms in Equation~\ref{eq13}.
To bound the \emph{second term} in Equation~\ref{eq13} we have
for all $t'\in[\timeHorizon]$,
$|\estGainVector_{j,t'}-\gainVector_{j,t'}|\leq 
\nArms\bar\log \nArms$ as $\learnerDist_{j,t'}
\geq 1/(\nArms\bar\log\nArms).$
We define  arm $j^+$ so that 
$j^+ +1$  is the arm with the largest mean among the ones that have the at least twice the gap of the gap of $j$,  %$j^+=\argmax_{p\in\{\nArms\}\cup\left\{p':\mean_1-\mean_j<\gap_{p'+1}/2\right\}}\mean_p$.
$j^+  +1 \triangleq \mathop{\arg\max}\limits_{p':\mean_1-\mean_j<\gap_{p'}/2}
\mean_{p'}$. Note that as $j\geq i>1$, we have $j^++1> j
$ and therefore $j^+\geq j\geq i$.
We now bound the cumulative variance $\sum_{t'=1}^t \var_{\estGainVector_{j,t'}-\gainVector_{j,t'}}$ for the mean estimator of arm $j$ at round~$t'$, 
\begin{align*}
\sum_{t'=1}^t \var_{\estGainVector_{j,t'}-\gainVector_{j,t'}}
%&\leq
%\sum_{l=2}^{\nArms}
%\sum_{t'=\phaseTime_{l+1}}^{\phaseTime_{l}}
% \var_{\estGainVector_{j,t'}-\gainVector_{j,t'}}\\
&\stackrel{\textbf{(e')}}{=}
\sum_{\ell=j^+}^{\nArms}
\sum_{t'=\phaseTime_{\ell+1}+1}^{\phaseTime_{\ell}}
\var_{\estGainVector_{j,t'}-\gainVector_{j,t'}}
+
\sum_{t'=\phaseTime_{j^+ }+1}^{t}
\var_{\estGainVector_{j,t'}-\gainVector_{j,t'}}\\
&\stackrel{\textbf{(e)}}{\leq}
\sum_{\ell=j^+}^{\nArms}
\sum_{t'=\phaseTime_{\ell+1}+1}^{\phaseTime_{\ell}}
\ell\bar\log\nArms
+
\sum_{t'=\phaseTime_{j^+}+1}^{t}
j^+\bar\log\nArms\\
&=
\sum_{\ell=j^+}^{\nArms}
(\phaseTime_{\ell}-\phaseTime_{\ell+1})
\ell\bar\log\nArms
+
(t-\phaseTime_{j^+})
j^+\bar\log\nArms,
 \end{align*}
 where \textbf{(e')} is because   $t\geq \phaseTime_i\geq 
  \phaseTime_{j^+ }$ and \textbf{(e)} is because we are on 
  the conjunction of events  $\cap_{p=i+1}^{\nArms+1}
   \eventbob_{p}$. Therefore, as for each round $t'$ in the
    round interval $[\phaseTime_{\ell+1}+1:\phaseTime_{\ell}]$  
    with $\ell\in [j^+:\nArms]$, we verify $\mean_1-\mean_j
    <\gap_{j^+ +1}/2\leq\gap_{\ell+1}/2$, then we have $\tilde{\langle j \rangle}_{t'}
    \leq \ell$. For $t'\in [\phaseTime_{j^+}+1:t ]$, we use the 
    fact that  event $ \eventbob_{j^++1}$   holds for arm $j$ by 
    construction  of the events $\{\xi_i\}_i$ and therefore $\forall t'> \phaseTime_{j^+}
    \geq \phaseTime_{j^++1}$, $\tilde{\langle j \rangle}_{t'}<j^+ \leq j^+ +1$ and we 
    can bound the variance.

\smallskip \noindent
By applying the Bernstein inequality for martingale  differences  we have
\begin{align*}
\Pro&\left(\estCumulGainVector_{j,t} - t\mu_j >  
\frac{t\gap_{j}}{4} \right)
\stackrel{\textbf{(f)}}{\leq}
\Pro\left(\estCumulGainVector_{j,t} - t\mu_j >  
\frac{t\gap_{j^+}}{8} \right)\\
&\quad\quad\quad\quad\quad\leq
\exp\left(-
\frac{\left( t\gap_{j^+}/8\right)^2}{2\sum_{t'=1}^t 
	\var_{\estGainVector_{j,t'}-\gainVector_{j,t'}}
+
\frac{2}{3}\nArms\bar\log(\nArms)\frac{t\gap_{j^+}}{8}
}
\right)\\
&\quad\quad\quad\quad\quad\leq
\exp\left(-
\frac{\left(t\gap_{j^+}\right)^2\!/64}{
	2\sum_{\ell=j^+}^{\nArms}
	(\phaseTime_{\ell}-\phaseTime_{\ell+1})
	\ell\bar\log\nArms
	+
	(t-\phaseTime_{j^+})
	j^+\bar\log\nArms
	+
	\frac{1}{12}\nArms\bar\log(\nArms) t\gap_{j^+}
}
\right)\\
&\quad\quad\quad\quad\quad\stackrel{\textbf{(g)}}{\leq}
\exp\left(-
\frac{\left(\phaseTime_{j^+}\gap_{j^+}\right)^2\!/64}{
2	\sum_{\ell=j^+}^{\nArms}
(\phaseTime_{\ell}-\phaseTime_{\ell+1})
	\ell\bar\log\nArms
	+
	\frac{1}{12}\nArms\bar\log(\nArms)\phaseTime_{j^+}\gap_{j^+}
}
\right)\!\CommaBin
\end{align*}
where \textbf{(f)} is because $\gap_{j^+}/2\!\leq\!\gap_j$ 
by construction
and \textbf{(g)} is because the exponential term is a 
decreasing function of $t$ and $t\geq \phaseTime_i\geq 
\phaseTime_{j^+ }$.

%\smallskip
%\noindent 
To bound the \emph{first term} of Equation~\ref{eq13} we use similar arguments. 
We have
for all $t'\!\in\![\timeHorizon]$,
$|\estGainVector_{k,t'}-\gainVector_{k,t'}|
\leq 
\nArms\bar\log\nArms$ as $\learnerDist_{j,t'}\geq 1/(\nArms\bar\log\nArms).$
%\smallskip \noindent 
We get
\begin{align*}
\sum_{t'=1}^t \var_{\estGainVector_{k,t'}-\gainVector_{k,t'}}
&=%\stackrel{(e')}{=}~
\sum_{\ell=i}^{\nArms}
\sum_{t'=\phaseTime_{\ell+1}+1}^{\phaseTime_{\ell}}
\var_{\estGainVector_{k,t'}-\gainVector_{k,t'}}
+
\sum_{t'=\phaseTime_{i }+1}^{t}
\var_{\estGainVector_{k,t'}-\gainVector_{k,t'}}\\
&\stackrel{\textbf{(e)}}{\leq}
\sum_{\ell=i}^{\nArms}
\sum_{t'=\phaseTime_{\ell+1}+1}^{\phaseTime_{\ell}}
\ell\bar\log\nArms
+
\sum_{t'=\phaseTime_{i}+1}^{t}
i\bar\log\nArms\\
&=
\sum_{\ell=i}^{\nArms}
(\phaseTime_{\ell}-\phaseTime_{\ell+1})
\ell\bar\log\nArms
+
(t-\phaseTime_{i})
i\bar\log\nArms,
%&\leq
%\sum_{l'=j^++1}^{\nArms}
%(\phaseTime_{l'-1}-\phaseTime_{l'})
%(l')\bar\log\nArms
%+
%(t-\phaseTime_{j^+})
%j^+\bar\log\nArms
\end{align*}
%where \textbf{(e')} is because  $t\geq \phaseTime_i\geq \phaseTime_{j^+ }$, 
where \textbf{(e)} is because we are on the conjunction of 
events  $\cap_{p=i+1}^{\nArms} \eventbob_{p}$. Therefore, 
as for each round $t'$ in the round interval 
$[\phaseTime_{\ell+1}+1:\phaseTime_\ell]$  with $\ell\in [i:\nArms]$, 
we verify $\mean_1-\mean_k<\gap_{i +1}/2\leq\gap_{\ell+1}/2$, 
then we have $\tilde{\langle k \rangle}_{t'}<\ell+1$. For 
$t'\in [\phaseTime_{i}+1:t ]$, we use the fact that  event 
$ \eventbob_{i+1}$ holds for arm $k$ by construction and 
therefore $\forall t'\geq \phaseTime_{i+1}\geq \phaseTime_{i}$, we have
$\tilde{\langle k \rangle}_{t'}\leq i$ and we can bound the variance.

\smallskip \noindent We apply the Bernstein inequality for martingale  differences  again to get
\begin{align*}
\Pro\left(\estCumulGainVector_{k,t} - t\mu_k >  \frac{t\gap_{j}}{4} \right)
&\stackrel{\textbf{(f)}}{\leq}
\Pro\left(\estCumulGainVector_{k,t} - t\mu_k >  \frac{t\gap_{i}}{4} \right)\\
&\leq
\exp\left(
\frac{-\left( t\gap_{i}/4\right)^2}{
	2\sum_{t'=1}^t \var_{\estGainVector_{j,t'}-\gainVector_{j,t'}}
	+
	\frac{2}{3}\nArms\bar\log(\nArms)\frac{t\gap_{i}}{4}
}
\right)\\
&\leq
\exp\left(
\frac{-\left(t\gap_{i}\right)^2\!/16}{
2	\sum_{\ell=i}^{\nArms}
	(\phaseTime_{\ell}-\phaseTime_{\ell+1})
	\ell\bar\log\nArms
	+
	(t-\phaseTime_{i})
	i\bar\log\nArms
	+
	\frac{1}{6}\nArms\bar\log(\nArms) t\gap_{i}
}
\right)\\
&\stackrel{\textbf{(g)}}{\leq}
\exp\left(
\frac{-\left(\phaseTime_{i}\gap_{i}\right)^2\!/16}{
	2\sum_{\ell=i}^{\nArms}
	(\phaseTime_{\ell}-\phaseTime_{\ell+1})
	\ell\bar\log\nArms
	+
	\frac{1}{6}\nArms\bar\log(\nArms)\phaseTime_{i}\gap_{i}
}
\right)\!\CommaBin
\end{align*}
where \textbf{(f)} is because $\gap_{i}\leq\gap_j$ by 
construction
and \textbf{(g)} is because the exponential term is a 
decreasing function of $t$ and $t\geq \phaseTime_i\geq \phaseTime_{j^+ }$.
\end{proof}			
\begin{lemma}\label{l:errPOGerrPO}
        Let $\eventbob_p$  be the events defined in Equation~\ref{eq:eventbob} for all $p\in[2:\nArms]$.
	In an i.i.d.\,stochastic environment and complexity $\complexityProOne$ playing \Pone{}, we have for all $i\in[2:\nArms]$, 
	\[
\Pro
\left(
\eventbob^c_{i}\ \middle| \bigcap_{p=i+1}^{\nArms+1} \eventbob_{p}
\right)
\leq
 2\nArms^2\timeHorizon
 \exp\left(-\frac{\timeHorizon}{128\complexityProOne}\right)\!\cdot
	\]
\end{lemma}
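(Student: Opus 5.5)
We will argue by a union bound carried out entirely under the conditional measure $\Pro(\,\cdot\mid\bigcap_{p=i+1}^{\nArms+1}\eventbob_p)$. We read Lemma~\ref{l:errTwoArms}, which is stated ``on the conjunction of events $\bigcap_{p=i+1}^{\nArms+1}\eventbob_p$'', as the conditional bound
\[
\Pro\Bigl(\estCumulGainVector_{k,t}\leq\estCumulGainVector_{j,t}\ \Big|\ \bigcap_{p=i+1}^{\nArms+1}\eventbob_p\Bigr)\leq 2\exp\Bigl(-\frac{\timeHorizon}{128\complexityProOne}\Bigr).
\]
This reading is natural: the conjunction is exactly what controls the ranks $\tilde{\langle\cdot\rangle}_{t'}$ in earlier phases, and hence the conditional variances fed into the martingale Bernstein inequality. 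The conditioning therefore needs no separate treatment. It is inherited from Lemma~\ref{l:errTwoArms}, and no step of the argument passes to a joint probability.

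\textbf{Step 1: reduce $\eventbob_i^c$ to a pairwise comparison.} Recall that $\eventbob_i$ asserts that for every round $t>\phaseTime_i$ and every arm $k$ with $\mean_1-\mean_k<\gap_i/2$ we have $\tilde{\langle k\rangle}_t\leq i-1$. Suppose $\eventbob_i$ fails. Then there are a round $t\geq\phaseTime_i$ and an arm $k$ with $\mean_1-\mean_k<\gap_i/2$ such that $\tilde{\langle k\rangle}_t\geq i$, where the rank at round $t+1$ is computed from $\estCumulGainVector_{\cdot,t}$.

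Every arm of true rank at least $i$ has gap at least $\gap_i>\gap_i/2$. Hence $k$ has true rank in $[i-1]$, and at most $i-2$ arms other than $k$ have true rank below $i$. Since at least $i-1$ arms are ranked at or above $k$ by $\estCumulGainVector_{\cdot,t}$ (ties included), one of them is an arm $j\in[i:\nArms]$ with $j\neq k$ and $\estCumulGainVector_{k,t}\leq\estCumulGainVector_{j,t}$. This gives the inclusion
\[
\eventbob_i^c\subseteq\bigcup_{t=\phaseTime_i}^{\timeHorizon}\ \bigcup_{k:\,\mean_1-\mean_k<\gap_i/2}\ \bigcup_{j\in[i:\nArms],\,j\neq k}\bigl\{\estCumulGainVector_{k,t}\leq\estCumulGainVector_{j,t}\bigr\}.
\]

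\textbf{Step 2: union bound.} Every triple $(t,k,j)$ appearing in the union satisfies the hypotheses of Lemma~\ref{l:errTwoArms}: $j\in[i:\nArms]$, $k$ is distinct from $j$, $t\geq\phaseTime_i$, and $\mean_1-\mean_k<\gap_i/2$. Applying the conditional bound to each triple and summing over at most $\timeHorizon$ rounds, $\nArms$ choices of $k$ and $\nArms$ choices of $j$ gives
\[
\Pro\Bigl(\eventbob_i^c\Bigm|\bigcap_{p=i+1}^{\nArms+1}\eventbob_p\Bigr)\leq 2\nArms^2\timeHorizon\exp\Bigl(-\frac{\timeHorizon}{128\complexityProOne}\Bigr),
\]
which is the claim.

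\textbf{Main obstacle.} The delicate point is the combinatorial step together with the indexing conventions, rather than the probability bound. One must make sure that a misranked good arm always has a witness $j$ of true rank at least $i$, including when ties are broken adversarially; this is why we use the non-strict inequality $\leq$. One must also check that the round off-by-one (ranks at round $t+1$ use $\estCumulGainVector_{\cdot,t}$) keeps $t\geq\phaseTime_i$, as Lemma~\ref{l:errTwoArms} requires. The case $i=\nArms$ is consistent with the convention $\eventbob_{\nArms+1}$ trivial, since then $\phaseTime_{\nArms+1}=0$.
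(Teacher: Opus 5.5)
Your proposal is correct and is essentially the paper's proof: the paper also bounds $\Pro(\tilde{\langle k \rangle}_t\geq i)$, for each arm $k$ with $\mean_1-\mean_k<\gap_i/2$ and each round $t>\phaseTime_i$, by a union over $j\in[i:\nArms]$ of the events $\{\estCumulGainVector_{k,t-1}\leq\estCumulGainVector_{j,t-1}\}$. It then applies Lemma~\ref{l:errTwoArms} at round $t-1\geq\phaseTime_i$ and union-bounds over $k$ and $t$; your pigeonhole argument (handling ties) just makes the paper's step (a) explicit, and you use Lemma~\ref{l:errTwoArms} as a conditional bound exactly as the paper does.
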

\begin{proof}
Let us consider one arm  $k\in[\nArms]$ and a round  
$t> \phaseTime_{i}$ such that $\mean_1-\mean_k< \gap_{i}/2.$ 
We bound the probability that $\tilde{\langle k \rangle}_t\geq i$ as
\begin{align*}
\Pro\left(\tilde{\langle k \rangle}_t\geq i\right)
&\stackrel{\textbf{(a)}}{\leq}
\Pro\left( \exists j \in [i:\nArms],~ \estCumulGainVector_{k,t-1}
\leq  \estCumulGainVector_{j,t-1} \right)\\
&\leq
\sum_{j=i}^{\nArms}
\Pro\left(\estCumulGainVector_{k,t-1}\leq  \estCumulGainVector_{j,t-1} \right)\\
&\stackrel{\textbf{(b)}}{\leq}
\sum_{j=i}^{\nArms}  2\exp\left(-\frac{\timeHorizon}{128\complexityProOne}\right)\!\CommaBin  
\end{align*} 
where \textbf{(a)} is because if $\forall j
 \in [i:\nArms],~\estCumulGainVector_{k,t}>  \estCumulGainVector_{j,t},$
  then we have $\tilde{\langle k \rangle}_t< i$,
 \textbf{(b)} is using  Lemma~\ref{l:errTwoArms} with $t'=t-1$ and we have $t'=t-1>\timeHorizon_i-1\geq\timeHorizon_i$.
Using union bounds on the arms in  $k\in[\nArms]$ and the rounds $t$, 
 we get the claim of the lemma.
\end{proof}%
	\setcounter{scratchcounter}{\value{theorem}}\setcounter{theorem}{\the\numexpr\getrefnumber{th:UpBOB}-1}
\lightres*
\setcounter{theorem}{\the\numexpr\value{scratchcounter}}
\begin{proof} %[Theorem~\ref{th:UpBOB}] 
We consider two cases separately.
	%Let us consider the $a^*_i$ that maximizes the proof quantity
	%
	%
	\paragraph{The i.i.d.~stochastic case}
	We place ourselves in the i.i.d.\,stochastic setting described 
	in Section~\ref{s:FormuBOB}.
	%
	%To avoid technicalities and ease the reading, we henceforth assume that
	%$\nArms$ is a power of $2$ $K=\log(\TphaseNumber)$. 
	To ease the notation and without loss of generality, we  assume that 
	the arms are sorted by their means so that arm $1$ is the best,
	$\mean_1>\mean_2\geq\ldots\geq\mean_{\nArms}$, and $\gap_1=\gap_2\leq\ldots\leq\gap_{\nArms}$.

	Let us consider any %$1\geq a_2\geq\ldots\geq a_{\nArms-1}\geq 0$.
	%and the respective 
	rounds verifying 
	$\phaseTime_2=\timeHorizon\geq \phaseTime_3\geq\ldots\geq 
	\phaseTime_{\nArms+1} = 0 $.  Intuitively, $n_i$ is a round 
	after which, for $t\geq\phaseTime_i$, we expect \Pone{} to 
	have well ranked any arm $k$ with a 
	gap smaller than half the gap of arm $i$,  in the sense that $\tilde{\langle k \rangle}_t< i$, 
	if $ \mean_1-\mean_k\leq\gap_{i}/2.$
	For $i\in[2:\nArms+1]$, we define the following event $\eventbob_i$,
	\begin{equation}\label{eq:eventbob}
	\eventbob_i
	\triangleq
	\left\{
	\forall t> \phaseTime_{i} , 
	\forall k\in[\nArms] : \mean_1-\mean_k<\frac{\gap_{i}}{2}\implies \tilde{\langle k \rangle}_t< i
	\right\}\!\cdot
	\end{equation}
	Note that as the ranks $\tilde{\langle k \rangle}_t$ 
	are integers, $\tilde{\langle k \rangle}_t< i$ is 
	equivalent to $\tilde{\langle k \rangle}_t\leq i-1$.
	We initialize the sequence by defining $\gap_{\nArms+1}=3\gap_{\nArms}$ 
	and then we have that event $\eventbob_{\nArms+1}$ is always true.
	
	If $\eventbob_{2}$ holds, the algorithm \Pone{} makes 
	no mistake as
	$\tilde{\langle k \rangle}_{\timeHorizon+1} = 1$ and the 
	returned arm is $\recomArm_\timeHorizon=1$.
	More generally, we say for $i\in[2:\nArms]$, that 
	if $\eventbob_{i}$ does not hold, or equivalently 
	if its complement $\eventbob^c_{i}$ holds,  then 
	the algorithm makes a mistake at stage $i$. 
	We now bound the probability of an event $A$ 
	%mistake at stage $i$, $\Pro\left(\eventbob^c_i\right)$  for $i\in[2:\nArms]$
	with respect to the probability of mistake 
	at a stage $j$, $\Pro(\eventbob^c_{j})$ as
	\begin{align*}
	\Pro\left(A\right)
	&=
	\Pro\left(A \cap \eventbob^c_{j} \right) +\Pro\left(A \cap \eventbob_{j}\right)
	\leq
	\Pro\left(\eventbob^c_j \right) +\Pro\left(A \cap \eventbob_{j}\right).
	%\\
	%&~=~
	%\Pro\left(\eventbob^c_j \right) +\Pro\left(A \middle| \eventbob_{j}\right)\Pro\left(\eventbob_{j}\right)
	%~\leq~
	%\Pro\left(\eventbob^c_j \right)+\Pro\left(A \middle| \eventbob_{j}\right)
	\end{align*}
	%\begin{align*}
	%\Pro\left(\eventbob^c_i\right)
	%&~=~
	%\Pro\left(\eventbob^c_i \cap \eventbob^c_{j} \right) +\Pro\left(\eventbob^c_i \cap \eventbob_{j}\right)
	%~\leq~
	%\Pro\left(\eventbob^c_j \right) +\Pro\left(\eventbob^c_i \cap \eventbob_{j}\right)\\
	%&~=~
	%\Pro\left(\eventbob^c_j \right) +\Pro\left(\eventbob^c_i \middle| \eventbob_{j}\right)\Pro\left(\eventbob_{j}\right)
	%~\leq~
	%\Pro\left(\eventbob^c_j \right)+\Pro\left(\eventbob^c_i \middle| \eventbob_{j}\right)
	%\end{align*}
	Therefore, applying recursively the previous 
	inequality to bound the probability of error of 
	\Pone{} denoted $\ProErr_{\stoPro}(\timeHorizon)$, we write
	\begin{align*}
	&\ProErr_{\stoPro}(\timeHorizon)
	=
	\Pro\left(\eventbob^c_2\right)
	%~\leq~
	%\Pro\left(\bigcup_{i=1}^{\nArms-1}\eventbob^c_i\right)
	%~=~
	%\Pro\left(
	%\bigcup_{i=1}^{\nArms-1}
	%\left(
	%\eventbob^c_{i}\cap\left(\bigcup_{j=i+1}^{\nArms} \eventbob_{j}\right)
	%\right)\right)\\
	%~\leq~
	%\sum_{i=1}^{\nArms+1}\Pro
	%\left(
	%\eventbob^c_{i}\cap\left(\bigcup_{j=i+1}^{\nArms} \eventbob_{j}\right)
	%\right)
	\leq
	\sum_{i=2}^{\nArms}
	\Pro
	\left(
	\eventbob^c_{i}\cap\left(\bigcap_{j=i+1}^{\nArms+1} \eventbob_{j}\right)
	\right)\!
	\leq
	\sum_{i=2}^{\nArms}
	\Pro
	\left(
	\eventbob^c_{i}~\middle| \bigcap_{j=i+1}^{\nArms+1} \eventbob_{j}
	\right)\!.
	%&~=~
	%\Pro\left(\eventbob^c_{\nArms+1}\cup \left( \eventbob^c_{\nArms}\cap\eventbob^_{\nArms+1}\right)
	%\cup \left( \eventbob^c_{\nArms-1}\cap\left(\eventbob_{\nArms+1}\cup\eventbob_{\nArms+1}\right)\right)
	%\cup\ldots
	%\cup
	%\eventbob^c_{2}\cap\left(\bigcup_{j=2}^{\nArms+1} \eventbob_{j}\right)\right)
	%\right)
	%~\leq~
	%\sum_{i=2}^{\nArms}\Pro\left(\eventbob^c_i \middle| \eventbob_{i+1}\right)
	\end{align*}
	Using Lemma~\ref{l:errPOGerrPO}; we get our claimed result in the stochastic case. 
	%Let us consider the event:
	%
	%
	%	\begin{equation*}
	%	\eventu
	%	~=~
	%	\left\{
	%	\forall k\in\setArms,
	%%	\forall t\in \{\timeHorizon/a_2,\ldots,\timeHorizon/a_{\nArms-1}\},
	%\forall t\in \{\phaseTime_2,\ldots,\phaseTime_{\nArms-1}\},
	%	\left|\estCumulGainVector_{k,t}-\mu_k \phaseTime_k\right| \leq \gap_k \phaseTime_k
	%	\right\}
	%\end{equation*}
	%2\exp\left(-\frac{ \epsilon^2/2}{\timeHorizon \nArms + \frac{1}{3}\nArms\epsilon} \right)
	
	%This event holds with probability
	%
	%
	%After $t=N/a^*_K$, all the arms have been pulled $N/K/a^*_K\log(K)$
	%therefore the upper half of the arms will be well ranked.
	%with probability $\exp(-\gap^2_K  N/K/a_K\log(K))$
	%
	%in the same way
	%After $t=N/a^*_i$, all the arms have been pulled $N/i/a^*_i\log(K)$
	%therefore the upper half of the arms will be well ranked.
	%with probability $\exp(-N \frac{\gap^2_i}{\sum_{j\geq i} j (a_j-a_{j+1})\log{K} })$
	%
	%
	\paragraph{The adversarial case}
	Given the adversary gain vector $\gainVector$, 
	the random variables $\estGainVector_{k,t}$ can 
	be dependent of each other for all $k\in\setArms$ 
	and $t\in[\timeHorizon]$ as $\learnerDist_{k,t}$ 
	depends on previous observations at previous rounds. 
	Therefore, we use the Bernstein inequality for martingale  differences by~\citet{Freedman75OT}.
	
	For random variables $\estGainVector_{k,1}, \ldots, 
	\estGainVector_{k,n}$, we know that their variance is 
	the variance of the Bernoulli random variable with 
	parameter $1/\learnerDist_{k,t}$, scaled to the range $[0, \gainVector_{k,t}/\learnerDist_{k,t}]$.
	For all $k\in\setArms$ and $t\in[\timeHorizon]$, 
	having a lower bound on $\learnerDist_{k,t}\geq 1/(\nArms\bar\log\nArms),$ 
	we have 
	%$\estGainVector_{k,t}-\gainVector_{k,t} \in 
	%[-\nArms\gainVector_{k,t}, \nArms\gainVector_{k,t}]$, 
	%$\range_{\estGainVector_{k,t}-\gainVector_{k,t}}
	%\leq 2\nArms\bar\log\nArms\gainVector_{k,t}\leq  2\nArms\bar\log\nArms$ 
	$|\estGainVector_{k,t}-\gainVector_{k,t}| \leq  \nArms\bar\log\nArms$ 
	and  \[\var_{\estGainVector_{k,t}-\gainVector_{k,t}}
	=
	\var_{\estGainVector_{k,t}}=\frac{\learnerDist_{k,t}
	(1-\learnerDist_{k,t})\gainVector_{k,t}^2}{\learnerDist^2_{k,t}}\leq \nArms\bar\log \nArms.\]
	Then, following the same reasoning as in the proof 
	of Theorem~\ref{th:UPARU}, but replacing the Bernstein inequality by the Bernstein inequality for martingale differences of~\cite{Freedman75OT} applied to the martingale 
	differences $\estGainVector_{k,t}-\gainVector_{k,t}$,  
	we obtain the claimed result for the adversarial case.
\end{proof}
\section{On the complexities of $\complexityProOne$, $\complexitySR$, and 
	$\complexityBoth$}  %Proof of  Corollary~\ref{coco}}
\label{s:moreOnRem}
%
%
 %\begin{remark} 
 We now show that in general, 
	$\complexityProOne = \cO\left(\complexityBoth\log^2\nArms\right)$. 
	This demonstrates that \Pone{} achieves the best 
	that can be wished for in the two worlds, up to log
	 factors.
	The extra $\log\nArms$ from the $\log^2\nArms$ is not always
	present  and we report an 
	even more detailed discussion on the three different regimes of the
	gaps used in 
	Remark~\ref{rem:LOWbobCOMP}  at the end of the section.
	\setcounter{scratchcounter}{\value{theorem}}\setcounter{theorem}{\the\numexpr\getrefnumber{coco}-1}
	\cocores*
	\setcounter{theorem}{\the\numexpr\value{scratchcounter}}
	\begin{proof}%[Proof of  Corollary~\ref{coco}
	To simplify the exposition, we assume, without 
	loss of generality,  that 
	$	\timeHorizon\propTime_i\in\Integer,~  \forall i \in[\nArms]$.
	We set $j\triangleq\argmin (\gap_{k}/k)$.
	Let $\propTime_k \triangleq  \gap_{(1)}/\gap_{(k)},  \forall k \in\setArms$ and remember 
that $\propTime_{\nArms+1}=0$.  
	First note that the second term in $\complexityProOne(\propTimeVec)$, 
	taken for a fixed $k$, is of order  (not considering the numerical constants and the $\bar\log\nArms$) 
	\[
	\frac{
		\nArms\propTime_{\langle k\rangle}\gap_{k}}
	{\propTime^2_{\langle k\rangle}\gap^2_{k}}
	=
	\frac{
		\nArms}{\gap_{(1)}}
	\leq
	\frac{
		\nArms}{\gap_{(K)}\gap_{(1)}}
	\leq
	\frac{
		j}{\gap_{j}\gap_{(1)}}
	=
	\complexityBoth.
	\]
	Similarly, we have the first term in $\complexityProOne(\propTimeVec)$  of order
	\begin{align*}
	\frac{
		\sum_{i=\langle k\rangle}^{\nArms}
		(\propTime_{i}-\propTime_{i+1})
		i
	}{\propTime^2_{\langle k\rangle}\gap^2_{k}}
	&=
	\frac{
		\sum_{i=\langle k\rangle}^{\nArms-1}
		\left(\frac{
			\gap_{(1)}
		}{\gap_{(i)}}-\frac{
			\gap_{(1)}
		}{\gap_{(i+1)}}\right)
		i
		+ \nArms \frac{		\gap_{(1)}	}{\gap_{(\nArms)}}
	}{\gap^2_{(1)}}\\
	&=
	\frac{
		\sum_{i=\langle k\rangle}^{\nArms-1}
		\left(\frac{
			1
		}{\gap_{(i)}}-\frac{
			1
		}{\gap_{(i+1)}}\right)
		i 	+ \nArms \frac{	1}{\gap_{(\nArms)}}
	}{\gap_{(1)}}\\
	&\stackrel{\textbf{(a)}}{\leq}
	\frac{
		\sum_{i=\langle k\rangle}^{\nArms-1}
		\left(
		\frac{j}{i\gap_{(j)}}-\frac{j}{(i+1)\gap_{(j)}}\right)
		i	+ \nArms \frac{j}{\nArms\gap_{(j)}}
	}{\gap_{(1)}}\\
&	=
	\frac{j\left(
		\sum_{i=\langle k\rangle}^{\nArms}
		\left(	\frac{1}{i}-\frac{1}{i+1}\right)
		i +1\right)
	}{\gap_{(1)}\gap_{(j)}}\\
	&=
	\frac{j\left(
		\sum_{i=\langle k\rangle}^{\nArms}
		\frac{1}{i+1}+1\right)
	}{\gap_{(1)}\gap_{(j)}}\\
	&   \leq
	\complexityBoth\left(\log\nArms+1\right)\!,
	\end{align*}
	where \textbf{(a)} is because as  $j\triangleq\argmin_{k\in\setArms}
	(\gap_{(k)}/k),$ for all $i\in\setArms,$
	$1/\gap_{(i)}\leq j/(i\gap_{(j)}).$ More 
	precisely, to see \textbf{(a)}, we 
	unfold the sum and notice that actually there are no 
	negative signs anywhere therefore, the upper bound holds.
%\end{remark}%
\end{proof}

\subsection{Relation between 
	$\complexityProOne$, $\complexitySR$, and 
	$\complexityBoth$ for different regimes of the 
	gaps.}
We now study the relation between 
$\complexityProOne$, $\complexitySR$, and 
$\complexityBoth$ for different regimes of the 
gaps.
We use the same examples as the ones used in 
Remark~\ref{rem:LOWbobCOMP}.
We assume without 
loss of generality that 
$	\timeHorizon\propTime_i\in\Integer,~  \forall i \in[2:\nArms]$.
In these three regimes of interest we prove that at 
worst, $\complexityProOne=\cO(\complexityBoth\log^2\nArms)$. 
In the flat regime and the square-root gap regime, we have $\complexityProOne=\cO(\complexityBoth\log\nArms)$. 
%\end{remark}
%  
\paragraph{\raisebox{.04cm}{\textcolor{bull}{$\blacktriangleright$}}~Flat regime}  All the gaps are equal, that is,
$k\in\setArmsmo,$ we have that $\gap_k=\gap_{1}$. 

\smallskip
\noindent
We choose $\propTime_i = 1,~ \forall i \in[2:\nArms]$.
First note that in $\complexityProOne(\propTimeVec)$ the term
\[
\frac{
	\nArms\propTime_{\langle k \rangle }\gap_{k}}
{\propTime^2_{\langle k \rangle}\gap^2_{k}}
=
\frac{
	\nArms}{\gap_{(1)}}
\leq
\frac{
	\nArms}{\gap^2_{(1)}}
=
\complexitySR
=
\complexityBoth.
\]
Then we have the first term 
\[
\frac{
	\sum_{i=\langle k \rangle}^{\nArms}
	(\propTime_{i}-\propTime_{i+1})
	i
}{\propTime^2_{\langle k \rangle}\gap^2_{k}}
\leq
\frac{
	\nArms
}{\min_{k\in\setArms}\gap^2_{k}}
=
\complexitySR
=
\complexityBoth.
\]
\paragraph{\raisebox{.04cm}{\textcolor{bull}{$\blacktriangleright$}}~Super-linear gaps}  Since $(2)\in\argmin_k(\gap_k/k)$, we get that $(2)\in\argmin_k (\gap^2_k/k)$.

\smallskip
\noindent Let $\propTime_i \triangleq 1/i,~ \forall i \in[2:\nArms]$. The two terms verify

%First note that in $\complexityProOne$ the term 	
%$\frac{
%	\frac{2}{3}\nArms\frac{\propTime_{k}\gap_{k}}{4}}
%{\propTime^2_{k}\gap^2_{k}/64}\bar\log\nArms$ verifies
\[
\frac{
	\nArms\propTime_{\langle k \rangle}\gap_{k}}
{\propTime^2_{\langle k \rangle}\gap^2_{k}}
=
\frac{
	\nArms}{\propTime_{\langle k \rangle}\gap_{k}}
=
\frac{
	\nArms k}{\gap_{k}}
=
\frac{
	2\nArms }{\gap_{(1)}}
\leq
\frac{
	2\nArms }{\gap_{(K)}\gap_{(1)}}
=
\frac{
	2}{\gap^2_{(1)}}
=
\complexitySR
=
\complexityBoth \quad \text{and}
\]
\begin{align*}
\frac{
	\sum_{i=\langle k\rangle}^{\nArms}
	(\propTime_{i}-\propTime_{i+1})
	i
}{\propTime^2_{\langle k\rangle}\gap^2_{k}}
&	=
\frac{
	\sum_{i=\langle k\rangle}^{\nArms}
	\left(\frac{1}{i}-\frac{1}{i+1}\right)
	i
}{\propTime^2_{\langle k \rangle}\gap^2_{k}}\\
&\leq
\frac{
	4	\sum_{i=1}^{\nArms}
	\left(\frac{1}{i}-\frac{1}{i+1}\right)
	i
}{\gap^2_{1}}\\
&	=
\frac{
	4	\sum_{i=1}^{\nArms}
	\left(\frac{1}{i}(i+1)\right)
	i
}{\gap^2_{1}}\\
&\leq
\frac{
	4	\log\nArms
}{\gap^2_{1}}\\
&=
2	\complexitySR	\log\nArms\\
&=
2	\complexityBoth	\log\nArms.
\end{align*}
\paragraph{\raisebox{.04cm}{\textcolor{bull}{$\blacktriangleright$}}~Square-root gaps} We have that $(2)\in\argmin_k (\gap^2_k/k),$
$\sqrt{\nArms/2}\gap_{(1)}=\gap_{(\nArms)}$, and also that
$\sqrt{k/2}\gap_{(1)}\geq\gap_{(k)}$ for $k\in[3:\nArms-1]$. 
Therefore, we have $K\in\argmin_{k\in\setArms} (\gap^2_{(k)}/k)$ and $K\in\argmin_{k\in\setArms} (\gap_{(k)}/k).$

\smallskip
\noindent 
Let $\propTime_i = 1/\sqrt{i},~ \forall i \in[2:\nArms]$.  The two terms verify
\[
\frac{
	\nArms\propTime_{\langle k\rangle}\gap_{k}}
{\propTime^2_{\langle k \rangle}\gap^2_{k}}
=
\frac{
	\nArms}{\propTime_{\langle k\rangle}\gap_{k}}
=
\frac{
	\nArms \sqrt{k}}{\gap_{k}}
=
\frac{
	\nArms \sqrt{2}}{\gap_{(1)}}
\leq
\frac{
	\nArms \sqrt{2}}{\gap_{(K)}\gap_{(1)}}
=
\sqrt{2}	\complexitySR\sqrt{\nArms}
=
\sqrt{2}	\complexityBoth \quad \text{and}
\] 
\begin{align*}		
\frac{
	\sum_{i=\langle k\rangle}^{\nArms}
	(\propTime_{i}-\propTime_{i+1})
	i
}{\propTime^2_{\langle k\rangle}\gap^2_{k}}
&	=
\frac{
	\sum_{i=\langle k\rangle}^{\nArms}
	\left(\frac{1}{\sqrt{i}}-\frac{1}{\sqrt{i+1}}\right)
	i
}{\propTime^2_{\langle k\rangle}\gap^2_{k}}\\
&\leq
\frac{
	2	\sum_{i=1}^{\nArms}
	\frac{\sqrt{i}}{\sqrt{i+1}}	\left(\sqrt{i+1}-\sqrt{i}\right)
}{\gap^2_{(1)}}\\
&	\leq
\frac{2
	\sum_{i=1}^{\nArms}
	(\sqrt{i+1}-\sqrt{i})	
}{\gap^2_{(1)}}\\
&\leq
\frac{
	2	\sqrt{\nArms+1}	
}{\gap^2_{(1)}}\\
&		=
\frac{
	2	\sqrt{\nArms+1}	
}{\gap^2_{(1)}}\\
&=
\complexitySR		\sqrt{\nArms+1}	
\leq
2\complexityBoth.
\end{align*}

%% file: Main.bbl
\begin{thebibliography}{28}
\providecommand{\natexlab}[1]{#1}
\providecommand{\url}[1]{\texttt{#1}}
\expandafter\ifx\csname urlstyle\endcsname\relax
  \providecommand{\doi}[1]{doi: #1}\else
  \providecommand{\doi}{doi: \begingroup \urlstyle{rm}\Url}\fi

\bibitem[Allesiardo and F{\'e}raud(2017)]{Allesiardo2017SL}
Robin Allesiardo and Rapha{\"e}l F{\'e}raud.
\newblock \href{https://ieeexplore.ieee.org/document/7965962/}{Selection of
  learning experts}.
\newblock In \emph{International Joint Conference on Neural Networks}, 2017.

\bibitem[Allesiardo et~al.(2017)Allesiardo, F{\'e}raud, and
  Maillard]{Allesiardo17NS}
Robin Allesiardo, Rapha{\"e}l F{\'e}raud, and Odalric-Ambrym Maillard.
\newblock \href{https://doi.org/10.1007/s41060-017-0050-5}{The non-stationary
  stochastic multi-armed bandit problem}.
\newblock \emph{International Journal of Data Science and Analytics}, 2017.

\bibitem[Altschuler et~al.(2018)Altschuler, Brunel, and Malek]{Altschuler18BA}
Jason Altschuler, Victor-Emmanuel Brunel, and Alan Malek.
\newblock \href{https://arxiv.org/pdf/1802.09514.pdf}{Best-arm identification
  for contaminated bandits}.
\newblock \emph{arXiv preprint {\normalfont\texttt{arXiv:1802.09514}}}, 2018.

\bibitem[Audibert et~al.(2010)Audibert, Bubeck, and Munos]{Audibert10BA}
Jean-Yves Audibert, S{\'e}bastien Bubeck, and R{\'e}mi Munos.
\newblock
  \href{http://www.learningtheory.org/colt2010/papers/59Audibert.pdf}{Best-arm
  identification in multi-armed bandits}.
\newblock In \emph{Conference on Learning Theory (COLT)}, 2010.

\bibitem[Auer and Chiang(2016)]{Auer16AA}
Peter Auer and Chao-Kai Chiang.
\newblock \href{https://arxiv.org/pdf/1605.08722.pdf}{An algorithm with nearly
  optimal pseudo-regret for both stochastic and adversarial bandits}.
\newblock In \emph{Conference on Learning Theory (COLT) and arXiv preprint
  {\normalfont \texttt{arXiv:1605.08722}}}, 2016.

\bibitem[Auer et~al.(2002)Auer, Cesa-Bianchi, Freund, and Schapire]{Auer02NM}
Peter Auer, Nicol{\`o} Cesa-Bianchi, Yoav Freund, and Robert~E. Schapire.
\newblock \href{http://rob.schapire.net/papers/AuerCeFrSc01.pdf}{The
  nonstochastic multi-armed bandit problem}.
\newblock \emph{SIAM Journal on Computing}, 32\penalty0 (1), 2002.

\bibitem[Bubeck et~al.(2013)Bubeck, Wang, and Viswanathan]{Bubeck12MI}
S{\'e}bastian Bubeck, Tengyao Wang, and Nitin Viswanathan.
\newblock \href{http://proceedings.mlr.press/v28/bubeck13.pdf}{Multiple
  identifications in multi-armed bandits}.
\newblock In \emph{International Conference on Machine Learning (ICML)}, 2013.

\bibitem[Bubeck and Cesa-Bianchi(2012)]{Bubeck12RA}
S{\'e}bastien Bubeck and Nicol{\`o} Cesa-Bianchi.
\newblock \href{https://arxiv.org/pdf/1204.5721.pdf}{Regret analysis of
  stochastic and nonstochastic multi-armed bandit problems}.
\newblock \emph{Foundations and Trends in Machine Learning}, 5\penalty0 (1),
  2012.

\bibitem[Bubeck and Slivkins(2012)]{Bubeck12BB}
S{\'e}bastien Bubeck and Aleksandrs Slivkins.
\newblock \href{http://proceedings.mlr.press/v23/bubeck12b/bubeck12b.pdf}{The
  best of both worlds: stochastic and adversarial bandits}.
\newblock In \emph{Conference on Learning Theory (COLT)}, 2012.

\bibitem[Bubeck et~al.(2009)Bubeck, Munos, and Stoltz]{Bubeck09PE}
S{\'e}bastien Bubeck, R{\'e}mi Munos, and Gilles Stoltz.
\newblock \href{https://doi.org/10.1007/978-3-642-04414-4_7}{Pure exploration
  in multi-armed bandit problems}.
\newblock In \emph{Algorithmic Learning Theory (ALT)}, 2009.

\bibitem[Carpentier and Locatelli(2016)]{Carpentier16TB}
Alexandra Carpentier and Andrea Locatelli.
\newblock \href{http://proceedings.mlr.press/v49/carpentier16.pdf}{Tight
  (lower) bounds for the fixed budget best-arm identification bandit problem}.
\newblock In \emph{Conference on Learning Theory (COLT)}, 2016.

\bibitem[Carpentier and Valko(2014)]{carpentier2014extreme}
Alexandra Carpentier and Michal Valko.
\newblock \href{https://papers.nips.cc/paper/5379-extreme-bandits.pdf}{Extreme
  bandits}.
\newblock In \emph{Neural Information Processing Systems (NeurIPS)}, 2014.

\bibitem[Even-Dar et~al.(2006)Even-Dar, Mannor, and Mansour]{Even-Dar06AE}
Eyal Even-Dar, Shie Mannor, and Yishay Mansour.
\newblock
  \href{http://jmlr.csail.mit.edu/papers/volume7/evendar06a/evendar06a.pdf}{Action
  elimination and stopping conditions for the multi-armed Bandit and
  reinforcement-learning problems}.
\newblock \emph{Journal of Machine Learning Research}, 7:\penalty0 1079--1105,
  2006.

\bibitem[Freedman(1975)]{Freedman75OT}
David~A. Freedman.
\newblock \href{https://projecteuclid.org/euclid.aop/1176996452}{On tail
  probabilities for martingales}.
\newblock \emph{The Annals of Probability}, pages 100--118, 1975.

\bibitem[Garivier and Kaufmann(2016)]{Garivier16OB}
Aur{\'e}lien Garivier and Emilie Kaufmann.
\newblock \href{http://proceedings.mlr.press/v49/garivier16a.pdf}{Optimal
  best-arm identification with fixed confidence}.
\newblock In \emph{Conference on Learning Theory (COLT)}, 2016.

\bibitem[Jamieson and Talwalkar(2016)]{Jamieson16NS}
Kevin Jamieson and Ameet Talwalkar.
\newblock \href{http://proceedings.mlr.press/v51/jamieson16.pdf}{Non-stochastic
  best-arm identification and hyperparameter optimization}.
\newblock In \emph{International Conference on Artificial Intelligence and
  Statistics (AISTATS)}, 2016.

\bibitem[Kaas and Buhrman(1980)]{Kaas80MM}
Rob Kaas and Jan~M. Buhrman.
\newblock \href{https://doi.org/10.1111/j.1467-9574.1980.tb00681.x}{Mean,
  median and mode in binomial distributions}.
\newblock \emph{Statistica Neerlandica}, 34\penalty0 (1):\penalty0 13--18,
  1980.

\bibitem[Kalyanakrishnan et~al.(2012)Kalyanakrishnan, Tewari, Auer, and
  Stone]{Kalyanakrishnan12PA}
Shivaram Kalyanakrishnan, Ambuj Tewari, Peter Auer, and Peter Stone.
\newblock
  \href{https://www.cse.iitb.ac.in/~shivaram/papers/ktas_icml_2012.pdf}{PAC
  subset selection in stochastic multi-armed bandits}.
\newblock In \emph{International Conference on Machine Learning (ICML)}, 2012.

\bibitem[Karnin et~al.(2013)Karnin, Koren, and Somekh]{Karnin13AO}
Zohar Karnin, Tomer Koren, and Oren Somekh.
\newblock \href{http://proceedings.mlr.press/v28/karnin13.pdf}{Almost optimal
  exploration in multi-armed bandits}.
\newblock In \emph{International Conference on Machine Learning (ICML)}, 2013.

\bibitem[Kaufmann and Kalyanakrishnan(2013)]{Kaufmann13IC}
Emilie Kaufmann and Shivaram Kalyanakrishnan.
\newblock \href{http://proceedings.mlr.press/v30/Kaufmann13.pdf}{Information
  complexity in bandit subset selection}.
\newblock In \emph{Conference on Learning Theory (COLT)}, 2013.

\bibitem[Li et~al.(2016)Li, Jamieson, DeSalvo, Rostamizadeh, and
  Talwalkar]{Li16HA}
Lisha Li, Kevin Jamieson, Giulia DeSalvo, Afshin Rostamizadeh, and Ameet
  Talwalkar.
\newblock \href{https://arxiv.org/pdf/1603.06560.pdf}{{\tt Hyperband}: A novel
  bandit-based approach to hyperparameter optimization}.
\newblock \emph{arXiv preprint {\normalfont\texttt{arXiv:1603.06560}}}, 2016.

\bibitem[Locatelli et~al.(2016)Locatelli, Gutzeit, and
  Carpentier]{Locatelli16AO}
Andrea Locatelli, Maurilio Gutzeit, and Alexandra Carpentier.
\newblock \href{http://proceedings.mlr.press/v48/locatelli16-supp.pdf}{An
  optimal algorithm for the thresholding bandit problem}.
\newblock In \emph{International Conference on Machine Learning (ICML)}, 2016.

\bibitem[Mannor and Tsitsiklis(2004)]{Mannor04SC}
Shie Mannor and John~N. Tsitsiklis.
\newblock \href{http://www.jmlr.org/papers/volume5/mannor04b/mannor04b.pdf}{The
  sample complexity of exploration in the multi-armed bandit problem}.
\newblock \emph{Journal of Machine Learning Research}, 5\penalty0 (Jun), 2004.

\bibitem[Maron and Moore(1993)]{Maron93HR}
Oded Maron and Andrew Moore.
\newblock
  \href{https://papers.nips.cc/paper/841-hoeffding-races-accelerating-model-selection-search-for-classification-and-function-approximation.pdf}{
  Hoeffding Races: Accelerating model-selection search for classification and
  function approximation}.
\newblock In \emph{Neural Information Processing Systems (NeurIPS)}, 1993.

\bibitem[Mnih et~al.(2008)Mnih, Szepesv\'{a}ri, and Audibert]{Mnih08EB}
Volodymyr Mnih, Csaba Szepesv\'{a}ri, and Jean-Yves Audibert.
\newblock \href{http://icml2008.cs.helsinki.fi/papers/523.pdf}{Empirical
  {B}ernstein stopping}.
\newblock In \emph{International Conference on Machine Learning (ICML)}, 2008.

\bibitem[Powers(1998)]{Powers98AE}
David Powers.
\newblock \href{http://aclweb.org/anthology/W98-1218}{Applications and
  explanations of Zipf's law}.
\newblock In \emph{New methods in language processing and computational natural
  language learning}. Association for Computational Linguistics, 1998.

\bibitem[Seldin and Lugosi(2017)]{Seldin17IP}
Yevgeny Seldin and G{\'{a}}bor Lugosi.
\newblock \href{http://proceedings.mlr.press/v65/seldin17a/seldin17a.pdf}{An
  improved parametrization and analysis of the \texttt{{EXP3++}} algorithm for
  stochastic and adversarial bandits}.
\newblock In \emph{Conference on Learning Theory (COLT)}, 2017.

\bibitem[Seldin and Slivkins(2014)]{Seldin14OP}
Yevgeny Seldin and Aleksandrs Slivkins.
\newblock \href{http://proceedings.mlr.press/v32/seldinb14.pdf}{One practical
  algorithm for both stochastic and adversarial bandits}.
\newblock In \emph{International Conference on Machine Learning (ICML)}, 2014.

\end{thebibliography}
